\crefname{equation}{Eqn.}{Eqn.}
\crefname{assumption}{Assumption}{Assumptions}
\algrenewcommand\algorithmicindent{1em}%
\newtheorem{theorem}{Theorem}
\newtheorem{proposition}[theorem]{Proposition}
\newtheorem{corollary}[theorem]{Corollary}
\theoremstyle{definition}
\newtheorem{definition}{Definition}
\newtheoremstyle{exampstyle}
  {3pt} 
  {0pt} 
  {} 
  {} 
  {\bfseries} 
  {.} 
  {.5em} 
  {} 
\theoremstyle{exampstyle}
\newtheorem{assumption}{Assumption}
\theoremstyle{remark}
\newtheorem*{remark}{Remark}
\newcommand*{\mask}[2]{%
    \mathord{\makebox[\widthof{\(#1\)}]{\(#2\)}}%
}
\newcommand{\hlc}[2][yellow]{{%
    \colorlet{foo}{#1}%
    \sethlcolor{foo}\hl{#2}}%
}
\def\eqref#1{equation~\ref{#1}}
\def\1{\bm{1}}
\DeclareMathAlphabet{\mathsfit}{\encodingdefault}{\sfdefault}{m}{sl}
\SetMathAlphabet{\mathsfit}{bold}{\encodingdefault}{\sfdefault}{bx}{n}
\def\gU{{\mathcal{U}}}
\newcommand{\EE}{\mathbb{E}}
\DeclareMathOperator*{\E}{\mathbb{E}}
\newcommand{\VV}{\mathbb{V}}
\DeclareMathOperator*{\V}{\mathbb{V}}
\newcommand{\Cov}{\mathrm{Cov}}
\newcommand{\Bias}{\mathrm{Bias}}
\newcommand{\Dcal}{\mathcal{D}}
\newcommand{\Scal}{\mathcal{S}}
\newcommand{\Acal}{\mathcal{A}}
\newcommand{\Mcal}{\mathcal{M}}
\newcommand{\wvec}{\boldsymbol{w}}
\newcommand{\gvec}{\boldsymbol{g}}
\mathchardef\mhyphen="2D
\newcommand{\PDIS}{\textup{PDIS}}
\newcommand{\smallplus}{\scaleto{+}{3pt}}
\DeclareMathSymbol{\shortminus}{\mathbin}{AMSa}{"39}
\title{Counterfactual-Augmented Importance Sampling \\for Semi-Offline Policy Evaluation}
\author{%
  Shengpu Tang, ~Jenna Wiens \\
  Computer Science \& Engineering\\
  University of Michigan, Ann Arbor, MI, USA \\
  \href{mailto:tangsp@umich.edu,wiensj@umich.edu}{\color{black}\{tangsp,wiensj\}@umich.edu}\\[0.8ex]
  \small Reviewed on OpenReview: \href{https://openreview.net/forum?id=dsH244r9fA}{\color{black} https://openreview.net/forum?id=dsH244r9fA}
}
\begin{document}

\maketitle

\begin{abstract}
In applying reinforcement learning (RL) to high-stakes domains, quantitative and qualitative evaluation using observational data can help practitioners understand the generalization performance of new policies. However, this type of off-policy evaluation (OPE) is inherently limited since offline data may not reflect the distribution shifts resulting from the application of new policies. On the other hand, online evaluation by collecting rollouts according to the new policy is often infeasible, as deploying new policies in these domains can be unsafe. In this work, we propose a semi-offline evaluation framework as an intermediate step between offline and online evaluation, where human users provide annotations of unobserved counterfactual trajectories. While tempting to simply augment existing data with such annotations, we show that this naive approach can lead to biased results. Instead, we design a new family of OPE estimators based on importance sampling (IS) and a novel weighting scheme that incorporate counterfactual annotations without introducing additional bias. We analyze the theoretical properties of our approach, showing its potential to reduce both bias and variance compared to standard IS estimators. Our analyses reveal important practical considerations for handling biased, noisy, or missing annotations. In a series of proof-of-concept experiments involving bandits and a healthcare-inspired simulator, we demonstrate that our approach outperforms purely offline IS estimators and is robust to imperfect annotations. Our framework, combined with principled human-centered design of annotation solicitation, can enable the application of RL in high-stakes domains. 
\end{abstract}

\begin{figure}[h]
    \centering
    \includegraphics[width=0.8\textwidth,trim=0 10 0 10]{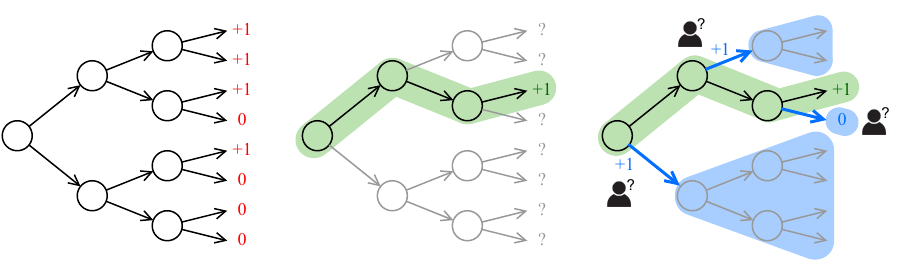}
    \caption{Left - The state transition diagram of a tree MDP with 3 steps and 2 actions. States are denoted by $\bigcirc$, actions are denoted by arrows $\{\nearrow, \searrow\}$, rewards are denoted in {\color{red} red} and given only at terminal transitions. Center - The behavior policy takes a specific sequence of actions and leads to the {\definecolor{xxx}{HTML}{BDE2B2}\sethlcolor{xxx}\hl{factual trajectory}}, leaving the rest of the state-action space with {\color{gray}poor support}. Right - The {\definecolor{yyy}{HTML}{016FFF}\color{yyy}counterfactual annotations} provided by human annotators  (indicated by \hspace{5pt}\raisebox{2pt}{\makebox(2,2){\includegraphics[scale=0.55]{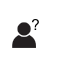}}}\hspace{2pt}) capture information (in this example, the terminal reward under any policy that takes $(\nearrow,\nearrow)$ for the second and third steps) about {\definecolor{zzz}{HTML}{A8CEFF}\sethlcolor{zzz}\hl{support-deficient regions}} of the state-action space not visited by the behavior policy. }
    \label{fig:treeMDP}
\end{figure}

\section{Introduction}
\vspace{-0.75em}
Reinforcement learning (RL) has gained popularity in recent years for its ability to solve sequential decision-making problems in various domains \citep{mnih2013playing,mnih2015human,silver2016mastering,akkaya2019solving,lazic2018data,wu2021recursively,ouyang2022training}. Despite these successes, it remains challenging to deploy and use RL in highly consequential or safety-critical domains, such as healthcare, education, and public policy \citep{yu2021reinforcement,lakkaraju2018human,doroudi2019s,tang2020clinician,tang2022leveraging}. One of the major roadblocks that distinguishes RL-based systems from their supervised learning counterparts is evaluation. 

Evaluation of supervised learning models often involves calculating prediction accuracy against a labeled test set \citep{bishop2006pattern}. In contrast, evaluation of RL policies is less straightforward and often involves interacting with the environment \citep{mnih2015human,silver2016mastering,li2010recommend,kalashnikov2018robot,adams2004dynamic,man2014uva,simglucose,fox2020glucose}. For domains that lack accurate simulators, this means deploying new policies in the real environment. For instance, in healthcare, online evaluation would require clinicians to follow RL recommendations in selecting treatments for real patients. While mathematically sound, this presents clear safety issues and potential disruptions to workflows. Therefore, most work in these areas has relied exclusively on retrospective evaluations using observational data \citep{gottesman2018evaluating,gottesman2019guidelines,tang2021model}, focusing on both quantitative and qualitative aspects. Quantitative evaluations make use of statistical off-policy evaluation (OPE) methods to account for the distribution shift resulting from the application of new policies \citep{levine2020offlineRL,voloshin2021empirical,parbhoo2022generalizing}. Despite their wide use, OPE is fundamentally limited by the available offline data. In particular, past work has noted that unexpected bias and large variance \citep{gottesman2018evaluating} among other reasons make these approaches unreliable \citep{tang2021model,paine2020hyperparameter}. On the other hand, qualitative evaluations typically aim to verify with domain experts whether the RL recommendations are reasonable, but are difficult to standardize and may be susceptible to confirmation bias \citep{gottesman2018evaluating}. 

In this work, we consider an intermediate step before prospective deployment that improves upon offline evaluation of RL policies. Specifically, we assume human domain experts can provide annotations of unobserved counterfactual trajectories that are small deviations of the observed trajectory  (\cref{fig:treeMDP}), where each annotation is some summary of the expected outcomes of counterfactual trajectories. For example, in healthcare domains, such annotations may be obtained by asking clinicians what they think would happen to the patient if a different treatment were to be used. Intuitively, these counterfactual annotations can make up for regions of the state-action space with poor support in the offline dataset. However, as we demonstrate, simply adding the annotations as new trajectories to the offline dataset will change the state distribution and lead to biased results. Thus, we design a new OPE estimator based on importance sampling (IS) that incorporates both the offline factual data and counterfactual annotations without introducing additional bias. We analyze the theoretical properties of our proposed estimator, noting its advantages over standard IS. Specifically, our estimator requires a weaker condition on support to achieve unbiasedness and has the potential to reduce variance. Through a series of proof-of-concept experiments using toy problems and a healthcare-inspired simulator, we show the benefits of our approach in making use of counterfactual annotations to enable better evaluations of RL policies, even when annotations are biased, noisy, or missing. Our semi-offline evaluation framework represents an important step that complements offline evaluations by providing additional confidence in RL policies.

\vspace{-0.5em}
\section{Problem Setup}
\vspace{-0.75em}
We consider Markov decision processes (MDPs) defined by a tuple $\mathcal{M} = (\mathcal{S}, \mathcal{A}, P, R, d_1, \gamma, T)$, where $\mathcal{S}$ and $\mathcal{A}$ are the state and action spaces, $P: \mathcal{S} \times \mathcal{A} \to \Delta(\mathcal{S})$ and $R: \mathcal{S} \times \mathcal{A} \to \Delta(\mathbb{R})$ are the transition and reward functions, $d_1 \in \Delta(\mathcal{S})$ is the initial state distribution, $\gamma \in [0,1]$ is the discount factor, $T \in \mathbb{Z}^{+}$ is the fixed horizon. $p(s'|s,a)$ denotes the probability density function of $P$, and $\bar{R}(s,a)$ denotes the expected reward. A policy $\pi: \Scal \to \Delta(\Acal)$ specifies a mapping from each state to a probability distribution over actions. A $T$-step trajectory following policy $\pi$ is denoted by $\tau = \smash{[(s_t, a_t, r_t)]}_{t=1}^{T}$ where $s_1 \sim d_1, a_t \sim \pi(s_t), r_t \sim R(s_t,a_t), s_{t+1} \sim p(s_t, a_t)$. Here, $a \sim \pi(s)$ is short for $a \sim \pi(\cdot | s)$ and $s' \sim p(s, a)$ for $s' \sim p(\cdot|s, a)$. Let $J = \sum_{t=1}^{T} \gamma^{t-1} r_{t}$ denote the return of the trajectory, which is the discounted sum of rewards. The value of a policy $\pi$ is the expected return, defined as $v(\pi) = \mathbb{E}_{\pi}[J]$. The value function of policy $\pi$, denoted by $V^{\pi}: \Scal \to \mathbb{R}$, maps each state to the expected return starting from that state following policy $\pi$. Similarly, the action-value function (i.e., the Q-function), $Q^{\pi}: \Scal \times \Acal \to \mathbb{R}$, is defined by further restricting the action taken from the starting state. Formally, ${V^{\pi}(s) = \mathbb{E}_{\pi}[J | s_1 = s]}$, and ${Q^{\pi}(s,a) = \mathbb{E}_{\pi}[J | s_1 = s, a_1 = a]}$. We also consider value functions at specific horizons: $V_{t:T}^{\pi}(s) = \E_{\pi}[\sum_{t'=t}^{T}\gamma^{t'-1}r_{t'} | s_t=s]$, and $Q_{t:T}^{\pi}(s,a) = \E_{\pi}[\sum_{t'=t}^{T}\gamma^{t'-1}r_{t'} | s_t=s, a_t=a]$. Throughout the paper we also consider the non-sequential, bandit setting with horizon $T=1$. In this case, a ``trajectory'' (or, a sample) is denoted by $\tau = (s,a,r)$ where we omit the time step subscript. 

Our goal is to estimate $v(\pi_e)$, the value of an evaluation policy $\pi_e$, given data that were previously collected by some behavior policy $\pi_b$ in the same environment defined by $\mathcal{M}$. Let $\mathcal{D}=\{\tau^{(i)}\}_{i=1}^{N}$ denote the dataset containing $N$ independent trajectories drawn according to $\pi_b$ and $\mathcal{M}$. 

\textbf{OPE.} The typical approach to this problem relies on off-policy evaluation (OPE). Importance sampling (IS) is a common OPE approach that reweights samples based on how likely they are to occur under $\pi_e$ relative to $\pi_b$. Given a trajectory $\tau$, the 1-step and cumulative IS ratios are defined as $\rho_t = \smash{\frac{\pi_e(a_t|s_t)}{\pi_b(a_t|s_t)}}$ and $\rho_{1:t} = \prod_{t'=1}^{t} \rho_{t'}$. The per-decision IS estimator, $\hat{v}^{\textup{PDIS}} = \smash{\sum_{t=1}^{T} \rho_{1:t} \gamma^{t-1} r_t}$, is an unbiased estimator of $v(\pi_e)$ \citep{dudik2014doubly,precup2000eligibility}. We also consider its recursive definition: $\hat{v}^{\PDIS} = v_{T}$ where $v_0 = 0$, $v_{T-t+1} = \rho_t (r_t + \gamma v_{T-t})$. In this paper, we discuss the properties of IS-based estimators over a single trajectory; our results naturally generalize to dataset $\Dcal$ containing $N$ trajectories where the final estimator is the average over trajectories. For the bandit setting, we refer to PDIS simply as the IS estimator, $\hat{v}^{\textup{IS}} = \rho r = \smash{\frac{\pi_e(a|s)}{\pi_b(a|s)} r}$. 

\textbf{Counterfactual Annotations.} In addition to the offline dataset $\Dcal$, our semi-offline framework assumes access to accompanying counterfactual annotations. To introduce the notation, we start with the non-sequential, bandit setting where $T=1$, dropping the time step subscripts. Given a factual sample $\tau = (s,a,r)$, let $c^{\tilde{a}} \in \{0,1\}$ be a binary indicator for whether the counterfactual action $\tilde{a} \in \Acal \setminus \{a\}$ is associated with an annotation, and let the annotation be $g^{\tilde{a}} \in \mathbb{R}$. We use $G: \mathcal{S} \times \mathcal{A} \to \Delta(\mathbb{R})$ to denote the annotation function such that $g^{\tilde{a}} \sim G(s,\tilde{a})$. A counterfactual-augmented sample $\tau^{\smallplus} = (\tau, \gvec)$ consists of the factual sample $\tau$ and counterfactual annotations $\gvec = \{g^{\tilde{a}}: c^{\tilde{a}}=1 \}$, where each $g^{\tilde{a}} \sim G(s,\tilde{a})$. Intuitively, a ``good'' annotation should reflect the scenario where the counterfactual action $\tilde{a}$ is taken and the reward $g^{\tilde{a}} \sim R(s,\tilde{a})$ is observed.

\begin{assumption}[Perfect annotation, bandit] \label{asm:perfect-annot}
$\mathbb{E}_{g \sim G(s,a)}[g] = \bar{R}(s,a), \forall s \in \Scal, a \in \Acal$. 
\end{assumption}

\begin{wrapfigure}{r}{0.45\textwidth}
\vspace{-1.75\intextsep}
    \centering
    \includegraphics[scale=0.85,trim={5 10 5 0}]{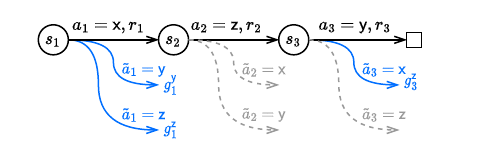}
    \caption{A trajectory augmented with counterfactual annotations, where the action space is $\Acal = \{\mathsf{x}, \mathsf{y}, \mathsf{z}\}$. The factual trajectory $\tau$ is shown in black. {\definecolor{yyy}{HTML}{016FFF}\color{yyy}Solid blue arrows} indicate the counterfactual annotations were queried and obtained; {\color{gray}dashed gray arrows} indicate the annotations are not available. Each transition arrow is labeled with (\textsf{action,\,value}), where \textsf{value} is either an observed immediate reward or a counterfactual annotation. \vspace{-\intextsep}}
    \label{fig:counterfactual_annotation}
\end{wrapfigure}

For the sequential setting, we define the corresponding notation with time step subscripts: for $(s_t, a_t, r_t)$ occurring at step $t$ of trajectory $\tau$, we define counterfactual indicators $c_t^{\tilde{a}}$ for $\tilde{a} \in \Acal \setminus \{a_t\}$ and annotations $\gvec_t = \{g_t^{\tilde{a}}: c_t^{\tilde{a}}=1 \}$. \cref{fig:counterfactual_annotation} provides an example trajectory with counterfactual annotations. Here, each $g_t^{\tilde{a}} \sim G_{t}(s_t,\tilde{a})$ is drawn from the horizon-$t$ annotation function $G_{t}$. While the general notion of counterfactual annotations could be used to capture different information (e.g., the instantaneous reward of the counterfactual action, $R(s, \tilde{a})$), in this work, we study a specific version that allows us to extend the theory of the bandit setting. Specifically, the annotation for counterfactual action $\tilde{a}$ summarizes the annotator's belief of the expected future return (sum of rewards) in the remaining $T-t+1$ steps after taking the counterfactual action $\tilde{a}$ from state $s_t$, and then following the evaluation policy $\pi_e$. In other words, the annotation plays the same role as the Q-function. This leads to a more refined assumption on the horizon-specific annotation function $G_t$.

\begin{assumption}[Perfect annotation, MDP] \label{asm:perfect-annot-rl}
$\mathbb{E}_{g \sim G_{t}(s,a)}[g] = Q_{t:T}^{\pi_e}(s,a), \forall s \in \Scal, a \in \Acal$. 
\end{assumption}

Under \cref{asm:perfect-annot-rl}, if we obtained \textit{infinitely many} annotations for \textit{all} initial states and \textit{all} actions, then evaluation becomes trivial (we essentially recover the Q-function of all initial states). However, we consider the non-asymptotic regime where not every annotation is available, as certain annotations might be difficult to obtain. For example, annotating initial states requires reasoning about the full horizon $T$. Furthermore, since this is a rather strong assumption (we need different annotations for each $\pi_e$), later we explore a relaxation where the annotations reflect the behavior policy $\pi_b$ instead.

\vspace{-0.5em}
\section{Methods} \label{sec:method}
\vspace{-0.75em}

To motivate our approach, we begin with a didactic bandit example to illustrate how the naive incorporation of counterfactual annotations can yield biased estimates. In order to address this issue, we propose a modification of IS estimators that reweights the factual data and counterfactual annotations. We formally describe how this idea applies to IS (in the bandit setting) and PDIS (in the sequential RL setting), giving rise to a family of semi-offline counterfactual-augmented IS estimators. We study the impact of different assumptions regarding the annotations on the performance of our proposed estimators both theoretically (\cref{sec:theory}) and empirically (\cref{sec:experiment}).

\vspace{-0.75em}
\subsection{Intuition} \label{sec:intuition}
\vspace{-0.5em}

Consider a one-step bandit (\cref{fig:dataset-example}a) with two states $\{s_1,s_2\}$ (drawn with equal probability) and two actions, up ($\nearrow$) and down ($\searrow$). The reward from $s_1$ is $+1$ and from $s_2$ is $0$ (i.e., rewards do not depend on the action), meaning all policies have an expected reward of $0.5$. Suppose the behavior policy always selects $\nearrow$, generating a dataset with poor support for policies that assign nonzero probabilities to $\searrow$ (\cref{fig:dataset-example}b). Now suppose we also have access to human-provided annotations of counterfactual actions, but not all counterfactual annotations are available (either because they were never queried or the users declined to provide annotations). In our example (\cref{fig:dataset-example}c), one annotation is collected for the counterfactual action $\searrow$ at state $s_1$, indicating that the human annotator believes the reward for taking action $\searrow$ from state $s_1$ is $+1$ (which is the true reward). To make use of this information, one might be tempted to add the counterfactual annotation as a new sample. The augmented dataset (\cref{fig:dataset-example}d) would allow us to evaluate policies (e.g., using IS) that assign non-zero probabilities to $\searrow$ in state $s_1$. While seemingly plausible, this naive approach inadvertently changes the state distribution and results in a dataset inconsistent with the original problem (it looks like state $s_1$ is seen more often than reality). A quick calculation reveals that applying IS to this unweighted augmented dataset gives a biased estimate of a new policy as $2/3$ instead of $0.5$ (see \cref{appx:intuition}). To address this issue, in \cref{sec:proposed} we propose a new reweighting procedure that maintains the state distribution of the original dataset while incorporating counterfactual annotations.

\begin{figure}[h]
\vspace{-0.4em}
    \centering
    \begin{minipage}[c]{0.53\textwidth}
    \includegraphics[scale=0.72,trim={5 0 5 15}]{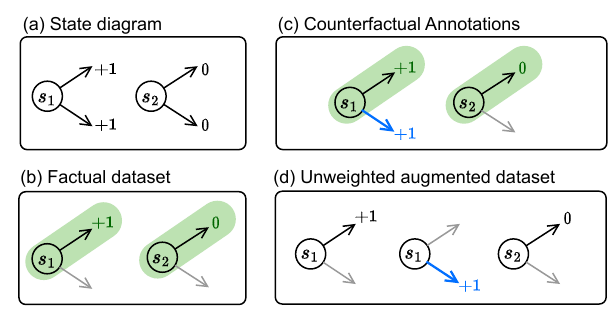}
    \end{minipage}
    \hfill
    \begin{minipage}[c]{0.46\textwidth}
    \caption{(a) The state diagram of a bandit problem with two states and two actions. (b) A {\definecolor{xxx}{HTML}{BDE2B2}\sethlcolor{xxx}\hl{factual dataset}} containing two samples. (c) The factual samples augmented with {\definecolor{yyy}{HTML}{016FFF}\color{yyy} counterfactual annotations}. (d) The (unweighted) augmented dataset constructed from factual samples and counterfactual annotations. Compared to the original factual dataset, the relative frequency of $s_1$ vs $s_2$ has changed from $1:1$ to $2:1$. }
    \label{fig:dataset-example}
    \end{minipage}
\vspace{-\intextsep}
\end{figure}

\vspace{-0.3em}
\subsection{Augmenting IS Estimators with Counterfactual Annotations} \label{sec:proposed}
\vspace{-0.5em}

To avoid the bias issue described in \cref{sec:intuition}, informally, we want to split the contribution of each sample between the factual data and counterfactual annotations. Given a factual sample $(s,a,r)$ and the associated counterfactual annotations $\gvec$, let $\wvec = \{w^{a}\} \cup \{w^{\tilde{a}}: \tilde{a} \in \Acal \setminus \{a\}\}$ be a set of user-defined non-negative weights that satisfy ${w^{a} + \sum_{\tilde{a} \in \Acal \setminus \{a\}} w^{\tilde{a}} = 1}$. These weights specify how much we want the estimator to ``listen'' to the counterfactual annotations ($w^{\tilde{a}}$) relative to the factual data ($w^{a}$). We restrict $w^{\tilde{a}} = 0$ when $c^{\tilde{a}} = 0$, i.e., non-zero weight is only allowed when the annotation is available. In general, one may assign different weights for each occurrence of $(s,a)$ (e.g., the counterfactual annotation is obtained for one instance but missing for another); let $\bar{W}(\tilde{a}|s,a) = \E\nolimits[w^{\tilde{a}}]$ denote the average weight assigned to $\tilde{a}$ when the factual data is $(s,a)$ (see example in \cref{appx:intuition-weights}). After reweighting, the state distribution is maintained (since the weights associated with each sample sum to $1$) but the state-conditional action distributions have changed; this ``weighted'' augmented dataset can be seen as if it was generated using a different behavior policy. 

\begin{definition}[Augmented behavior policy]\label{def:aug-beh}
\setlength{\abovedisplayskip}{2pt}
\setlength{\belowdisplayskip}{0pt}
\setlength{\abovedisplayshortskip}{0pt}
\setlength{\belowdisplayshortskip}{0pt}
\[\textstyle \pi_{b^{\smallplus}}(a|s) = \bar{W}(a|s,a)\pi_b(a|s) + \sum_{\check{a} \in \Acal \setminus \{a\}} \bar{W}(a|s,\check{a}) \pi_b(\check{a}|s). \]
\vspace{-\intextsep}
\end{definition}

Here, $\pi_{b^{\smallplus}}(a|s)$ represents the probability that information about action $a$ is observed for state $s$ (similar to how an ``average policy'' may be defined for multiple behavior policies \citep{agrawal2017effective}), either as a factual action in the dataset, or as an annotated counterfactual action when some other action $\check{a}$ is the factual action. Next, we define our proposed estimators (for bandits) based on IS.

\begin{definition}[Counterfactual-augmented IS]\label{def:C-IS}
Given a counterfactual-augmented sample $\tau^{\smallplus} = (\tau, \gvec)$ and weights $\wvec = \{w^{\tilde{a}}: \tilde{a} \in \Acal\}$, where $\tau = (s,a,r)$, $\gvec = \{g^{\tilde{a}}: c^{\tilde{a}}=1 \}$, the C-IS estimator is $\hat{v}^{\textup{C-IS}} = w^{a} \rho^{a} r + \sum_{\tilde{a} \in \Acal \setminus \{a\}} w^{\tilde{a}} \rho^{\tilde{a}} g^{\tilde{a}}$, where $\rho^{\tilde{a}} = \frac{\pi_e(\tilde{a} | s)}{\pi_{b^{\smallplus}}(\tilde{a} | s)}$ for each $\tilde{a} \in \Acal$. 
\end{definition}
\vspace{-0.5\intextsep}
The C-IS estimator is a weighted convex combination of the factual IS estimate $\rho^{a} r$ and the counterfactual IS estimates $\rho^{\tilde{a}} g^{\tilde{a}}$ for all counterfactual actions $\tilde{a} \in \Acal \setminus \{a\}$. We also study a special case where all annotations are available and the weights are split equally among actions, such that $w^{a} = w^{\tilde{a}} = 1/|\Acal|$. Then, $\pi_{b^{\smallplus}}$ becomes the uniformly random policy, and after substituting into \cref{def:C-IS}, we obtain the following estimator. 

\begin{definition}[C-IS with equal weights]\label{def:C-IS-weights}
Given a counterfactual-augmented sample $\tau^{\smallplus} = (\tau, \gvec)$, the C*-IS estimator is $\hat{v}^{\textup{C*-IS}} = \pi_e(a|s) r + \sum_{\tilde{a} \in \Acal \setminus \{a\}} \pi_e(\tilde{a}|s) g^{\tilde{a}}$. 
\end{definition}
\begin{remark}
\cref{def:C-IS-weights} provides an alternative interpretation of the estimator when using equal weights: if \cref{asm:perfect-annot} holds (i.e., the annotation function $G$ is the true reward function $R$), we effectively observe both the factual and counterfactual rewards from $R$. Then, we can directly use the definition of the value function to calculate the expected reward under $\pi_e$ using the action probabilities $\pi_e(\cdot|s)$. 
\end{remark}

For the sequential setting, given a trajectory with $T$ steps, we define the collection of weights over all time steps, $\wvec = \{w^{a_t}_t: t = 1...T\} \cup \{w^{\tilde{a}}_t: \tilde{a} \in \Acal \setminus \{a_t\}, t = 1...T\}$. The augmented behavior policy $\pi_{b^{\smallplus}}$ is similarly defined (see \cref{def:aug-beh}). By extending the recursive definition of PDIS, we obtain the following two estimators (assuming either arbitrary weights or equal weights). 

\begin{definition}[Counterfactual-augmented PDIS]\label{def:C-PDIS}
Given a counterfactual-augmented trajectory $\tau^{\smallplus} = (\tau, \gvec)$ and weights $\wvec$ as defined above, where $\tau = [(s_t, a_t, r_t)]_{t=1}^{T}$, $\gvec = \{g_t^{\tilde{a}}: c_t^{\tilde{a}}=1 \}$, the \mbox{C-PDIS} estimator is $\hat{v}^{\textup{C-PDIS}} = v_T$, with $v_T$ defined recursively as $v_0 = 0$, $v_{T-t+1} = w_t^{a_t} \rho_t^{a_t} (r_t + \gamma v_{T-t}) + \sum_{\tilde{a} \in \Acal \setminus \{a_t\}} w_t^{\tilde{a}} \rho_t^{\tilde{a}} g_t^{\tilde{a}}$ for $t=T...1$, where $\rho_t^{\tilde{a}} = \frac{\pi_e(\tilde{a} | s_t)}{\pi_{b^{\smallplus}}(\tilde{a} | s_t)}$ for each $\tilde{a} \in \Acal$. 
\end{definition}

\begin{definition}[C-PDIS with equal weights]\label{def:C-PDIS-weights}
Given a counterfactual-augmented trajectory $\tau^{\smallplus} = (\tau, \gvec)$, the \mbox{C*-PDIS} estimator is $\hat{v}^{\textup{C*-PDIS}} = v_T$, with $v_T$ defined recursively as $v_0 = 0$, $v_{T-t+1} = \pi_e(a_t|s_t) (r_t + \gamma v_{T-t}) + \sum_{\tilde{a} \in \Acal \setminus \{a_t\}} \pi_e(\tilde{a}|s_t) g_t^{\tilde{a}} $ for $t=T...1$. 
\end{definition}
\vspace{-0.5\intextsep}

Next, we study the theoretical properties of our proposed estimators, relating their OPE performance (in terms of bias and variance) to assumptions on counterfactual annotations and the offline dataset.

\section{Theoretical Analyses} \label{sec:theory}

We first present results for the bandit setting, where we study and compare the properties of the C-IS estimator with standard IS in terms of bias, variance, and the assumptions required, highlighting scenarios where bias and variance reduction is guaranteed. We then show how these results generalize to C-PDIS in the sequential RL setting. Finally, we discuss practical implications of the theoretical results. Full derivations are in \cref{appx:proof}. 

To begin, we review existing results for IS. Recall the following assumption of common support.

\begin{assumption}[Common support] \label{asm:common-support}
$\pi_e(a|s) > 0 \rightarrow \pi_b(a|s) > 0, \forall s \in \Scal, a \in \Acal$.
\end{assumption}

If \cref{asm:common-support} holds, IS is unbiased (i.e., $\E_{\tau}[\hat{v}^{\textup{IS}}] = v(\pi_e)$), and its variance is \citep{dudik2014doubly}:
\begin{align}
\resizebox{0.95\hsize}{!}{
    $\mathbb{V}[\hat{v}^{\textup{IS}}] = \mathbb{V}_{s \sim d_1}[V^{\pi_e}(s)] 
    + \mathbb{E}_{s \sim d_1}\bigl[\mathbb{V}_{a\sim \pi_b(s)}[\rho(a|s) \bar{R}(s,a)] \bigr]
    + \mathbb{E}_{s \sim d_1}\bigl[\mathbb{E}_{a\sim \pi_b(s)} [\rho(a|s)^2\, \sigma_R(s,a)^2 ] \bigr]$
}
\label{eqn:IS-variance}
\raisetag{12pt}
\end{align}
where $\sigma_R(s,a)^2 = \mathbb{V}_{r \sim R(s,a)}[r]$ is the variance associated with the reward function $R(s,a)$. The first term reflects the inherent randomness from the state distribution not related to importance sampling. The second term reflects the randomness in the behavior policy, whereas the third term reflects the randomness in rewards; these two terms are affected by the distribution of importance ratios $\rho(a|s)$. 
When \cref{asm:common-support} is not satisfied, the IS estimator is biased \citep{sachdeva2020off}, where the bias is related to actions with no support: \( \Bias[\hat{v}^{\textup{IS}}] = \mathbb{E}[\hat{v}^{\textup{IS}}] - v(\pi_e) = \mathbb{E}_{s \sim d_1} \bigl[- \sum_{a \in \gU(s, \pi_b)} \pi_e(a|s) \bar{R}(s,a) \bigr], \)
with $\gU(s, \pi_b) = \{a: \pi_b(a|s) = 0\}$ denoting the set of unsupported actions. 

Intuitively, when \cref{asm:common-support} does not hold, the C-IS estimator can make use of information from the counterfactual annotations for unsupported actions, thereby reducing bias compared to IS (\cref{sec:C-IS-bias}). For cases when IS is already unbiased, counterfactual annotations play the role of additional data and should help further reduce variance (\cref{sec:C-IS-variance}).

\subsection{Bias Analyses for C-IS} \label{sec:C-IS-bias}
To formalize the effect of counterfactual annotations on support, we state the following assumption. 

\begin{assumption}[Common support with annotations] \label{asm:common-support-cf}
$\pi_e(a|s) > 0 \rightarrow \pi_{b^{\smallplus}}(a|s) > 0, \forall s \in \Scal, a \in \Acal$.
\end{assumption}

\cref{asm:common-support-cf} is a weaker version of \cref{asm:common-support}, because $\pi_{b^{\smallplus}}(a|s) > 0$ requires either $\bar{W}(a|s,a) \pi_b(a|s) > 0$ (same as \cref{asm:common-support}, assuming $\bar{W}(a|s,a) \neq 0$) or $\bar{W}(a|s,\check{a}) \pi_b(\check{a}|s) > 0$ for at least some $\check{a} \in \Acal \setminus \{a\}$. In other words, information about action $a$ can be from either a factual sample or counterfactual annotations (recall \cref{def:aug-beh}). Next, we state the main results for the bias of C-IS (unless specified otherwise, expectations are taken with respect to $\E_{\tau^{\smallplus}, \wvec}$). These results hold for any nonzero $\wvec$ and directly generalize to the special case of C*-IS where the weights are $1/|\Acal|$. 

\begin{theorem}[name=Unbiasedness of C-IS,restate=thmCISunbiasedness] \label{thm:C-IS-unbiasedness}
In the bandit setting, when both \cref{asm:common-support-cf,asm:perfect-annot} hold, the C-IS estimator is unbiased, $\E[\hat{v}^{\textup{C-IS}}] = v(\pi_e)$. 
\end{theorem}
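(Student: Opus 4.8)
The plan is to compute $\E[\hat{v}^{\textup{C-IS}}]$ by the tower property, conditioning first on the state $s \sim d_1$ and then on the realized factual action, and to show that after averaging over the reward, the annotations, and the weights, the augmented importance ratios telescope against $\pi_{b^{\smallplus}}$ so that only $\sum_a \pi_e(a|s)\bar{R}(s,a) = V^{\pi_e}(s)$ survives.

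First I would fix $s$ and condition on the factual action being some $a'$, which occurs with probability $\pi_b(a'|s)$. Inside this conditional I would take expectations of the three sources of randomness in the estimator: $\E[r \mid s, a'] = \bar{R}(s,a')$, and for each counterfactual $\tilde{a} \neq a'$, $\E[g^{\tilde{a}} \mid s, a'] = \bar{R}(s,\tilde{a})$ by \cref{asm:perfect-annot}. The key modeling fact I would invoke here is that the annotation value $g^{\tilde{a}}$ is drawn from $G(s,\tilde{a})$ independently of the weight $w^{\tilde{a}}$ (the weights depend only on which annotations are available, not on their numerical values), so that $\E[w^{\tilde{a}} g^{\tilde{a}} \mid s, a'] = \E[w^{\tilde{a}}\mid s,a']\,\bar{R}(s,\tilde{a}) = \bar{W}(\tilde{a}|s,a')\,\bar{R}(s,\tilde{a})$, and likewise $\E[w^{a'} \mid s, a'] = \bar{W}(a'|s,a')$. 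Since every $\rho^{b} = \pi_e(b|s)/\pi_{b^{\smallplus}}(b|s)$ depends only on $s$, this collapses the conditional expectation into a single uniform sum $\sum_{b \in \Acal} \bar{W}(b|s,a')\,\rho^{b}\,\bar{R}(s,b)$, in which the $b = a'$ term is the factual contribution and the $b \neq a'$ terms are the counterfactual contributions.

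Next I would average over the factual action by weighting with $\pi_b(a'|s)$ and swap the order of summation to isolate the inner sum $\sum_{a'} \pi_b(a'|s)\,\bar{W}(b|s,a')$. The crucial observation is that this inner sum is exactly $\pi_{b^{\smallplus}}(b|s)$ by \cref{def:aug-beh}. Substituting and canceling $\pi_{b^{\smallplus}}(b|s)$ against the denominator of $\rho^{b}$ leaves $\sum_{b} \pi_e(b|s)\bar{R}(s,b) = V^{\pi_e}(s)$; taking the outer expectation over $s \sim d_1$ then yields $v(\pi_e)$.

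The main obstacle---really the only delicate point---is the handling of support. The cancellation of $\pi_{b^{\smallplus}}(b|s)$ against $\rho^{b}$ is only legitimate when $\pi_{b^{\smallplus}}(b|s) > 0$, which is precisely where \cref{asm:common-support-cf} enters, ruling out the problematic case $\pi_e(b|s) > 0$ with $\pi_{b^{\smallplus}}(b|s) = 0$. For actions $b$ with $\pi_e(b|s) = 0$ the corresponding term vanishes because $\rho^{b} = 0$, and for actions with $\pi_{b^{\smallplus}}(b|s) = 0$ (hence $\bar{W}(b|s,\cdot)=0$ and $w^{b}=0$ almost surely) the contribution is identically zero, so these degenerate terms never surface. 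I would also remark that the argument uses $\wvec$ only through its mean $\bar{W}$, so it holds for arbitrary admissible nonzero weights and specializes immediately to C*-IS with $w^{b} = 1/|\Acal|$, where $\pi_{b^{\smallplus}}$ is the uniform policy.
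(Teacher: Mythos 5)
Your proposal is correct and follows essentially the same route as the paper's proof: condition on the state and factual action, take expectations over rewards, annotations, and weights (using \cref{asm:perfect-annot} to replace annotation means by $\bar{R}$), swap the order of summation so that $\sum_{a}\pi_b(a|s)\bar{W}(\tilde{a}|s,a)$ reassembles into $\pi_{b^{\smallplus}}(\tilde{a}|s)$ via \cref{def:aug-beh}, and cancel against the denominator of the augmented importance ratio. Your additional remarks on the independence of $w^{\tilde{a}}$ and $g^{\tilde{a}}$ and on the degenerate support cases are sound and make explicit what the paper leaves implicit.
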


\begin{proposition}[name=Bias of C-IS due to support,restate=thmCISbiassupport] \label{thm:C-IS-bias-support}
When \cref{asm:perfect-annot} holds but \cref{asm:common-support-cf} is violated, 
\(\Bias[\hat{v}^{\textup{C-IS}}] = \E_{s \sim d_1} \bigl[-\sum\nolimits_{a \in \gU(s,\pi_{b^{\smallplus}})} \pi_e(a|s) \bar{R}(s,a) \bigr]\) where $\gU(s,\pi_{b^{\smallplus}}) = \{a: \pi_{b^{\smallplus}}(a|s) = 0\}$ are unsupported actions in the counterfactual-augmented dataset. 
\end{proposition}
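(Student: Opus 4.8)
The plan is to compute $\E[\hat v^{\textup{C-IS}}]$ directly (recall expectations are over both $\tau^{\smallplus}$ and $\wvec$) and show it equals the value of $\pi_e$ restricted to the actions supported by $\pi_{b^{\smallplus}}$; subtracting the true value $v(\pi_e)$ then leaves exactly the stated sum over $\gU(s,\pi_{b^{\smallplus}})$. The guiding idea is that, in expectation, the reweighted augmented sample behaves like a single importance-sampling draw under the behavior policy $\pi_{b^{\smallplus}}$, so the result should mirror the standard IS bias formula with $\pi_b$ replaced by $\pi_{b^{\smallplus}}$.

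First I would condition on the state $s$ and the factual action $a$, and take the inner expectation over the reward, the annotation values, and the weights. Using the definition of the expected reward, $\E[r\mid s,a]=\bar R(s,a)$, and \cref{asm:perfect-annot}, $\E[g^{\tilde a}\mid s]=\bar R(s,\tilde a)$, together with the fact that each weight is drawn independently of the corresponding reward/annotation value, every product $w^{\tilde a}g^{\tilde a}$ (and $w^a r$) factors into $\bar W(\tilde a\mid s,a)\,\bar R(s,\tilde a)$. Since $\rho^{\tilde a}=\pi_e(\tilde a\mid s)/\pi_{b^{\smallplus}}(\tilde a\mid s)$ depends only on $s$ and $\tilde a$, this yields $\E[\hat v^{\textup{C-IS}}\mid s,a]=\bar W(a\mid s,a)\rho^{a}\bar R(s,a)+\sum_{\tilde a\ne a}\bar W(\tilde a\mid s,a)\rho^{\tilde a}\bar R(s,\tilde a)$.

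Next I would average over $a\sim\pi_b(s)$ and regroup the resulting double sum by the annotated/factual action. Collecting, for each fixed action $\hat a$, all the terms carrying $\rho^{\hat a}\bar R(s,\hat a)$ produces the coefficient $\bar W(\hat a\mid s,\hat a)\pi_b(\hat a\mid s)+\sum_{\check a\ne\hat a}\bar W(\hat a\mid s,\check a)\pi_b(\check a\mid s)$, which is precisely $\pi_{b^{\smallplus}}(\hat a\mid s)$ by \cref{def:aug-beh}. This coefficient cancels the denominator of $\rho^{\hat a}$, leaving $\E[\hat v^{\textup{C-IS}}\mid s]=\sum_{\hat a:\,\pi_{b^{\smallplus}}(\hat a\mid s)>0}\pi_e(\hat a\mid s)\bar R(s,\hat a)$; actions with $\pi_{b^{\smallplus}}(\hat a\mid s)=0$ never appear in the estimator and contribute nothing.

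Finally, since in the bandit setting $V^{\pi_e}(s)=\sum_{\hat a}\pi_e(\hat a\mid s)\bar R(s,\hat a)$, subtracting recovers $\E[\hat v^{\textup{C-IS}}\mid s]-V^{\pi_e}(s)=-\sum_{a\in\gU(s,\pi_{b^{\smallplus}})}\pi_e(a\mid s)\bar R(s,a)$, and taking $\E_{s\sim d_1}$ with $v(\pi_e)=\E_{s\sim d_1}[V^{\pi_e}(s)]$ gives the claim. The only real obstacle is bookkeeping around the random weights: I must justify the factorization $\E[w^{\tilde a}g^{\tilde a}\mid s,a]=\bar W(\tilde a\mid s,a)\bar R(s,\tilde a)$ (using $w^{\tilde a}=0$ whenever $c^{\tilde a}=0$, so missing annotations drop out automatically, plus independence of a present weight from the annotation value) and the regrouping step that reconstructs $\pi_{b^{\smallplus}}$. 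This same computation subsumes \cref{thm:C-IS-unbiasedness}: when \cref{asm:common-support-cf} holds the set $\gU(s,\pi_{b^{\smallplus}})$ is empty and the bias vanishes.
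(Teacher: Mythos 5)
Your proposal is correct and follows essentially the same route as the paper's proof: compute $\E[\hat{v}^{\textup{C-IS}}]$ by first taking inner expectations (using \cref{asm:perfect-annot} and the weight averages $\bar{W}$), then swap the order of summation so that the coefficient of each $\rho^{\tilde{a}}\bar{R}(s,\tilde{a})$ reconstructs $\pi_{b^{\smallplus}}(\tilde{a}|s)$ via \cref{def:aug-beh} and cancels the importance-ratio denominator, leaving $\sum_{a \notin \gU(s,\pi_{b^{\smallplus}})}\pi_e(a|s)\bar{R}(s,a)$ and hence the stated bias after subtracting $v(\pi_e)$. Your added bookkeeping on the weight/annotation factorization and on unsupported actions dropping out matches what the paper carries out in its more detailed combined-bias derivation.
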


\begin{proposition}[name=Bias of C-IS due to imperfect annotations,restate=thmCISbiasannot] \label{thm:C-IS-bias-annot}
When \cref{asm:common-support-cf} holds but \cref{asm:perfect-annot} is violated, 
\(\Bias[\hat{v}^{\textup{C-IS}}] = \E_{s \sim d_1} \E_{a \sim \pi_e(s)} \bigl[\delta_{W}(s,a)\ \epsilon_{G}(s,a) \bigr]\), where we measure violation of \cref{asm:perfect-annot} as $\epsilon_{G}(s,a) = \mathbb{E}_{g \sim G(s,a)}[g] - \bar{R}(s,a)$, and \(\delta_{W}(s,a) = \smash{\bigl(1 - \frac{\bar{W}(a|s,a)\pi_b(a|s)}{\pi_{b^{\smallplus}}(a|s)} \bigr)}\). 
\end{proposition}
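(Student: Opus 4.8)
The plan is to compute $\E[\hat v^{\textup{C-IS}}]$ directly and show it equals $v(\pi_e)$ plus the claimed bias term; under \cref{asm:common-support-cf} every ratio $\rho^{a}$ with $\pi_e(a|s)>0$ is well-defined, so this is legitimate. First I would condition on the initial state $s\sim d_1$ and take the expectation over the factual action $a\sim\pi_b(s)$, the factual reward $r\sim R(s,a)$, the annotations $g^{\tilde a}\sim G(s,\tilde a)$, and the weights $\wvec$. The first simplification is to factor the products of weights and values: because $w^{\tilde a}=0$ whenever $c^{\tilde a}=0$, and the realized annotation value given availability is independent of the (availability-based) weight, we get $\E[w^{\tilde a}g^{\tilde a}\mid s,a]=\bar W(\tilde a|s,a)\,\E_{g\sim G(s,\tilde a)}[g]$ and similarly $\E[w^{a}r\mid s,a]=\bar W(a|s,a)\bar R(s,a)$. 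This reduces the conditional expectation to a deterministic double sum over the factual action and the annotated target action.

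Next I would substitute $\E_{g\sim G(s,\tilde a)}[g]=\bar R(s,\tilde a)+\epsilon_G(s,\tilde a)$ and regroup the double sum by the \emph{target} action $a^\ast$ — that is, collect every term carrying the ratio $\rho^{a^\ast}$, whether it arises from $a^\ast$ being the factual action (contributing $\pi_b(a^\ast|s)\bar W(a^\ast|s,a^\ast)\bar R(s,a^\ast)$) or from $a^\ast$ being an annotated counterfactual while some $\check a\neq a^\ast$ is factual (contributing $\sum_{\check a\neq a^\ast}\pi_b(\check a|s)\bar W(a^\ast|s,\check a)[\bar R(s,a^\ast)+\epsilon_G(s,a^\ast)]$). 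The key algebraic move is to read off from \cref{def:aug-beh} that the total coefficient multiplying $\bar R(s,a^\ast)$ is exactly $\pi_{b^{\smallplus}}(a^\ast|s)$, while the coefficient multiplying $\epsilon_G(s,a^\ast)$ is $\pi_{b^{\smallplus}}(a^\ast|s)-\bar W(a^\ast|s,a^\ast)\pi_b(a^\ast|s)$, since the factual term carries no annotation error.

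Multiplying through by $\rho^{a^\ast}=\pi_e(a^\ast|s)/\pi_{b^{\smallplus}}(a^\ast|s)$ then cancels $\pi_{b^{\smallplus}}(a^\ast|s)$: the reward part collapses to $\sum_{a^\ast}\pi_e(a^\ast|s)\bar R(s,a^\ast)=V^{\pi_e}(s)$, and the error part becomes $\sum_{a^\ast}\pi_e(a^\ast|s)\,\delta_W(s,a^\ast)\,\epsilon_G(s,a^\ast)$ once $\pi_{b^{\smallplus}}(a^\ast|s)$ is factored out to expose $\delta_W(s,a^\ast)=1-\bar W(a^\ast|s,a^\ast)\pi_b(a^\ast|s)/\pi_{b^{\smallplus}}(a^\ast|s)$. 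Taking the outer expectation over $s\sim d_1$ turns the first piece into $v(\pi_e)$ and the second into the stated bias, giving $\Bias[\hat v^{\textup{C-IS}}]=\E_{s\sim d_1}\E_{a\sim\pi_e(s)}[\delta_W(s,a)\epsilon_G(s,a)]$. I expect the main obstacle to be organizational rather than conceptual: correctly reindexing the double sum so that the factual contribution (which uses the true reward, hence no $\epsilon_G$) and the counterfactual contributions (which each carry $\epsilon_G$) merge under a single target-action index without double-counting, while tracking that \cref{asm:common-support-cf} is precisely what keeps each $\rho^{a^\ast}$ finite for the actions $\pi_e$ actually plays. Specializing to $\epsilon_G\equiv 0$ recovers \cref{thm:C-IS-unbiasedness}, a useful consistency check.
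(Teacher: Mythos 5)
Your proposal is correct and follows essentially the same route as the paper's proof: both take the expectation conditional on $s$, use independence to factor $\E[w^{\tilde a}g^{\tilde a}]=\bar W(\tilde a|s,a)\E_{g\sim G(s,\tilde a)}[g]$, substitute $\E[g]=\bar R+\epsilon_G$, swap the order of summation so that $\sum_{a}\pi_b(a|s)\bar W(\tilde a|s,a)=\pi_{b^{\smallplus}}(\tilde a|s)$ cancels the denominator of $\rho^{\tilde a}$, and isolate the missing factual-action error term to produce the $\delta_W$ factor. Your "collect coefficients by target action" step is just a rephrasing of the paper's add-and-subtract of $\bar W(a|s,a)\frac{\pi_e(a|s)}{\pi_{b^{\smallplus}}(a|s)}\epsilon_G(s,a)$, so there is no substantive difference.
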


\begin{proposition}[name=A sufficient condition for bias reduction,restate=thmCISbiasreduction] \label{thm:C-IS-bias-reduction}
If \cref{asm:perfect-annot} holds (but \cref{asm:common-support-cf} is violated), $\bar{R}(s,a) \geq 0$ for all $s\in\Scal, a\in\Acal$, and there exists $(s,a)$ such that $\pi_b(a|s) = 0$, $\pi_{b^{\smallplus}}(a|s) > 0$, $\pi_{e}(a|s) > 0$, $\bar{R}(s,a) > 0$, then $|\Bias[\hat{v}^{\textup{C-IS}}]| < |\Bias[\hat{v}^{\textup{IS}}]|$. 
\end{proposition}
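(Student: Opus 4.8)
The plan is to reduce the claim to a state-by-state comparison of the two bias expressions, both of which are available in closed form. First I would invoke \cref{thm:C-IS-bias-support} to write $\Bias[\hat{v}^{\textup{C-IS}}] = \E_{s \sim d_1}\bigl[-\sum_{a \in \gU(s,\pi_{b^{\smallplus}})} \pi_e(a|s)\bar{R}(s,a)\bigr]$, and the IS bias formula displayed just before \cref{sec:C-IS-bias} to write $\Bias[\hat{v}^{\textup{IS}}] = \E_{s \sim d_1}\bigl[-\sum_{a \in \gU(s,\pi_b)}\pi_e(a|s)\bar{R}(s,a)\bigr]$. Because $\bar{R}(s,a) \geq 0$ and $\pi_e(a|s) \geq 0$ by hypothesis, every summand is non-negative, so both biases are non-positive and $|\Bias[\hat{v}^{\textup{C-IS}}]| = \E_{s\sim d_1}\bigl[\sum_{a\in\gU(s,\pi_{b^{\smallplus}})}\pi_e(a|s)\bar{R}(s,a)\bigr]$, and likewise for IS with $\gU(s,\pi_b)$. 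It then suffices to compare the two non-negative inner sums pointwise in $s$.

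The crux is the monotonicity of support under augmentation, namely $\gU(s,\pi_{b^{\smallplus}}) \subseteq \gU(s,\pi_b)$ for every $s$. This follows from \cref{def:aug-beh}: since all weights and all values $\pi_b(\cdot|s)$ are non-negative, $\pi_{b^{\smallplus}}(a|s) \geq \bar{W}(a|s,a)\pi_b(a|s)$, so any action with $\pi_b(a|s) > 0$ (and positive factual self-weight) also has $\pi_{b^{\smallplus}}(a|s) > 0$; contrapositively, $\pi_{b^{\smallplus}}(a|s) = 0$ forces $\pi_b(a|s)=0$. Intuitively, adding counterfactual annotations can only add support, never remove it. Given this inclusion and the non-negativity of the summands, $\sum_{a\in\gU(s,\pi_{b^{\smallplus}})}\pi_e(a|s)\bar{R}(s,a) \leq \sum_{a\in\gU(s,\pi_b)}\pi_e(a|s)\bar{R}(s,a)$ holds for each $s$, and applying $\E_{s\sim d_1}$ yields the weak inequality $|\Bias[\hat{v}^{\textup{C-IS}}]| \leq |\Bias[\hat{v}^{\textup{IS}}]|$.

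For strictness I would bound the gap below by a strictly positive quantity. The hypothesized pair $(s,a)$ with $\pi_b(a|s)=0$, $\pi_{b^{\smallplus}}(a|s)>0$, $\pi_e(a|s)>0$, and $\bar{R}(s,a)>0$ lies in $\gU(s,\pi_b)\setminus\gU(s,\pi_{b^{\smallplus}})$, so its contribution $\pi_e(a|s)\bar{R}(s,a) > 0$ enters the IS inner sum but is absent from the C-IS inner sum at that state. Since the pointwise difference of the inner sums is non-negative everywhere and strictly positive at this distinguished state (which we take to carry positive mass under $d_1$), the expectation over $d_1$ turns the inequality strict, giving $|\Bias[\hat{v}^{\textup{C-IS}}]| < |\Bias[\hat{v}^{\textup{IS}}]|$.

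The main obstacle is the support-monotonicity step, and in particular the tacit requirement that the factual self-weight $\bar{W}(a|s,a)$ be positive on supported actions so that augmentation genuinely preserves the original support. I would make this explicit (it is the natural regime, since one always retains the factual sample) to rule out the degenerate case in which shifting all weight onto counterfactuals strips support from a previously supported action and thereby introduces, rather than removes, a bias term. Everything else is a routine inclusion-and-non-negativity argument.
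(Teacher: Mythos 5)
Your proof is correct and follows essentially the same route as the paper's: express both biases in closed form, use $\bar{R}\geq 0$ to drop the absolute values, invoke the inclusion $\gU(s,\pi_{b^{\smallplus}}) \subseteq \gU(s,\pi_b)$ to reduce the gap to a sum over $\gU(s,\pi_b)\setminus\gU(s,\pi_{b^{\smallplus}})$, and use the hypothesized $(s,a)$ to make it strictly positive. Your two explicit caveats --- that $\bar{W}(a|s,a)>0$ on supported actions is needed for the support inclusion, and that the distinguished state must carry positive mass under $d_1$ --- are points the paper's proof glosses over, so keeping them is a mild improvement rather than a deviation.
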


There are two sources of bias for C-IS: missing annotations contribute to the bias as the rewards of unsupported actions (\cref{thm:C-IS-bias-support}), whereas imperfect annotations contribute to the bias as the annotation error over supported actions (\cref{thm:C-IS-bias-annot}). If both assumptions are violated, the resulting bias is the combination of the two (see \cref{appx:proof}). If both assumptions hold, C-IS is unbiased (\cref{thm:C-IS-unbiasedness}). Even when not all counterfactual annotations are collected (\cref{asm:common-support-cf} is violated), C-IS can evaluate more policies without bias (assuming perfect annotations), because there is a larger space of policies ``supported'' by the counterfactual-augmented dataset. In particular, if there is at least one counterfactual annotation for an action with no support in the factual data, C-IS has less bias than IS (under mild conditions, \cref{thm:C-IS-bias-reduction}). Lastly, we note the a useful corollary of \cref{thm:C-IS-unbiasedness}. 

\begin{corollary}[name=Expectation of augmented importance ratios,restate=thmCISweightedrho] \label{thm:C-IS-weighted-rho}
Let $\rho^{\smallplus}_{W} = w^{a} \rho^{a} + \sum_{\tilde{a} \in \Acal \setminus \{a\}} w^{\tilde{a}} \rho^{\tilde{a}}$ given $\tau$ and $\wvec$. Under \cref{asm:common-support-cf}, $\E[\rho^{\smallplus}_{W}] = 1$. 
\end{corollary}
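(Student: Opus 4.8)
The plan is to recognize that $\rho^{\smallplus}_{W}$ is exactly the C-IS estimator with every reward — both factual and counterfactual — replaced by the constant $1$, and then to invoke \cref{thm:C-IS-unbiasedness}. Concretely, I would consider the same bandit but with reward function $\bar{R}(s,a) \equiv 1$ and an annotation function returning $1$ deterministically. In this auxiliary problem \cref{asm:perfect-annot} holds trivially (since $\E_{g}[g] = 1 = \bar{R}(s,a)$), and $v(\pi_e) = \E_{s}\E_{a \sim \pi_e(s)}[1] = 1$. Substituting $r = 1$ and $g^{\tilde{a}} = 1$ into \cref{def:C-IS} gives $\hat{v}^{\textup{C-IS}} = w^{a}\rho^{a} + \sum_{\tilde{a} \in \Acal \setminus \{a\}} w^{\tilde{a}}\rho^{\tilde{a}} = \rho^{\smallplus}_{W}$. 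Since \cref{asm:common-support-cf} is assumed, \cref{thm:C-IS-unbiasedness} applies directly and yields $\E[\rho^{\smallplus}_{W}] = \E[\hat{v}^{\textup{C-IS}}] = v(\pi_e) = 1$. This reduction is why the statement is naturally a corollary: $\rho^{\smallplus}_{W}$ carries no $r$ or $g$ terms, so the perfect-annotation hypothesis is supplied for free by the constant-reward construction and need not be assumed in the original environment.

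As a self-contained alternative that doubles as a sanity check, I would compute the expectation directly. First I would take the expectation over the random weights conditioned on the factual pair $(s,a)$, using $\bar{W}(\cdot|s,a) = \E[w^{\cdot}]$; because each $\rho^{\tilde{a}} = \pi_e(\tilde{a}|s)/\pi_{b^{\smallplus}}(\tilde{a}|s)$ is a deterministic function of $s$ alone, it factors out of the weight expectation. This collapses the two pieces of $\rho^{\smallplus}_{W}$ into a single sum $\sum_{b \in \Acal} \bar{W}(b|s,a)\,\rho^{b}$ ranging over all actions. Next I would take the expectation over $a \sim \pi_b(\cdot|s)$ and swap the order of summation to obtain $\sum_{b} \rho^{b} \sum_{a} \pi_b(a|s)\,\bar{W}(b|s,a)$. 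The inner sum is precisely $\pi_{b^{\smallplus}}(b|s)$ by \cref{def:aug-beh}, so each term telescopes to $\rho^{b}\,\pi_{b^{\smallplus}}(b|s) = \pi_e(b|s)$; summing over $b$ gives $\sum_{b}\pi_e(b|s) = 1$, and a final expectation over $s \sim d_1$ leaves $1$.

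The one point requiring care — and the main obstacle in the direct route — is the bookkeeping for degenerate actions: when $\pi_{b^{\smallplus}}(b|s) = 0$ the ratio $\rho^{b}$ is formally $0/0$ and must be read under the usual importance-sampling convention as $0$. Here \cref{asm:common-support-cf} does the essential work: its contrapositive guarantees $\pi_e(b|s) = 0$ for every such $b$, so these actions contribute nothing to $\sum_{b}\pi_e(b|s)$ while the identity $\rho^{b}\,\pi_{b^{\smallplus}}(b|s) = \pi_e(b|s)$ remains valid on the support. I would therefore state this convention explicitly and restrict each summation to $\{b : \pi_{b^{\smallplus}}(b|s) > 0\}$ to keep every expression well-defined; everything else is routine linearity of expectation together with the fact that $\pi_e(\cdot|s)$ is a probability distribution.
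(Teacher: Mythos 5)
Your primary argument is exactly the paper's proof: the paper likewise substitutes $R(s,a) = G(s,a) = 1$ (under which \cref{asm:perfect-annot} holds trivially) and invokes \cref{thm:C-IS-unbiasedness} to conclude $\E[\rho^{\smallplus}_{W}] = v(\pi_e) = 1$. Your direct computation is just an unrolling of the same algebra used to prove \cref{thm:C-IS-unbiasedness}, so it does not constitute a genuinely different route, though your explicit handling of the $0/0$ convention for unsupported actions is a careful touch the paper leaves implicit.
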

\begin{remark}
\cref{thm:C-IS-weighted-rho} suggests that for each sample, $\rho^{\smallplus}_{W}$ plays a similar role as the standard importance ratio $\rho$ in IS, which may be used for calculating the effective sample size (ESS) \citep{elvira2022rethinking}. Naturally, we can also create a weighted version of our proposed estimators (e.g., C-WIS), with the normalization factor defined using $\rho^{\smallplus}_{W}$. 
\end{remark}

\subsection{Variance Analyses for C-IS} \label{sec:C-IS-variance}
Compared to the bias analyses above, the variance of C-IS has a more involved dependence on weights $\wvec$ as well as the variance of the annotation function, $\sigma_G(s,a)^2 = \mathbb{V}_{g \sim G(s,a)}[g]$. For clarity, we defer the full derivations to \cref{appx:C-IS-varianc-proof}; here, we present results for C*-IS where the weights are all set to $1/|\Acal|$ and the annotation function has the same variance as the reward function. 

\begin{theorem}[name=Variance of C*-IS,restate=thmCISvariance] \label{thm:C-IS-split-variance}
Assuming $\sigma_G(s,a)^2 = \sigma_R(s,a)^2$, under \cref{asm:common-support-cf,asm:perfect-annot},
\begin{align}
    \V[\hat{v}^{\textup{C*-IS}}] = \mathbb{V}_{s \sim d_1}[V^{\pi_e}(s)] + \mathbb{E}_{s \sim d_1} \mathbb{E}_{a\sim \pi_b(s)} \bigl[ \pi_b(a|s)\, \rho(a|s)^2\, \sigma_R(s,a)^2 \bigr] \label{eqn:C-IS-split-variance}
\end{align}
where $\rho(a|s) = \smash{\frac{\pi_e(a|s)}{\pi_{b}(a|s)}}$ is the importance ratio under the original behavior policy. 
\end{theorem}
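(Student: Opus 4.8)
The plan is to compute $\V[\hat{v}^{\textup{C*-IS}}]$ by a nested application of the law of total variance, conditioning first on the initial state $s \sim d_1$ and then on the factual action $a \sim \pi_b(s)$. First I would substitute the equal weights $w^{a} = w^{\tilde{a}} = 1/|\Acal|$ (equivalently, take $\pi_{b^{\smallplus}}$ uniform so that $w^{\tilde{a}}\rho^{\tilde{a}} = \pi_e(\tilde{a}|s)$) to confirm the stated form $\hat{v}^{\textup{C*-IS}} = \pi_e(a|s)\,r + \sum_{\tilde{a} \neq a}\pi_e(\tilde{a}|s)\,g^{\tilde{a}}$, and then identify the sources of randomness: the state $s$, the factual action $a$, the factual reward $r \sim R(s,a)$, and (since all annotations are present in the starred estimator) the collection $\{g^{\tilde{a}}\}_{\tilde{a} \neq a}$ with $g^{\tilde{a}}\sim G(s,\tilde{a})$. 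The crucial structural observation is that, conditioned on $(s,a)$, the reward $r$ and all annotations $g^{\tilde{a}}$ are mutually independent draws, so the estimator is a linear combination of independent random variables.

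Next I would handle the outer decomposition $\V[\hat{v}^{\textup{C*-IS}}] = \V_{s\sim d_1}[\E[\hat{v}^{\textup{C*-IS}}\mid s]] + \E_{s\sim d_1}[\V[\hat{v}^{\textup{C*-IS}}\mid s]]$. For the conditional mean I would condition further on the factual action and invoke \cref{asm:perfect-annot} ($\E[g^{\tilde{a}}] = \bar{R}(s,\tilde{a})$) to obtain $\E[\hat{v}^{\textup{C*-IS}}\mid s,a] = \pi_e(a|s)\bar{R}(s,a) + \sum_{\tilde{a} \neq a}\pi_e(\tilde{a}|s)\bar{R}(s,\tilde{a}) = \sum_{a'}\pi_e(a'|s)\bar{R}(s,a') = V^{\pi_e}(s)$. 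The key point is that this value is independent of which action happened to be factual, so $\E[\hat{v}^{\textup{C*-IS}}\mid s] = V^{\pi_e}(s)$; this produces the first term $\V_{s\sim d_1}[V^{\pi_e}(s)]$ exactly and, as I use below, also forces the between-action variance component to vanish.

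Then I would compute the inner variance $\V[\hat{v}^{\textup{C*-IS}}\mid s]$ by a second law of total variance over $a\sim\pi_b(s)$: $\V[\hat{v}^{\textup{C*-IS}}\mid s] = \V_{a\sim\pi_b(s)}[\E[\hat{v}^{\textup{C*-IS}}\mid s,a]] + \E_{a\sim\pi_b(s)}[\V[\hat{v}^{\textup{C*-IS}}\mid s,a]]$. By the action-independence of the conditional mean just established, the first term is the variance of a quantity constant in $a$, hence zero. For the second term I use the conditional independence of $r$ and the $g^{\tilde{a}}$'s to write $\V[\hat{v}^{\textup{C*-IS}}\mid s,a] = \pi_e(a|s)^2\sigma_R(s,a)^2 + \sum_{\tilde{a} \neq a}\pi_e(\tilde{a}|s)^2\sigma_G(s,\tilde{a})^2$, and then apply the hypothesis $\sigma_G(s,a)^2 = \sigma_R(s,a)^2$ to collapse this to $\sum_{a'}\pi_e(a'|s)^2\sigma_R(s,a')^2$, which is again independent of the factual action, so $\E_{a\sim\pi_b(s)}$ leaves it unchanged. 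Assembling the pieces gives $\V[\hat{v}^{\textup{C*-IS}}] = \V_{s\sim d_1}[V^{\pi_e}(s)] + \E_{s\sim d_1}[\sum_{a'}\pi_e(a'|s)^2\sigma_R(s,a')^2]$.

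The final step is cosmetic but is where the main bookkeeping obstacle lies: rewriting $\sum_{a'}\pi_e(a'|s)^2\sigma_R(s,a')^2$ into the claimed form $\E_{a\sim\pi_b(s)}[\pi_b(a|s)\rho(a|s)^2\sigma_R(s,a)^2]$. Expanding the latter yields $\sum_{a:\pi_b(a|s)>0}\pi_b(a|s)^2\rho(a|s)^2\sigma_R(s,a)^2 = \sum_{a:\pi_b(a|s)>0}\pi_e(a|s)^2\sigma_R(s,a)^2$ using $\rho(a|s) = \pi_e(a|s)/\pi_b(a|s)$; the two expressions agree precisely when the unsupported actions contribute nothing, i.e. $\pi_e(a|s)^2\sigma_R(s,a)^2 = 0$ whenever $\pi_b(a|s) = 0$, which holds in the common-support regime under which the comparison with the IS variance of \cref{eqn:IS-variance} is intended. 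I expect this reindexing, together with the care needed to avoid division by zero in $\rho$ for unsupported actions, to be the only genuinely delicate point; the conceptual content — that both the conditional mean and the conditional variance are independent of the factual action under \cref{asm:perfect-annot} and the variance-matching hypothesis — is exactly what kills the cross terms and the between-action variance and drives the entire computation.
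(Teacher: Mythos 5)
Your proposal is correct and follows essentially the same route as the paper: a nested law of total variance over $s \sim d_1$ and then $a \sim \pi_b(s)$, the observation that under \cref{asm:perfect-annot} the conditional mean equals $V^{\pi_e}(s)$ independently of the factual action (killing the between-action variance term), the collapse of the conditional variance to $\sum_{a'}\pi_e(a'|s)^2\sigma_R(s,a')^2$ under $\sigma_G^2=\sigma_R^2$, and the same reindexing into $\E_{a\sim\pi_b(s)}[\pi_b(a|s)\rho(a|s)^2\sigma_R(s,a)^2]$. The only difference is presentational — the paper obtains the result by specializing its general eight-term C-IS variance decomposition (\cref{appx-thm:C-IS-variance}) to constant weights, whereas you argue directly for C*-IS — and your explicit flagging of the support condition needed for the final reindexing is a point the paper glosses over.
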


\begin{proposition}
[Variance Reduction of C*-IS] \label{thm:C-IS-variance-reduction}
Under the premise of \cref{thm:C-IS-split-variance}, $\V[\hat{v}^{\textup{C*-IS}}] \leq \V[\hat{v}^{\textup{IS}}]$. 
\end{proposition}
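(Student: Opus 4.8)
The plan is to prove the inequality by a direct, termwise comparison of the two closed-form variance expressions, namely \cref{eqn:IS-variance} for $\mathbb{V}[\hat{v}^{\textup{IS}}]$ and \cref{eqn:C-IS-split-variance} for $\mathbb{V}[\hat{v}^{\textup{C*-IS}}]$, both of which hold under the stated premise. I would first note that for the IS formula to be the genuine variance of an unbiased estimator one should be in the common-support regime (\cref{asm:common-support}, which implies \cref{asm:common-support-cf}), so that the comparison is apples-to-apples between two unbiased estimators and every $\rho(a|s)$ appearing in the formulas is well-defined on the support of $\pi_b$. Observe that both expressions contain the identical leading term $\mathbb{V}_{s \sim d_1}[V^{\pi_e}(s)]$, the irreducible state-distribution randomness; subtracting it from both sides reduces the claim to showing
\[
\mathbb{E}_{s \sim d_1} \mathbb{E}_{a\sim \pi_b(s)}\bigl[ \pi_b(a|s)\, \rho(a|s)^2\, \sigma_R(s,a)^2 \bigr]
\le
\mathbb{E}_{s \sim d_1}\bigl[\mathbb{V}_{a\sim \pi_b(s)}[\rho(a|s) \bar{R}(s,a)] \bigr]
+ \mathbb{E}_{s \sim d_1} \mathbb{E}_{a\sim \pi_b(s)}\bigl[\rho(a|s)^2\, \sigma_R(s,a)^2 \bigr].
\]

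Next I would dispatch the two terms on the right separately. The first, $\mathbb{E}_{s \sim d_1}[\mathbb{V}_{a\sim \pi_b(s)}[\rho(a|s) \bar{R}(s,a)]]$, is an expectation of a variance and hence non-negative, so it can simply be dropped; it therefore suffices to compare the two reward-variance terms. For those I would argue pointwise: since $\pi_b(a|s) \le 1$ for every $(s,a)$, we have $\pi_b(a|s)\, \rho(a|s)^2\, \sigma_R(s,a)^2 \le \rho(a|s)^2\, \sigma_R(s,a)^2$, and applying the monotone operator $\mathbb{E}_{s \sim d_1}\mathbb{E}_{a \sim \pi_b(s)}$ to both sides yields exactly the needed bound between the two reward-variance terms. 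Chaining this with the dropped non-negative policy-variance term gives the displayed inequality, and adding back the common leading term completes the proof.

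Equivalently, and more transparently, I would simplify each reward-variance term before comparing: substituting $\rho(a|s) = \pi_e(a|s)/\pi_b(a|s)$, the C*-IS term collapses to $\mathbb{E}_{s \sim d_1}[\sum_a \pi_e(a|s)^2\, \sigma_R(s,a)^2]$ while the IS term collapses to $\mathbb{E}_{s \sim d_1}[\sum_a \tfrac{\pi_e(a|s)^2}{\pi_b(a|s)}\, \sigma_R(s,a)^2]$, so that $\pi_b(a|s) \le 1$ makes the comparison immediate. I expect no serious obstacle in the calculation itself; the only points requiring care are scope and tightness. On scope, I must make explicit that \cref{eqn:IS-variance} presumes common support, so the statement compares two variances of unbiased estimators. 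On tightness, I would observe that the pointwise inequality is an equality only where $\pi_b(a|s) = 1$ on the support of $\pi_e$ (a deterministic behavior policy), in which case the dropped policy-variance term also vanishes; this clarifies exactly when the reduction is strict and when C*-IS merely matches IS.
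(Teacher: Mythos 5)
Your proof is correct and follows essentially the same route as the paper, which establishes the proposition by the identical term-by-term comparison of \cref{eqn:IS-variance} and \cref{eqn:C-IS-split-variance}: the leading term $\mathbb{V}_{s\sim d_1}[V^{\pi_e}(s)]$ is shared, the policy-variance term of IS is non-negative and absent from C*-IS, and the reward-variance term is scaled pointwise by $\pi_b(a|s)\le 1$. Your added remarks on the common-support scope needed for the IS formula and on when the inequality is tight go slightly beyond the paper's one-line argument but do not change the approach.
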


Comparing \cref{eqn:C-IS-split-variance} with the three terms of the variance decomposition of IS in \cref{eqn:IS-variance}, we note that the first term $\mathbb{V}_{s \sim d_1}[V^{\pi_e}(s)]$ is identical, the dropped second term (of \cref{eqn:IS-variance}) is a non-negative variance term, and the third term is scaled by a factor $\pi_b(a|s) \leq 1$ (for each instantiation of the expression inside the expectation), leading to a guaranteed variance reduction (\cref{thm:C-IS-variance-reduction}). We derive the full variance decomposition for C-IS in \cref{appx-thm:C-IS-variance}. Unlike C*-IS, variance reduction is not guaranteed for C-IS. This is due to additional non-negative terms that depend on the variance/covariance of weights $\wvec$ and terms that depend on the difference in variance between annotations and rewards $\sigma_G(s,a)^2 - \sigma_R(s,a)^2$ (could be positive or negative); these terms all vanish to zero in the case of C*-IS where weights are constant ($1/|\Acal|$) and $\sigma_G(s,a)^2 = \sigma_R(s,a)^2$. 

\subsection{Extensions to C-PDIS}
We note that the corresponding results in the bandit setting can be derived for the MDP setting using an induction-style proof, with similar interpretations of the factors that contribute to reduced bias and reduced variance when compared to standard PDIS. Below, we briefly demonstrate how the unbiasedness result (\cref{thm:C-IS-unbiasedness}) extends to the MDP setting. We further explore the sequential RL setting in the empirical experiments. 

\begin{theorem}[name=Unbiasedness of C-PDIS,restate=thmCPDISunbiasedness]
In the MDP setting, when both \cref{asm:common-support-cf,asm:perfect-annot-rl} hold, the C-PDIS estimator is unbiased, $\E[\hat{v}^{\textup{C-PDIS}}] = v(\pi_e)$. 
\end{theorem}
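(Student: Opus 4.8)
The plan is to prove the claim by backward induction on the time step, reducing each step to the bandit unbiasedness result \cref{thm:C-IS-unbiasedness}. The invariant I would establish is that, for every $t \in \{1,\dots,T\}$, the partial recursion satisfies $\E[v_{T-t+1} \mid s_t] = V_{t:T}^{\pi_e}(s_t)$, where the expectation is taken over the remaining trajectory $(a_t, r_t, s_{t+1}, \dots)$ generated by $\pi_b$ and $\Mcal$, together with the weights and annotations at steps $t,\dots,T$. Applying this to the $t=1$ instance and averaging over $s_1 \sim d_1$ then gives $\E[\hat v^{\textup{C-PDIS}}] = \E[v_T] = \E_{s_1 \sim d_1}[V_{1:T}^{\pi_e}(s_1)] = v(\pi_e)$, which is the desired conclusion.

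For the base case $t=T$, the recursion in \cref{def:C-PDIS} gives $v_1 = w_T^{a_T}\rho_T^{a_T} r_T + \sum_{\tilde a \neq a_T} w_T^{\tilde a}\rho_T^{\tilde a} g_T^{\tilde a}$ since $v_0 = 0$. Conditioned on $s_T$, this is literally a C-IS estimator whose factual reward $r_T$ has mean $\bar{R}(s_T,a_T) = Q_{T:T}^{\pi_e}(s_T,a_T)$ and whose annotations satisfy $\E[g_T^{\tilde a}] = Q_{T:T}^{\pi_e}(s_T,\tilde a) = \bar{R}(s_T,\tilde a)$ by \cref{asm:perfect-annot-rl}. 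Because the ratios $\rho_T^{\tilde a}$ use $\pi_{b^{\smallplus}}$ and \cref{asm:common-support-cf} holds, the state-conditional form of \cref{thm:C-IS-unbiasedness} yields $\E[v_1 \mid s_T] = \E_{a\sim\pi_e(s_T)}[\bar{R}(s_T,a)] = V_{T:T}^{\pi_e}(s_T)$.

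For the inductive step, assume $\E[v_{T-t}\mid s_{t+1}] = V_{t+1:T}^{\pi_e}(s_{t+1})$ and consider $v_{T-t+1} = w_t^{a_t}\rho_t^{a_t}(r_t+\gamma v_{T-t}) + \sum_{\tilde a\neq a_t} w_t^{\tilde a}\rho_t^{\tilde a} g_t^{\tilde a}$. The central manipulation is to first take the conditional expectation over everything occurring strictly after $(s_t,a_t)$: by the Markov property, $r_t$, the downstream return $v_{T-t}$, and the step-$t$ weights and annotations are conditionally independent given $(s_t,a_t)$, so $\E[r_t + \gamma v_{T-t}\mid s_t,a_t] = \bar{R}(s_t,a_t) + \gamma\,\E_{s'\sim p(s_t,a_t)}[V_{t+1:T}^{\pi_e}(s')] = Q_{t:T}^{\pi_e}(s_t,a_t)$, the last equality being the Bellman relation defining $Q_{t:T}^{\pi_e}$. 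After this inner expectation the factual term becomes $w_t^{a_t}\rho_t^{a_t}Q_{t:T}^{\pi_e}(s_t,a_t)$, while each counterfactual term becomes $w_t^{\tilde a}\rho_t^{\tilde a}\,\E[g_t^{\tilde a}] = w_t^{\tilde a}\rho_t^{\tilde a}Q_{t:T}^{\pi_e}(s_t,\tilde a)$ by \cref{asm:perfect-annot-rl}. Conditioned on $s_t$, this is again exactly a bandit C-IS estimator, now with the return-to-go $Q_{t:T}^{\pi_e}(s_t,\cdot)$ playing the role of $\bar{R}(s_t,\cdot)$ and annotations matching it in mean; invoking \cref{thm:C-IS-unbiasedness} conditionally gives $\E[v_{T-t+1}\mid s_t] = \E_{a\sim\pi_e(s_t)}[Q_{t:T}^{\pi_e}(s_t,a)] = V_{t:T}^{\pi_e}(s_t)$, closing the induction.

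The main obstacle I anticipate is the conditional-independence bookkeeping in the inductive step: I must justify that the step-$t$ weights $w_t^{\tilde a}$ and ratios $\rho_t^{\tilde a}$ (which depend only on $s_t$, $a_t$, and the annotation-availability process) are independent, given $(s_t,a_t)$, of both the reward noise in $r_t$ and the downstream value $v_{T-t}$, so that the expectation factorizes and the reduction to the bandit lemma is legitimate. A secondary point requiring care is verifying that \cref{thm:C-IS-unbiasedness} applies \emph{conditionally on $s_t$} with an arbitrary state-dependent effective reward, i.e., that its proof uses only the state-conditional structure (\cref{asm:common-support-cf} and the per-state mean-matching of annotations), which it does; this is what permits substituting $Q_{t:T}^{\pi_e}$ for $\bar{R}$. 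Everything else is routine substitution and the telescoping of the recursion.
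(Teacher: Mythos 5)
Your proposal is correct and follows essentially the same route as the paper's proof: a backward induction establishing that each $v_{T-t+1}$ is an unbiased estimate of the horizon-$t$ value, with each step reduced to the bandit C-IS argument (the paper's proof sketch describes exactly this "one-step bandit" reduction, and its appendix proof carries out the same swap-summations-and-cancel-$\pi_{b^{\smallplus}}$ algebra inline rather than citing \cref{thm:C-IS-unbiasedness} as a black box). Your only deviation is stating the inductive invariant conditionally on $s_t$ rather than marginally over $d_t$; this is a harmless (arguably cleaner) strengthening that the paper's own inductive step implicitly relies on when it replaces $\E_{s'\sim d_{t+1}}$ with $\E_{s'\sim p(\cdot|s,a)}$.
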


\begin{proof}[Proof Sketch]
\vspace{-\topsep}
Here, we explain the intuition behind the proof for C*-PDIS, which proceeds via a backward induction (\cref{fig:C-PDIS-illustration}). In the recursive definition (\cref{def:C-PDIS}), we aim to show that every $v_{T-t+1}$ is an unbiased estimator of the horizon-$t$ value function. At each horizon $t$, we can view the problem as a one-step bandit problem (reducing the estimator to C*-IS), where the factual action leads to a factual trajectory and the counterfactual action(s) leads to the counterfactual annotation(s), both of which are used to construct unbiased estimates of horizon-$t$ Q-values (for $a_t$ and $\tilde{a}$, respectively). In the end, the estimates from the two branches are combined according to $\pi_e$, resulting in the correct expectation of the horizon-$t$ value of state $s$. See full proof for C-PDIS in \cref{appx:C-PDIS-bias-proof}. 
\end{proof}

\begin{figure}[h]
\vspace{-1em}
\begin{minipage}[c]{0.46\textwidth}
    \includegraphics[scale=0.95, trim=15 5 0 5]{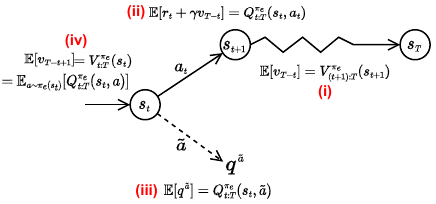}
\end{minipage}
\hfill
    \begin{minipage}[c]{0.52\textwidth}
\centering
\captionof{figure}{Illustration of proof idea. (i) The factual estimate $v_{T-t}$ provides an unbiased estimate of the horizon-$(t+1)$ value of state $s_{t+1}$. (ii) When combined with the factual reward $r_t$, we obtain an unbiased estimate of the horizon-$t$ Q-value of $(s_t,a_t)$. (iii) By assumption, the counterfactual annotations provide unbiased estimates of the horizon-$t$ Q-value of $(s_t,\tilde{a})$. (iv) The factual and counterfactual estimates are combined using $\pi_e$. For clarity, we omit details about state distributions here (see appendix). }
\label{fig:C-PDIS-illustration}
\end{minipage}
\vspace{-1em}
\end{figure}

\vspace{-0.3em}
\subsection{Practical Implications} \label{sec:practical}
\vspace{-0.5em}
So far, we have been focusing on the theoretical framework for incorporating counterfactual annotations into OPE. However, the actual implementation of this approach poses several practical challenges. We believe this underscores the fact that this is a rich area for research with many potential directions. In this section, we address several real-world scenarios that do not adhere to the theoretical assumptions -- specifically, when annotations are biased, noisy, or missing. 

\vspace{-0.15em}
\textbf{Correcting annotation bias.} Comparing \cref{asm:perfect-annot,asm:perfect-annot-rl}, we note an important distinction between the bandit setting and the sequential RL setting. For bandits, we simply want $G$ to mimic the reward model $R$. In contrast, for the RL setting, $G_t$ should ideally mimic the Q-function of the evaluation policy $Q^{\pi_e}_{t:T}$. Such annotations can be difficult if not impossible to obtain in practice (e.g., for healthcare, it would require asking clinicians to reason about a \textit{sequence} of counterfactual actions and predict the outcome). Thus, we additionally consider a relaxation of \cref{asm:perfect-annot-rl} where instead $\mathbb{E}_{g \sim G_{t}(s,a)}[g] = Q^{\pi_b}_{t:T}$ reflecting expected return under the behavior policy (which is more likely in practice). This annotation bias $\epsilon_{G} = Q^{\pi_b}_{t:T} - Q^{\pi_e}_{t:T}$ results in a biased estimator (\cref{thm:C-IS-bias-annot}). To correct this bias, we suggest first estimating the annotation bias function $\hat{\epsilon}_{G}(s,a)$ using an approximate MDP built from offline data, and then mapping each annotation as $\hat{g}^{\tilde{a}} = g^{\tilde{a}} - \hat{\epsilon}_G(s,\tilde{a})$ (see \cref{appx:practical-annot-behavior-IS} for details). In \cref{sec:experiments-sepsisSim}, we empirically measure the impact of this alternative assumption and the bias-correction procedure (which we denote as $\smash{Q^{\pi_b} \mapsto \hat{Q}^{\pi_e}}$) on OPE performance. 

\vspace{-0.15em}
\textbf{Reweighting noisy annotations.} So long as \cref{asm:perfect-annot} or \labelcref{asm:perfect-annot-rl} is satisfied, the noise (i.e., variance) of the annotations does not affect unbiasedness; however, as shown in \cref{appx-thm:C-IS-variance}, the variance of our proposed estimators directly depends on how noisy the annotations are. Intuitively, if the annotation variance is smaller than the reward variance, we want the final estimate to ``listen'' more to the annotations and less to the factual data (and vice versa). We empirically explore the impact of annotation noise in both \cref{sec:experiment,appx:experiment}, noting that while using equal weights (as in C*-IS) outperforms standard IS, adjusting weights based on the relative magnitudes of $\sigma_{R}^2$ and $\sigma_{G}^2$ can further improve OPE performance. 

\vspace{-0.15em}
\textbf{Imputing missing annotations.} For many real-world domains, it is unlikely that we can obtain an annotation for every counterfactual action at every time step (the total number of annotations needed is $(|\Acal|-1)NT$). While desirable to use equal weights as in C*-IS (and C*-PDIS) due to its variance reduction guarantee, this is not possible if some annotations are missing (in which case the factual data must have $w^{a} = 1$), and this can actually lead to higher variance (see \cref{appx:practical-missing} for an example). To alleviate this variance increase, we suggest estimating an annotation model $\smash{\hat{G}}$ from available annotations and using it to impute the missing annotations (see \cref{appx:practical-missing}). Although $\smash{\hat{G}}$ may be a biased estimator of $G$ (due to annotation noise) and introduce additional bias to the final estimate, we empirically observe a favorable bias-variance trade-off (\cref{sec:experiments-sepsisSim}). 

In general, one would expect more counterfactual annotations to reduce both bias and variance; at the same time, these annotations may be imperfect (biased or noisy) and directly increase bias or variance. Our analyses in \cref{sec:theory} and \cref{appx:proof}  show that both bias and variance depend on weights $\wvec$, suggesting the weighting scheme as a key mechanism to achieve optimal bias and variance. In \cref{appx:practical-weights} we explore an analytical approach for solving the variance-minimizing weighting scheme and note the solution is highly non-trivial. We empirically explore the impact of different weights in \cref{appx:experiment-bandits} and note that using equal weights (as in C*-IS) is a promising heuristic, since it achieves good performance in most settings. We believe that optimizing the weights can further improve OPE performance and is an interesting direction for future work.

\vspace{-0.4em}
\section{Experiments} \label{sec:experiment}
\vspace{-0.6em}

First, through a suite of simple bandit problems, we verify the theoretical properties of C-IS. Then, we apply our approach to a healthcare-inspired RL simulation domain, where we compare the performance of our proposed approach, C-PDIS, to several baselines in terms of their OPE accuracy and ability to rank policies, and explore robustness to bias, noise, and missingness in the annotations.

\vspace{-0.5em}
\subsection{Synthetic Domains - Bandits} \label{sec:experiments-bandits}
\vspace{-0.5em}

We consider a class of bandit problems with two states $\{s_1, s_2\}$ (drawn with equal probability), two actions $\Acal = \{\nearrow, \searrow\}$ (recall \cref{fig:dataset-example}), and corresponding reward distributions $R(s_i, a) \sim \mathcal{N}(\bar{R}_{(s_i,a)}, \sigma^2)$. Without loss of generality, we assume $\nearrow$ is always taken from $s_2$ by both $\pi_b$ and $\pi_e$. For $s_1$, we consider deterministic policies in which one action is always taken, and a stochastic policy that takes the two actions with equal probability (see column/row header in \cref{tab:bandit-results}). Given $(\pi_b, \pi_e)$, we draw $1,000$ samples following $\pi_b$ and then evaluate $\pi_e$ using various estimators, including standard IS, the naive baseline of adding counterfactual annotations as new samples (\cref{sec:intuition}), and C*-IS. We assume that counterfactual annotations are only available for $s_1$, and all annotations are drawn from the true reward function. We measure the bias, standard deviation (std, the square root of variance), and root mean-squared error (RMSE) of the estimators with respect to $v(\pi_e)$. 

\begin{wraptable}{r}{0.425\textwidth}
\centering
\vspace{-\intextsep}
\caption{Summary of performance on the bandit problem for various $\pi_b$ (rows) and $\pi_e$ (columns), where each policy is denoted by its probabilities assigned to the two actions from $s_1$. Each cell of the table corresponds to a $(\pi_b, \pi_e)$ combination, for which we report \textsf{(bias, std, RMSE)} for three estimators: IS in the top row, naive in the middle row, and C*-IS in the bottom row. Settings with $\pi_b(s_1) = [0,1]$ are omitted due to symmetry. }
\label{tab:bandit-results}
\renewcommand{\arraystretch}{1}
\setlength\arraycolsep{2pt}
\setlength{\tabcolsep}{4pt}
\scalebox{0.9}{
\begin{tabular}{c|ccc}
\toprule
\diagbox{$\pi_b$}{$\pi_e$} & $[\mask{0.0}{1},\mask{0.0}{0}]$ & $[\mask{0.0}{0},\mask{0.0}{1}]$ & $\phantom{0}[0.5,0.5]\phantom{.}$ \\
\midrule
$[\mask{0.0}{1},\mask{0.0}{0}]$ 
& - 
& \scalebox{0.8}{$\begin{matrix} \mask{0.0}{-1} & \mask{0.0}{0.6} & \mask{0.0}{1.2} \\ \mask{0.0}{0.2} & \mask{0.0}{1.8} & \mask{0.0}{1.8} \\ \mask{0.0}{0} & \mask{0.0}{0.7} & \mask{0.0}{0.7} \end{matrix}$} 
& \scalebox{0.8}{$\begin{matrix} \mask{-0.0}{-0.5} & \mask{0.0}{0.5} & \mask{0.0}{0.7} \\ \mask{-0.0}{0.1} & \mask{0.0}{0.7} & \mask{0.0}{0.7} \\ \mask{0.0}{0} & \mask{0.0}{0.5} & \mask{0.0}{0.5} \end{matrix}$} \\[12pt]
$[0.5,0.5]$ 
& \scalebox{0.8}{$\begin{matrix} \mask{0.0}{0} & \mask{0.0}{0.9} & \mask{0.0}{0.9} \\ \mask{0.0}{0} & \mask{0.0}{1} & \mask{0.0}{1} \\ \mask{0.0}{0} & \mask{0.0}{0.5} & \mask{0.0}{0.5} \end{matrix}$} 
& \scalebox{0.8}{$\begin{matrix} \mask{0.0}{0} & \mask{0.0}{1.6} & \mask{0.0}{1.6} \\ \mask{0.0}{0.2} & \mask{0.0}{1.8} & \mask{0.0}{1.8} \\ \mask{0.0}{0} & \mask{0.0}{0.7} & \mask{0.0}{0.7} \end{matrix}$} 
& - \\
\bottomrule
\end{tabular}
}
\vspace{-\intextsep}
\end{wraptable}

\textbf{Naive baseline fails due to bias, whereas C*-IS can reduce bias and/or variance compared to IS.} In \cref{tab:bandit-results}, we display the results for $\smash{\bar{R}_{(s_1,\nearrow)}} = 1$, $\smash{\bar{R}_{(s_1,\searrow)}} = 2$, $\smash{\bar{R}_{(s_2,\cdot)}} = 1$, $\sigma = 0.5$ (other settings in \cref{appx:experiment-bandits}). The naive baseline fails to improve upon IS (and often underperforms IS in terms of RMSE) and can have a nonzero bias even when IS is unbiased (\cref{tab:bandit-results}, row 2, column 2). C*-IS achieves a lower RMSE than IS across all settings considered. The benefits of C*-IS align with our theoretical analyses. (i) \textit{Bias Reduction for Support-Deficient Data.} When $\pi_b$ is deterministic (first row), the unselected action has poor support and IS has a nonzero bias whereas C*-IS is unbiased. Though C*-IS sometimes has a larger variance than IS, the bias reduction outweighs the variance increase and leads to overall lower RMSE. (ii) \textit{Variance Reduction for Well-Supported Data.} In the second row, data generated by $\pi_b$ has full support. Both IS and C*-IS are unbiased, but C*-IS leverages counterfactual annotations to achieve lower variance and lower RMSE. 

We vary the assumptions (e.g., weights in C-IS, noisy/missing annotations) and present further experiments in \cref{appx:experiment-bandits}. These results suggest that (i) \textbf{Equal weights (in C*-IS) is a good heuristic though not always ``optimal''} and (ii) \textbf{Imputing missing annotations can reduce variance}.

\vspace{-0.5em}
\subsection{Healthcare Domain - Sepsis Simulator} \label{sec:experiments-sepsisSim}
\vspace{-0.5em}

Next, we apply our approach to evaluate policies in a simulated RL domain modeled after the physiology of sepsis patients \citep{oberst2019counterfactual}. Following prior work \citep{tang2021model}, we collected $50$ offline datasets from the sepsis simulator (using different random seeds) each with $1000$ episodes by following an $\epsilon$-greedy behavior policy with respect to the optimal policy where $\epsilon=0.1$. We considered a set of deterministic policies (including the optimal policy) as evaluation policies, which have different performance and varying degrees of similarity vs behavior. We compared our proposed estimator (with different annotation functions) with a set of baselines, including standard PDIS (without annotations) and two naive baselines (with perfect annotations): ``naive unweighted'' simply adds counterfactual annotations as new trajectories and has the same issue discussed in \cref{sec:intuition}, whereas ``naive weighted'' reweights the annotations at the trajectory level instead of per-decision. See \cref{appx:experiment-sepsisSim} for detailed experimental setup. As the main OPE metric, we report RMSE of value estimates vs true values as well as the effective sample size (ESS). Additionally, we report metrics for two downstream uses of OPE for model selection. (i) When used to rank policies. We report the Spearman's rank correlation between $\hat{v}(\pi_e)$ and $v(\pi_e)$ (computed over all $\pi_e$'s) \citep{tang2021model,paine2020hyperparameter}. (ii) When used to determine whether $\pi_e$ is better or worse than $\pi_b$. We formulate a binary classification problem of $v(\pi_e) \geq v(\pi_b)$ vs $v(\pi_e) < v(\pi_b)$, and report the accuracy, false positive rate (FPR) and false negative rate (FNR).

\textbf{C*-PDIS outperforms all baselines across all metrics in the ideal setting.} As shown in \cref{tab:sepsisSim-results}, when all counterfactuals are available and annotated with the evaluation policy's Q-function ($G = Q^{\pi_e}$), C*-PDIS outperforms baseline PDIS (without annotations) in all metrics, demonstrating that it provides more accurate OPE estimates. In contrast, the two naive approaches fail to provide accurate estimates and often underperform standard PDIS. 

\textbf{C*-PDIS is robust to biased annotations.} Under the more realistic scenario where $G = Q^{\pi_b}$, i.e., annotations summarize the future returns under $\pi_b$ rather than $\pi_e$, we observe a degradation in all metrics compared to the ideal case, though C*-PDIS is still superior to PDIS (\cref{tab:sepsisSim-results}). Applying the bias correction procedure ($G = Q^{\pi_b} \mapsto \smash{\hat{Q}^{\pi_e}}$, see \cref{appx:practical-annot-behavior-IS}) helps in recovering performance closer to the ideal case, especially when $\pi_e$ is far from $\pi_b$ (\cref{fig:sepsisSim-noisy-missing}-left). 

\begin{table}[t]
    \centering
    \caption{Comparison of baseline and proposed estimators in terms of OPE performance (RMSE, ESS), ranking performance (Spearman's rank correlation) and binary classification performance (accuracy, FPR, FNR) on the sepsis simulator, reported as mean $\pm$ std from 50 repeated runs. \textbf{Bolded} results are the best for each metric, whereas \hlc[yellow!40]{highlighted} results outperform all baselines. }
    \label{tab:sepsisSim-results}
\scalebox{0.75}{
\begin{tabular}{l|l|ccccccl}
    \cmidrule[\heavyrulewidth]{1-8}
    \multicolumn{2}{c|}{\textbf{Estimator}} & $\downarrow$ \textbf{RMSE} & $\uparrow$ \textbf{ESS} & $\uparrow$ \textbf{Spearman} & $\uparrow$ \textbf{\%Accuracy} & $\downarrow$ \textbf{\%FPR} & $\downarrow$ \textbf{\%FNR} \\
    \cmidrule{1-8}
    \multirow{3}{*}{\rotatebox[origin=c]{90}{\small Baseline}} 
    & PDIS (w/o annot.) & 0.113 {\scriptsize $\pm$0.038} & \phantom{0}76.8 {\scriptsize $\pm$44.0} & 0.596 {\scriptsize $\pm$0.110} & 76.5 {\scriptsize $\pm$3.5} & 33.7 {\scriptsize $\pm$8.7\phantom{0}} & 15.9 {\scriptsize $\pm$4.6\phantom{0}} \\
    & Naive unweighted \scalebox{0.75}{($G = Q^{\pi_e}$)} & 0.128 {\scriptsize $\pm$0.006} & 207.2 {\scriptsize $\pm$91.5} & 0.089 {\scriptsize $\pm$0.089} & 50.0 {\scriptsize $\pm$6.0} & 11.6 {\scriptsize $\pm$8.3\phantom{0}} & 78.1 {\scriptsize $\pm$13.6} \\
    & Naive weighted \phantom{un}\scalebox{0.75}{($G = Q^{\pi_e}$)} & 0.097 {\scriptsize $\pm$0.006} & 300.8 {\scriptsize $\pm$117.6\!\!\!} & 0.420 {\scriptsize $\pm$0.097} & 64.3 {\scriptsize $\pm$4.7} & 24.0 {\scriptsize $\pm$12.7} & 44.3 {\scriptsize $\pm$11.4} \\
    \cmidrule{1-8}
    \multirow{3.5}{*}{\rotatebox[origin=c]{90}{\small Proposed}} 
    & C*-PDIS \scalebox{0.75}{($G = Q^{\pi_e}$)} \hspace{2.5em} & \hlc[yellow!40]{\textbf{0.013}} {\scriptsize $\pm$0.005} & \hlc[yellow!40]{\textbf{994.0}} {\scriptsize $\pm$10.1} & \hlc[yellow!40]{\textbf{0.995}} {\scriptsize $\pm$0.003} & \hlc[yellow!40]{\textbf{95.7}} {\scriptsize $\pm$3.1} & \phantom{0}\hlc[yellow!40]{4.5} {\scriptsize $\pm$6.9\phantom{0}} & \phantom{0}\hlc[yellow!40]{\textbf{4.2}} {\scriptsize $\pm$5.3\phantom{0}} & \rdelim\}{1}{0mm}[\ \raisebox{1pt}{\scalebox{0.75}{$\bigstar$}} \footnotesize ideal case] \\
    \cmidrule{2-8}
    & C*-PDIS \scalebox{0.75}{($G = Q^{\pi_b}$)} & \hlc[yellow!40]{0.070} {\scriptsize $\pm$0.003} & \hlc[yellow!40]{\textbf{994.0}} {\scriptsize $\pm$10.1} & \hlc[yellow!40]{0.961} {\scriptsize $\pm$0.011} & \hlc[yellow!40]{86.8} {\scriptsize $\pm$8.2} & 22.0 {\scriptsize $\pm$20.1} & \phantom{0}\hlc[yellow!40]{8.2} {\scriptsize $\pm$11.3} & \rdelim\}{2}{0mm}[\footnotesize \makecell{relaxing \\{\crefname{assumption}{Assump.}{Assump.} \cref{asm:perfect-annot-rl}}}] \\
    & C*-PDIS \scalebox{0.75}{($G = Q^{\pi_b} \mapsto \hat{Q}^{\pi_e}$)} & \hlc[yellow!40]{0.028} {\scriptsize $\pm$0.007} & \hlc[yellow!40]{\textbf{994.0}} {\scriptsize $\pm$10.1} & \hlc[yellow!40]{0.979} {\scriptsize $\pm$0.010} & \hlc[yellow!40]{90.1} {\scriptsize $\pm$5.4} & \phantom{0}\hlc[yellow!40]{\textbf{4.2}} {\scriptsize $\pm$6.6\phantom{0}} & \hlc[yellow!40]{14.1} {\scriptsize $\pm$9.7\phantom{0}} \\
    \cmidrule[\heavyrulewidth]{1-8}
\end{tabular}}
\end{table}

\begin{figure}[t]
    \centering
    \def\arraystretch{0.5}
    \setlength{\tabcolsep}{2pt}
    \vspace{-0.75em}
    \begin{tabular}{ccc}
    \multirow{2}[2]{*}[5mm]{\includegraphics[scale=0.51]{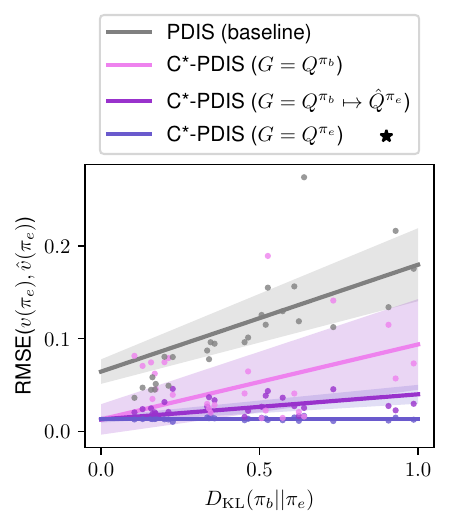}\hspace{1.5em}} & \multicolumn{2}{c}{\hspace{2em}\includegraphics[scale=0.51,trim=0 10 0 0]{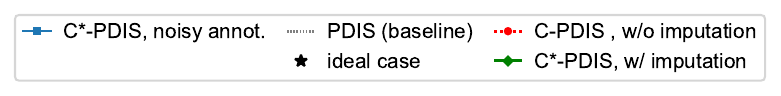}} \\
    & \includegraphics[scale=0.51]{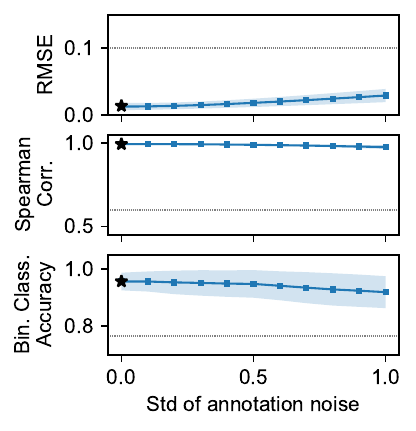} & \includegraphics[scale=0.51]{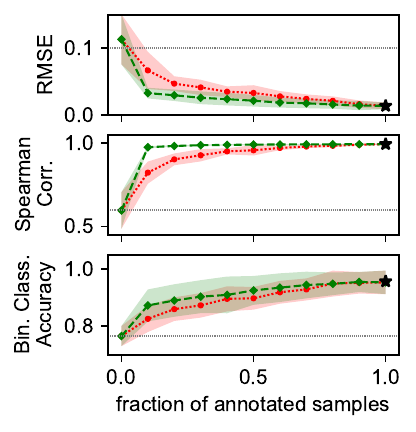}
    \vspace{-0.7em}
    \end{tabular}
    \caption{(Left) RMSE of C*-PDIS vs. distance to $\pi_b$ (in terms of KL divergence) for each $\pi_e$, plotted with linear trend lines. OPE error increases as $\pi_e$ becomes more different from behavior. (Center\&Right) Performance of our proposed approach under noisy and missing annotations. Trend lines show average of 50 runs $\pm$ one std. C*-PDIS is generally robust to noise, and imputing the missing annotations can help maintain competitive performance (relative to the ideal setting) even in the presence of high degrees of missingness. }
    \label{fig:sepsisSim-noisy-missing}
    \vspace{-1.8em}
\end{figure}

\textbf{Variance reduction of C*-PDIS outweighs the effect of noisy annotations.} When annotations are perturbed with increasing amounts of noise (\cref{fig:sepsisSim-noisy-missing}-center), performance degradation is minimal even at the highest level of noise tested (with a std of 1, which is large relative to the reward range $[-1,1]$, and larger than $0.31$ the std of initial state values). Our estimator remains competitive relative to baselines, suggesting that the benefit of variance reduction from additional data (through counterfactual annotations) outweighs the variance increase from annotation noise, even when annotations are much noisier than factual data. See \cref{appx:experiment-sepsisSim} for variations of this experiment. 

\textbf{Collecting more annotations and imputing missing annotations improves performance.} As the amount of available annotations increases (\cref{fig:sepsisSim-noisy-missing}-right), our approach interpolates between baseline PDIS and the ideal case of C*-PDIS with a monotonic improvement in performance. Furthermore, imputing annotations achieves better performance, suggesting it is a promising strategy when not all annotations can be collected in practice. See \cref{appx:experiment-sepsisSim} for variations of this experiment where the imputed annotations have varying degrees of bias due to annotation noise.

\vspace{-1.em}
\section{Related Work} \label{appx:related}
\vspace{-1.1em}
There is a rich literature on statistical methods for offline policy evaluation (OPE), including direct methods (DM), importance sampling (IS), and doubly-robust (DR) approaches \citep{voloshin2021empirical,paine2020hyperparameter}. DM directly uses offline data to learn the value function of the evaluation policy (e.g., using model-based or model-free approaches) \citep{paduraru2012off,le2019batch}. We do not consider DM in our work since it often involves function approximators whose bias and variance may be difficult to analyze \citep{parbhoo2020shaping}. On the other hand, IS uses a weighted average of trajectory returns to correct the distributional mismatch between the evaluation and behaviour policies \citep{precup2000eligibility,dudik2014doubly}. Our proposed estimator directly builds on IS and uses counterfactual annotations to reduce its variance (and bias), while carefully addressing the nuances involved with reweighting the counterfactual annotations to maintain the unbiasedness property. Finally, DR approaches combine IS and DM and reduce the variance of IS by using the estimates from DM as a control variate \cite{dudik2014doubly,thomas2016data,jiang2016doubly,farajtabar2018more}. Our approach is a complementary source of variance reduction and may be combined with DR approaches by modifying our current definitions to include a control variate. We note that other approaches exist for managing the variance of IS in long-horizon settings by considering the stationary or marginalized state distributions \citep{liu2018breaking,xie2019towards,nachum2019dualdice}. Incorporating counterfactual annotations into these estimators is an interesting direction of future research. 
\vspace{-0.4em}

Our work focuses on OPE rather than policy learning, but the broader theme of human input for RL has recently gained renewed attention, particularly in natural language processing tasks \cite{stiennon2020learning,ramamurthy2023is,anonymous2023boosting}. In many of these problems, human input is in the form of preferences (or rankings) over actions, states, or (sub-)trajectories \citep{wirth2017survey}. Other related annotation approaches also exist in non-RL areas, where past work has proposed to ask annotators to alter text to match a counterfactual target label \citep{Kaushik2020Learning} and incorporating annotations of potential unmeasured confounders \citep{srivastava2020robustness}. In contrast, our work investigates the role of a specific form of human input, counterfactual annotations in offline RL, in improving OPE performance. We focus on offline \textit{evaluation} due to its practical importance in high-stakes RL domains such as healthcare, though our insights could also potentially benefit offline \textit{learning} in these domains. While we did not discuss how counterfactual annotations are obtained, our theoretical analysis establishes a thorough understanding of their desirable characteristics. This can help motivate methods for converting different forms of human feedback into useful counterfactual annotations (e.g., learning reward/annotation models from human preferences \citep{stiennon2020learning}). Conversely, progress in the field of preference learning and learning-to-rank may benefit our approach by providing mechanisms to solicit high-quality annotations \citep{zhu2023principled}.

\vspace{-1.em}
\section{Discussion \& Conclusion}
\vspace{-1.1em}
In this paper, we propose a novel semi-offline policy evaluation framework that incorporates counterfactual annotations into traditional IS estimators. We emphasize that the naive approach of viewing annotations as additional data can lead to bias and propose a simple reweighting scheme to address this issue. We formally study the theoretical properties of our approach, identifying scenarios where bias and variance reduction is guaranteed. Driven by a deep understanding of these theoretical properties, we further propose practically motivated strategies to handle biased, noisy, or missing annotations. Through proof-of-concept experiments on bandits and a healthcare-inspired RL simulator, we demonstrate that our approach outperforms standard IS and is robust to imperfect annotations. Our semi-offline framework serves as an intermediate step between offline and online evaluations and has the potential to enable practical applications of RL in high-stakes domains. Though motivated by current limitations of offline evaluation, we caution that our approach is not meant to replace existing OPE methods, but rather to complement them. Collecting annotations from domain experts comes at a cost (of real human time and labor), and thus, our approach should only be applied after a policy has passed all checks on retrospective data. While not explored in this paper, future work should focus on assessing the quality of counterfactual annotations in the domains of interest through human experiments. See \cref{appx:discussion} for more detailed discussions on limitations, societal impacts, and future directions. Overall, we believe our contributions will inspire further investigations into the practical obstacles that emerge in semi-offline evaluation (e.g., devising human-centered strategies for soliciting counterfactual annotations that align with theoretical assumptions) and will bring RL closer to reality in healthcare and other high-stakes decision-making domains. 

\section*{Acknowledgments}
This work was supported by the National Library of Medicine of the National Institutes of Health (grant R01LM013325 to JW). The views and conclusions in this document are those of the authors and should not be interpreted as necessarily representing the official policies, either expressed or implied, of the National Institutes of Health. The authors would like to thank Michael Sjoding and members of the \href{https://wiens-group.engin.umich.edu/}{MLD3 group} for helpful discussions regarding this work, as well as the anonymous reviewers for constructive feedback.

\section*{Data and Code Availability}
The code for all experiments is available at \url{https://github.com/MLD3/CounterfactualAnnot-SemiOPE}.

{\small
\bibliography{ref}
\bibliographystyle{unsrtnat}
}

\newpage
\appendix

\section{Further Discussions} \label{appx:discussion}

\textbf{Limitations.}
This work proposes and theoretically studies a new framework of semi-offline evaluation of RL policies. While our core idea is to incorporate human annotations into the evaluation process, we did not make use of real human annotations in our experiments and relied on simulations instead. Given the costs associated with user studies, we opted not to conduct user studies before we have a thorough understanding of when and why our approach works (or does not work) and how it can be implemented in practice. Our present paper focuses on describing and analyzing a formal mathematical framework of counterfactual annotations, which provides valuable insights as to what types of annotations are useful, how they should be incorporated into OPE, and which factors impact performance (see method description in \cref{sec:method} and theoretical analyses in \cref{sec:theory}). Our experiments further demonstrate the robustness of our approach when we deviate from the ideal setting (see \cref{sec:experiment}). Despite best efforts, our experiments do not capture all possible scenarios in the real world. We list several important practical implications in \cref{sec:practical} and encourage future research into these directions. Additional areas of interest that we did not explore include: alternative forms of annotations (e.g., preference, immediate reward), whether these other annotation types could be converted to match our assumptions so that our framework still applies, how best to design the question phrasing for the annotators, evaluating annotation quality, targeted annotation solicitation and optimizing annotation solicitation under budget constraint.

\textbf{Ethical Considerations and Societal Impact.}
Since our experiments only involved simulations and no real humans, we did not pose immediate ethical concerns or safety threats. Nonetheless, in high-stakes decision-making tasks such as healthcare, computationally-derived RL policies must be carefully validated before their final adoption. While we are motivated by the current limitations of standard offline evaluation methods, we caution that our approach is not meant to replace, but rather to complement, existing OPE approaches. Collecting these annotations from real human domain experts comes at a cost, and thus we recommend applying our approach only when a policy has passed all checks on retrospective data. We note that compared to traditional qualitative evaluations (e.g., typically done by asking domain experts whether RL recommendations make sense), our approach is less likely (though still possible) to affect the existing decision making or suffer from confirmation bias, since we do not need to reveal the evaluation policy at the annotation collection stage. Future work that adopts our framework should carefully design the annotation solicitation process (including when and how the question is posed to the annotators) so as to achieve safe, non-disruptive evaluations of offline RL policies before their prospective use. Additionally, more work is needed to understand the extent to which humans can provide accurate annotations of counterfactuals. These human experiments should be addressed in the context of specific application domains and the target groups of expert human annotators. 

\textbf{Future Directions on Counterfactual Annotations.} Our present paper establishes important theoretical groundwork for using counterfactual annotations in semi-offline policy evaluation. However, we did not collect any real annotations with human annotators as those are outside of the scope of the main research question we seek to address in our current paper. To fully realize the impact of this work, real-world human experiments would be the necessary next step, with a focus on obtaining faithful annotations (with small bias and small noise). More specifically, it would be important to empirically measure if and how various factors influence annotation quality, including: horizon $t$ (annotating beginning of an episode or the terminal step), the inherent stochasticity associated with the annotated state-action pair and how often they appear in data (under $\pi_b$). Equipped with this knowledge, it is then important to select which annotations to prioritize given a limited annotation budget. Many of these questions delve into the realm of HCI and are outside the scope of this paper's methodological contributions for offline RL and OPE, and we encourage researchers and practitioners in different research areas (e.g., RL, HCI, healthcare, education) to build upon the ideas in our work.

\newpage
\section{Toy Example for Intuition} \label{appx:intuition}

\begin{figure}[h]
    \centering
    \includegraphics[scale=0.75,trim={5 0 5 15}]{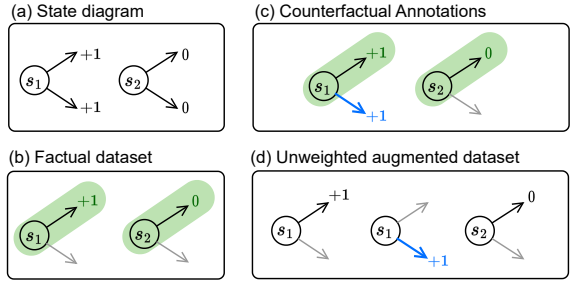}
    \caption{For ease of reference, \cref{fig:dataset-example} is reproduced here. (a) The state diagram of a bandit problem with two states and two actions. (b) A {\definecolor{xxx}{HTML}{BDE2B2}\sethlcolor{xxx}\hl{factual dataset}} containing two samples. (c) The factual samples augmented with {\definecolor{yyy}{HTML}{016FFF}\color{yyy} counterfactual annotations}. (d) The (unweighted) augmented dataset constructed from factual samples and counterfactual annotations. Compared to the factual dataset, the relative frequency of $s_1$ vs $s_2$ changed from $1:1$ to $2:1$. }
    \label{appx-fig:dataset-example}
\end{figure}

Recall the example discussed in \cref{sec:intuition} (figure reproduced here in \cref{appx-fig:dataset-example}). The bandit problem has two states $\{s_1,s_2\}$ (drawn with equal probability) and two actions, up ($\nearrow$) and down ($\searrow$). The reward from $s_1$ is $+1$ and from $s_2$ is $0$ (i.e., rewards do not depend on the action), meaning all policies have an expected reward of $0.5$. Suppose the behavior policy always selects $\nearrow$, generating a dataset with poor support for policies that assign nonzero probabilities to $\searrow$ (\cref{appx-fig:dataset-example}b). Now suppose we also have access to human-provided annotations of counterfactual actions, but not all counterfactual annotations are available (either because they were never queried or the users declined to provide annotations). In our example (\cref{appx-fig:dataset-example}c), one annotation is collected for the counterfactual action $\searrow$ at state $s_1$, indicating that the human annotator believes the reward for taking action $\searrow$ from state $s_1$ is $+1$. To make use of this information, one might consider adding the counterfactual annotation as a new sample. The augmented dataset (\cref{appx-fig:dataset-example}d) would allow us to evaluate policies (e.g., using IS) that assign non-zero probabilities to $\searrow$ in state $s_1$. Unfortunately, this naive approach leads to biased results, which we walk through in detail below. 

Consider an evaluation policy that takes $\searrow$ in $s_1$ and $\nearrow$ in $s_2$, i.e., $\pi_e(s_1) = [0,1]$ and $\pi_e(s_2) = [1,0]$ (denoted in terms of the probabilities assigned to the two actions, $\nearrow$ and $\searrow$). 
\begin{itemize}
    \item If we use the original $\pi_b$ in the factual dataset (\cref{appx-fig:dataset-example}b) where $\pi_b(s_1) = [1,0]$, $\pi_b(s_2) = [1,0]$, the IS estimator is ill-defined because we will encounter a divide-by-zero error:
    \begin{align*}
    \frac{1}{3} \left( \frac{0}{1}\times(+1) + \frac{1}{0}\times(+1) + \frac{1}{1}\times(0) \right) = \text{undefined}
    \end{align*}
    \item Instead, one may consider the behavior policy in the augmented dataset (\cref{appx-fig:dataset-example}d), which gives $\pi_b(s_1) = [0.5,0.5]$, $\pi_b(s_2) = [1,0]$. 
    The IS estimate is:
    \begin{align*}
    \frac{1}{3} \left( \frac{0}{0.5}\times(+1) + \frac{1}{0.5}\times(+1) + \frac{1}{1}\times(0) \right) = \frac{2}{3}
    \end{align*}
\end{itemize}

However, as stated above, all policies should have a value of $0.5$, meaning that $\frac{2}{3}$ is a biased estimate. The core of the issue is because directly adding counterfactual annotations inadvertently changes the state distribution and results in a dataset inconsistent with the original problem. In particular, comparing \cref{appx-fig:dataset-example}b vs \labelcref{appx-fig:dataset-example}d, the relative frequency of $s_1$ vs $s_2$ has changed from $1:1$ to $2:1$. 

Our proposed estimator addresses this issue by reweighting the factual data and counterfactual annotations in order to maintain the state distribution. Suppose we assign a weight $\alpha$ to $(s_1, \nearrow)$ and $1-\alpha$ to $(s_1, \searrow)$ for some $\alpha \in (0,1)$, i.e., the two non-negative weights associated with $s_1$ sum to $1$. The sample $(s_2, \nearrow)$ receives a weight of $1$ by default since it does not have an associated counterfactual annotation. In this way, the state distribution remains equally split between $s_1$ and $s_2$. 

Using \cref{def:aug-beh}, we can calculate the augmented behavior policy $\pi_{b^{\smallplus}}(s_1) = [\alpha,1-\alpha]$, $\pi_{b^{\smallplus}}(s_2) = [1,0]$. Applying C-IS as defined in \cref{def:C-IS-weights}, we see our proposed approach produces the correct, unbiased estimate of $0.5$:

\begin{align*}
\text{for $s_1$, }& \left( \alpha \times \frac{0}{\alpha}\times(+1) + (1-\alpha) \times \frac{1}{1-\alpha}\times(+1)\right) = 1 \\
\text{for $s_2$, }& \left( 1 \times \frac{1}{1}\times(0) \right) = 0 \\
\text{overall: } & \frac{1}{2} (1+0) = 0.5
\end{align*}

\subsection{Another Example Illustrating the Weighting Scheme} \label{appx:intuition-weights}

Suppose similar to \cref{appx-fig:dataset-example}, we now have a bandit problem with only a single state $\{s\}$ such that all policies have an expected reward of $1$. Suppose the behavior policy always selects $\nearrow$, generating a dataset containing two samples: sample \#1 $(s, \nearrow)$ with an annotation for $\searrow$, sample \#2 is simply $(s, \nearrow)$ with no annotation for $\searrow$. In applying our approach, for sample \#1 we can assign weights $[\alpha, 1-\alpha]$ for some $\alpha \in (0,1)$, i.e., the two non-negative weights associated with sample \#1 sum to $1$; for sample \#2 the weights are simply $[1,0]$. Then, the average weights for state $s$ are $\bar{W}(\nearrow|s,\nearrow) = \frac{1+\alpha}{2}$ and $\bar{W}(\searrow|s,\nearrow) = \frac{1-\alpha}{2}$. For example, if $\alpha=0.8$, then the weights for sample \#1 are $[0.8, 0.2]$ and the average weights associated with state $s$ are $[0.9, 0.1]$. Note that while the average weights $\bar{W}$ (used for calculating $\pi_{b^{\smallplus}}$ and C-IS) are \textit{state-specific}, the user-assigned weights $w^{\tilde{a}}$ are \textit{sample-specific}; furthermore, the sample-specific weights should not be interpreted as ``random'', as a user may deliberately set weights to split equally into $[0.5, 0.5]$, or to $[1,0]$ which ignores the annotation if they believe it is of poor quality. 

Using \cref{def:aug-beh}, we can calculate the augmented behavior policy $\pi_{b^{\smallplus}}(s) = [\frac{1+\alpha}{2},\frac{1-\alpha}{2}]$. Applying C-IS as defined in \cref{def:C-IS}, we see our proposed approach produces an unbiased estimate of $1$:
\begin{align*}
& \left( \frac{1+\alpha}{2} \times \frac{0}{\frac{1+\alpha}{2}}\times(+1) + \frac{1-\alpha}{2} \times \frac{1}{\frac{1-\alpha}{2}}\times(+1) \right) = 1
\end{align*}

\allowdisplaybreaks
\section{Extended Theoretical Analyses} \label{appx:proof}
Unless otherwise stated, the estimators are for $\pi_e$, i.e., $\hat{v} = \hat{v}(\pi_e)$. 

\subsection{IS: Bias \& Variance}
We formally state and prove the bias and variance results for IS (informally described in \cref{sec:theory}). The proofs are adapted from existing literature \citep{dudik2014doubly,sachdeva2020off}. 

\begin{theorem}[Unbiasedness of IS]
In the bandit setting, when \cref{asm:common-support} holds, $\E[\hat{v}^{\textup{IS}}] = v(\pi_e)$. 
\end{theorem}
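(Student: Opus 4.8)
The plan is to expand $\E[\hat{v}^{\textup{IS}}]$ via the tower property over the three independent sources of randomness in a bandit sample $\tau = (s,a,r)$, namely $s \sim d_1$, then $a \sim \pi_b(s)$, then $r \sim R(s,a)$, and then to perform a change of measure from $\pi_b$ to $\pi_e$ using the importance ratio. Since $\hat{v}^{\textup{IS}} = \frac{\pi_e(a|s)}{\pi_b(a|s)}\, r$ is linear in $r$ and the ratio depends only on $(s,a)$, I would first take the innermost expectation over the reward to replace $r$ by $\bar{R}(s,a)$, yielding
\[ \E[\hat{v}^{\textup{IS}}] = \E_{s \sim d_1} \E_{a \sim \pi_b(s)} \Bigl[ \tfrac{\pi_e(a|s)}{\pi_b(a|s)}\, \bar{R}(s,a) \Bigr]. \]

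The second step is the crux. Writing the inner expectation as a sum over actions, $\sum_{a} \pi_b(a|s)\, \tfrac{\pi_e(a|s)}{\pi_b(a|s)}\, \bar{R}(s,a)$, the $\pi_b(a|s)$ factors cancel to leave $\sum_a \pi_e(a|s)\, \bar{R}(s,a) = \E_{a \sim \pi_e(s)}[\bar{R}(s,a)]$. Taking the outer expectation over $s$ then recovers the bandit value $v(\pi_e) = \E_{s \sim d_1} \E_{a \sim \pi_e(s)}[\bar{R}(s,a)]$ by definition, completing the argument.

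The main obstacle, and the only place the hypothesis enters, is justifying this cancellation without dividing by zero. \cref{asm:common-support} guarantees that $\pi_e(a|s) > 0$ implies $\pi_b(a|s) > 0$, so every action contributing a nonzero term to the target sum $\sum_a \pi_e(a|s)\, \bar{R}(s,a)$ is sampled with positive probability under $\pi_b$; conversely, actions with $\pi_b(a|s) = 0$ never occur under the behavior policy and hence never enter the empirical expectation, while their $\pi_e(a|s) = 0$ contribution to the target is also zero. I would make this precise by restricting the sum defining $\E_{a \sim \pi_b(s)}[\cdot]$ to the support $\{a : \pi_b(a|s) > 0\}$ and noting that $\{a : \pi_e(a|s) > 0\} \subseteq \{a : \pi_b(a|s) > 0\}$, so no term is dropped in passing from $\pi_b$ to $\pi_e$.
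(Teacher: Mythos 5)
Your proposal is correct and follows essentially the same route as the paper: the paper derives the general bias expression by restricting the inner sum to the support of $\pi_b$, cancelling the $\pi_b(a|s)$ factors, and observing that under \cref{asm:common-support} the leftover term $-\sum_{a \in \gU(s,\pi_b)} \pi_e(a|s)\bar{R}(s,a)$ vanishes because $\pi_e(a|s)=0$ on unsupported actions. Your handling of the divide-by-zero issue via the support inclusion $\{a : \pi_e(a|s) > 0\} \subseteq \{a : \pi_b(a|s) > 0\}$ is exactly the point the paper's argument relies on.
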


\begin{proposition}[Bias of IS]
In the bandit setting, when \cref{asm:common-support} is violated, \(\Bias[\hat{v}^{\textup{IS}}] = \E_{s \sim d_1} \bigl[-\sum\nolimits_{a \in \gU(s,\pi_{b})} \pi_e(a|s) \bar{R}(s,a) \bigr]\) where $\gU(s,\pi_{b}) = \{a: \pi_{b}(a|s) = 0\}$ are unsupported actions in the dataset. 
\end{proposition}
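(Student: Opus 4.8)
The plan is to compute $\E[\hat{v}^{\textup{IS}}]$ directly via the tower property over the three sources of randomness in a single sample $\tau = (s,a,r)$ (namely $s \sim d_1$, $a \sim \pi_b(\cdot|s)$, $r \sim R(s,a)$), and then subtract $v(\pi_e) = \E_{s \sim d_1}[V^{\pi_e}(s)]$ to read off the bias. Since the only place \cref{asm:common-support} enters is in whether the importance ratio is well-defined, the entire argument amounts to a careful bookkeeping of which actions actually contribute to the expectation under $\pi_b$.

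First I would condition on $(s,a)$ and take the innermost expectation over the reward, using $\E_{r \sim R(s,a)}[r] = \bar{R}(s,a)$, so that $\E[\hat{v}^{\textup{IS}}] = \E_{s \sim d_1}\E_{a \sim \pi_b(s)}\bigl[\tfrac{\pi_e(a|s)}{\pi_b(a|s)}\bar{R}(s,a)\bigr]$. Next I would write the expectation over $a \sim \pi_b(\cdot|s)$ explicitly as a sum $\sum_a \pi_b(a|s)(\cdot)$ and observe that it effectively ranges only over the supported actions $\{a : \pi_b(a|s) > 0\}$: actions in $\gU(s,\pi_b)$ occur with probability zero under $\pi_b$, hence never appear in the sample, so they drop out (and the ratio, which is formally undefined there, never needs evaluating). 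On the supported actions the factor $\pi_b(a|s)$ cancels, leaving $\E[\hat{v}^{\textup{IS}}] = \E_{s \sim d_1}\bigl[\sum_{a : \pi_b(a|s) > 0} \pi_e(a|s)\bar{R}(s,a)\bigr]$.

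The final step compares this restricted sum against the full value $V^{\pi_e}(s) = \sum_a \pi_e(a|s)\bar{R}(s,a)$. Splitting the full sum into supported and unsupported actions gives $\sum_{a : \pi_b(a|s)>0}\pi_e(a|s)\bar{R}(s,a) = V^{\pi_e}(s) - \sum_{a \in \gU(s,\pi_b)} \pi_e(a|s)\bar{R}(s,a)$; taking $\E_{s \sim d_1}$ and subtracting $v(\pi_e)$ then yields $\Bias[\hat{v}^{\textup{IS}}] = \E_{s \sim d_1}\bigl[-\sum_{a \in \gU(s,\pi_b)}\pi_e(a|s)\bar{R}(s,a)\bigr]$, exactly the claimed expression.

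The only genuine subtlety — and the point I would be most careful about — is justifying that the unsupported actions truly vanish from the expectation rather than contributing an undefined $0/0$ term. The clean phrasing is that $\E_{a \sim \pi_b(s)}$ integrates against the measure $\pi_b(\cdot|s)$, which assigns zero mass to $\gU(s,\pi_b)$, so the integrand there is irrelevant; equivalently, one adopts the convention that an action never sampled contributes nothing to the estimator's expectation. This is precisely the mechanism producing the bias: the importance-weighted estimator can only ``see'' mass that $\pi_b$ actually places, so any reward that $\pi_e$ would collect on actions invisible to $\pi_b$ is systematically omitted.
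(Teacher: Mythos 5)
Your proposal is correct and follows essentially the same route as the paper's derivation: expand $\E[\hat{v}^{\textup{IS}}]$ as a sum over the supported actions $\Acal \setminus \gU(s,\pi_b)$ weighted by $\pi_b(a|s)$, cancel the behavior-policy probability against the importance ratio, and subtract the full sum defining $v(\pi_e)$ to isolate the missing mass on $\gU(s,\pi_b)$. Your extra care about why the $0/0$ terms on unsupported actions legitimately vanish is a fine clarification but does not change the argument.
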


\begin{proposition}[Variance of IS]
In the bandit setting, when \cref{asm:common-support} holds, the variance of IS can be written as:
\begin{align*}
    \mathbb{V}[\hat{v}^{\textup{IS}}] = \mathbb{V}_{s \sim d_1}[V^{\pi_e}(s)] 
    + \mathbb{E}_{s \sim d_1}\bigl[\mathbb{V}_{a\sim \pi_b(s)}[\rho(a|s) \bar{R}(s,a)] \bigr]
    + \mathbb{E}_{s \sim d_1}\bigl[\mathbb{E}_{a\sim \pi_b(s)} [\rho(a|s)^2\, \sigma_R(s,a)^2 ] \bigr]
\end{align*}
where $\sigma_R(s,a)^2 = \mathbb{V}_{r \sim R(s,a)}[r]$ is the variance associated with the reward function $R(s,a)$. 
\end{proposition}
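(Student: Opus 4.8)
The plan is to decompose the variance via two nested applications of the law of total variance, peeling off the three terms one at a time according to the three independent sources of randomness in a single draw: the state $s \sim d_1$, the action $a \sim \pi_b(\cdot|s)$, and the reward $r \sim R(s,a)$. Writing $\hat{v}^{\textup{IS}} = \rho(a|s)\, r$, the key structural observation is that $\rho(a|s)$ is a deterministic function of $(s,a)$ and hence constant once we condition on $(s,a)$, which is what makes the reward-variance and importance-ratio contributions separate cleanly.

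First I would condition on $(s,a)$ and apply $\V[X] = \E[\V[X \mid Y]] + \V[\E[X \mid Y]]$ with $Y = (s,a)$. The inner reward-variance term factors as $\V_{r}[\rho r \mid s,a] = \rho(a|s)^2\, \V_{r \sim R(s,a)}[r] = \rho(a|s)^2\, \sigma_R(s,a)^2$, and taking the outer expectation over $s \sim d_1,\, a \sim \pi_b(s)$ yields exactly the third term of the claimed decomposition. This leaves the residual $\V_{s,a}\bigl[\E_r[\rho r \mid s,a]\bigr] = \V_{s,a}[\rho(a|s)\, \bar{R}(s,a)]$, since $\E_r[\rho r \mid s,a] = \rho(a|s)\, \bar{R}(s,a)$ by the definition of $\bar R$.

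Next I would apply the law of total variance a second time to $\V_{s,a}[\rho \bar R]$, now conditioning on $s$ alone. The inner term $\E_{s \sim d_1}\bigl[\V_{a \sim \pi_b(s)}[\rho(a|s)\bar{R}(s,a)]\bigr]$ is precisely the second term. For the outer term I need the identity $\E_{a \sim \pi_b(s)}[\rho(a|s)\, \bar R(s,a)] = V^{\pi_e}(s)$: expanding gives $\sum_a \pi_b(a|s)\frac{\pi_e(a|s)}{\pi_b(a|s)}\bar R(s,a) = \sum_a \pi_e(a|s)\bar R(s,a)$, which is the bandit value function $V^{\pi_e}(s)$. Then $\V_{s \sim d_1}[V^{\pi_e}(s)]$ is the first term, completing the decomposition.

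The one place where care is required—and the step I expect to be the main (though mild) obstacle—is the cancellation $\pi_b(a|s)\rho(a|s) = \pi_e(a|s)$ inside that sum, which is where \cref{asm:common-support} enters. The ratio $\rho(a|s)$ is only defined when $\pi_b(a|s) > 0$; common support guarantees that every action with $\pi_e(a|s) > 0$ also has $\pi_b(a|s) > 0$, so the actions never sampled under $\pi_b$ (and thus absent from the $\E_{a \sim \pi_b(s)}$ sum) contribute $\pi_e(a|s) = 0$ and can be restored freely to recover the full sum $\sum_a \pi_e(a|s)\bar R(s,a)$. I would spell out this bookkeeping explicitly to confirm that the importance-weighted $\pi_b$-average genuinely equals the $\pi_e$-expectation; the remaining algebra is routine.
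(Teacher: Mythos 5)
Your proposal is correct and takes essentially the same approach as the paper: both proofs are two nested applications of the law of total variance over the randomness hierarchy $s$, $a$, $r$, producing the identical three terms, with the only difference being that you peel off the conditioning in the opposite order (reward first, then state) whereas the paper conditions on $s$ first and then splits the inner variance over $a$ and $r$. Your explicit bookkeeping of where \cref{asm:common-support} enters (unsupported actions can be restored to the sum because $\pi_e(a|s)=0$ there) is a detail the paper's derivation leaves implicit, but it does not change the argument.
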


\begin{proof}[Derivation for Bias of IS (adapted from \citep{sachdeva2020off})]
\begin{align*}
\Bias[\hat{v}^{\textup{IS}}] =&\ \E[\hat{v}^{\textup{IS}}] - v(\pi_e) \\
=&\ \E_{s \sim d_1} \left[\sum_{a \in \Acal \setminus \gU(s,\pi_b)} \pi_b(a|s) \frac{\pi_e(a|s)}{\pi_b(a|s)} \bar{R}(s,a) \right] - \E_{s \sim d_1} \left[\sum_{a \in \Acal} \pi_e(a|s) \bar{R}(s,a) \right] \\
=&\ \E_{s \sim d_1} \left[\sum_{a \in \Acal \setminus \gU(s,\pi_b)} \pi_e(a|s) \bar{R}(s,a) - \sum_{a \in \Acal} \pi_e(a|s) \bar{R}(s,a) \right] \\
=&\ \E_{s \sim d_1} \left[-\sum_{a \in \gU(s,\pi_b)} \pi_e(a|s) \bar{R}(s,a) \right]
\end{align*}
If \cref{asm:common-support} holds, $\pi_e(a|s) = 0$ for $a \in \gU(s,\pi_{b})$ and thus $\Bias[\hat{v}^{\textup{IS}}] = 0$. 
\end{proof}

\begin{proof}[Derivation for Variance of IS (adapted from \citep{dudik2014doubly})]
We apply the law of total variance:
\begin{align*}
    &\ \mathbb{V}[\hat{v}^{\textup{IS}}] =\ \VV_{s \sim d_1, a \sim \pi_b(s), r \sim R(s,a)}[\rho r] \\
    =&\ \VV_{s \sim d_1}\left[\EE_{a \sim \pi_b(s), r \sim R(s,a)}[\rho r]\right] + \EE_{s \sim d_1}\left[\VV_{a \sim \pi_b(s), r \sim R(s,a)}[\rho r]\right] \\
    =&\ \VV_{s \sim d_1}\left[\EE_{a \sim \pi_b(s)}\left[\frac{\pi_e(a|s)}{\pi_b(a|s)} \EE_{r \sim R(s,a)}[r]\right]\right] + \EE_{s \sim d_1} \Bigl[\VV_{a \sim \pi_b(s)} \EE_{r \sim R(s,a)}[\rho r] + \mathbb{E}_{a \sim \pi_b(s)} \VV_{r \sim R(s,a)}[\rho r]\Bigr] \\
    =&\ \VV_{s \sim d_1}\left[\EE_{a \sim \pi_e(s)}[\bar{R}(s,a)]\right] + \EE_{s \sim d_1}\left[\VV_{a \sim \pi_b(s)} [\rho \bar{R}(s,a)]\right] + \EE_{s \sim d_1}\EE_{a \sim \pi_b(s)} \left[\rho^2 \VV_{r \sim R(s,a)}[r]\right] \\
    =&\ \VV_{s \sim d_1}[V^{\pi_e}(s)] + \EE_{s \sim d_1}\left[\VV_{a \sim \pi_b(s)} [\rho(a|s) \bar{R}(s,a)]\right] + \EE_{s \sim d_1}\left[\EE_{a \sim \pi_b(s)} \left[\rho(a|s)^2 \sigma_R(s,a)^2 \right] \right] \qedhere
\end{align*}
\end{proof}

\subsection{C-IS: Bias Analyses} \label{appx:C-IS-bias-proof}

We start by showing unbiasedness of C-IS in the ideal case (\cref{thm:C-IS-unbiasedness}) . 

\thmCISunbiasedness*

\begin{proof}[Proof of \cref{thm:C-IS-unbiasedness}]
Starting with \cref{def:C-IS}, 
\begin{align*}
    \mathbb{E}[\hat{v}^{\textup{C-IS}}] &= \E\left[w^{a} \rho^{a} r + \sum_{\tilde{a} \in \Acal \setminus \{a\}} w^{\tilde{a}} \rho^{\tilde{a}} q^{\tilde{a}} \right] \\
    &= \E_{s \sim d_1}\E_{a\sim\pi_b(s)}\Bigg[\EE_{w^{a} \sim W(a|s,a)}[w^{a}] \frac{\pi_e(a|s)}{\pi_{b^{\smallplus}}(a|s)} \EE_{r \sim R(s,a)}[r] \\
    &\qquad\qquad\qquad + \sum_{\tilde{a} \in \Acal \setminus \{a\}} \EE_{w^{\tilde{a}} \sim W(\tilde{a}|s,a)}[w^{\tilde{a}}] \frac{\pi_e(\tilde{a}|s)}{\pi_{b^{\smallplus}}(\tilde{a}|s)} \EE_{g^{\tilde{a}} \sim G(s, \tilde{a})}[g^{\tilde{a}}] \Bigg] \\
    &\overset{(1)}{=} \E_{s \sim d_1}\left[\sum_{a \in \Acal} \pi_b(a|s) \left( \sum_{\tilde{a} \in \Acal} \bar{W}(\tilde{a}|s,a) \frac{\pi_e(\tilde{a}|s)}{\pi_{b^{\smallplus}}(\tilde{a}|s)} \bar{R}(s,\tilde{a})\right) \right] \\
    &\overset{(2)}{=} \E_{s \sim d_1}\left[\sum_{\tilde{a} \in \Acal} \left( \sum_{a \in \Acal} \pi_b(a|s) \bar{W}(\tilde{a}|s,a) \frac{\pi_e(\tilde{a}|s)}{\pi_{b^{\smallplus}}(\tilde{a}|s)} \bar{R}(s,\tilde{a}) \right) \right] \\
    &\overset{(3)}{=} \E_{s \sim d_1}\left[\sum_{\tilde{a} \in \Acal} \left( \Bigl(\sum_{a \in \Acal} \pi_b(a|s) \bar{W}(\tilde{a}|s,a)\Bigr) \frac{\pi_e(\tilde{a}|s)}{\pi_{b^{\smallplus}}(\tilde{a}|s)} \bar{R}(s,\tilde{a}) \right) \right] \\
    &\overset{(4)}{=} \E_{s \sim d_1}\left[\sum_{\tilde{a} \in \Acal} \cancel{\pi_{b^{\smallplus}}(\tilde{a}|s)} \frac{\pi_e(\tilde{a}|s)}{\cancel{\pi_{b^{\smallplus}}(\tilde{a}|s)}} \bar{R}(s,\tilde{a}) \right] \\
    &\overset{\phantom{(5)}}{=} \E_{s \sim d_1}\left[\sum_{\tilde{a} \in \Acal} \pi_e(\tilde{a}|s) \bar{R}(s,\tilde{a}) \right] \\
    &= \E_{s \sim d_1}\E_{\tilde{a}\sim \pi_e}[\bar{R}(s,\tilde{a})] \\
    &= \E_{s \sim d_1} \left[ V^{\pi_e}(s) \right] \\
    &= v(\pi_e)
\end{align*}
where in $(1)$ we replace $\E_{g \sim G(s,\tilde{a})}[g]$ with $\bar{R}(s,\tilde{a})$ following \cref{asm:perfect-annot} and combine it with $\EE_{r \sim R(s,a)}[r]$ in a single summation over $\tilde{a} \in \Acal$, in $(2)$ we swap the order of summations, in $(3)$ we take out common factors that do not depend on $a$, and in $(4)$ we use \cref{def:aug-beh} for $\pi_{b^{\smallplus}}(\tilde{a}|s)$, which cancels out the denominator in the importance ratio on the next line. 
\end{proof}

Next, we look at two factors contributing to the bias of C-IS: lack of support and imperfect annotations. 

\thmCISbiassupport*
\thmCISbiasannot*

\begin{proposition}[Bias of C-IS, combined] \label{appx-thm:C-IS-bias}
When both \cref{asm:common-support-cf,asm:perfect-annot} are violated, 
\(\Bias[\hat{v}^{\textup{C-IS}}] = \E_{s \sim d_1} \bigl[-\sum\nolimits_{a \in \gU(s,\pi_{b^{\smallplus}})} \pi_e(a|s)\ \bar{R}(s,a) \bigr]\; +\; \E_{s \sim d_1} \bigl[\sum\nolimits_{a \in \Acal \setminus \gU(s,\pi_{b})} \delta_{W}(s,a)\ \pi_e(a|s)\ \epsilon_{G}(s,a) + \sum\nolimits_{a \in  \gU(s,\pi_b) \setminus \gU(s,\pi_{b^{\smallplus}})} \pi_e(a|s)\ \epsilon_{G}(s,a) \bigr]\), where $\gU(s,\pi_{b^{\smallplus}}) = \{a: \pi_{b^{\smallplus}}(a|s) = 0\}$ are unsupported actions in the counterfactual-augmented dataset, $\epsilon_{G}(s,a) = \mathbb{E}_{g \sim G(s,a)}[g] - \bar{R}(s,a)$, and \(\delta_{W}(s,a) = \bigl(1 - \frac{\bar{W}(a|s,a)\pi_b(a|s)}{\pi_{b^{\smallplus}}(a|s)} \bigr)\). 
\end{proposition}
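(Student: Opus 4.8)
The plan is to compute $\E[\hat{v}^{\textup{C-IS}}]$ directly from \cref{def:C-IS} and subtract $v(\pi_e)$, tracking separately the contributions of the mean rewards $\bar{R}$ and the annotation errors $\epsilon_{G}$. Taking expectations over $\tau^{\smallplus}$ and $\wvec$, the factual term contributes its true expected reward $\E_{r \sim R(s,a)}[r] = \bar{R}(s,a)$, while each counterfactual term contributes $\E_{g \sim G(s,\tilde{a})}[g] = \bar{R}(s,\tilde{a}) + \epsilon_{G}(s,\tilde{a})$ by definition of $\epsilon_{G}$. First I would assemble these into a double sum over the factual action $a$ (drawn from $\pi_b$) and the ``target'' action $\tilde{a}$, restricting $\tilde{a}$ to the supported set $\Acal \setminus \gU(s,\pi_{b^{\smallplus}})$; this is legitimate because $\bar{W}(\tilde{a}|s,\cdot) = 0$ whenever $\pi_{b^{\smallplus}}(\tilde{a}|s) = 0$, so the otherwise ill-defined ratios $\rho^{\tilde{a}}$ never actually contribute. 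The crucial bookkeeping point is that the $\bar{R}(s,\tilde{a})$ contribution appears for every $\tilde{a}$ (including the diagonal $\tilde{a}=a$), whereas the $\epsilon_{G}(s,\tilde{a})$ contribution appears only for the off-diagonal counterfactual terms $\tilde{a} \neq a$.

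For the reward part I would follow the computation in the proof of \cref{thm:C-IS-unbiasedness} verbatim: swap the order of summation, use \cref{def:aug-beh} in the form $\sum_{a} \pi_b(a|s)\bar{W}(\tilde{a}|s,a) = \pi_{b^{\smallplus}}(\tilde{a}|s)$ to cancel the denominator of $\rho^{\tilde{a}}$, arriving at $\E_{s\sim d_1}\bigl[\sum_{\tilde{a}\notin\gU(s,\pi_{b^{\smallplus}})}\pi_e(\tilde{a}|s)\bar{R}(s,\tilde{a})\bigr]$. Subtracting $v(\pi_e) = \E_{s\sim d_1}\bigl[\sum_{\tilde{a}\in\Acal}\pi_e(\tilde{a}|s)\bar{R}(s,\tilde{a})\bigr]$ leaves exactly the negative sum over $\gU(s,\pi_{b^{\smallplus}})$, recovering the support term of \cref{thm:C-IS-bias-support}.

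The annotation-error part is where the $\delta_{W}$ factor is produced, and it is the main obstacle. For a fixed supported $\tilde{a}$, summing only the off-diagonal weights gives $\sum_{a \neq \tilde{a}} \pi_b(a|s)\bar{W}(\tilde{a}|s,a) = \pi_{b^{\smallplus}}(\tilde{a}|s) - \bar{W}(\tilde{a}|s,\tilde{a})\pi_b(\tilde{a}|s)$, again by \cref{def:aug-beh}, since the only difference from the full sum is the missing diagonal term. Dividing by $\pi_{b^{\smallplus}}(\tilde{a}|s)$ turns the prefactor into precisely $\delta_{W}(s,\tilde{a}) = 1 - \frac{\bar{W}(\tilde{a}|s,\tilde{a})\pi_b(\tilde{a}|s)}{\pi_{b^{\smallplus}}(\tilde{a}|s)}$, so the annotation-error contribution is $\E_{s\sim d_1}\bigl[\sum_{\tilde{a}\notin\gU(s,\pi_{b^{\smallplus}})}\delta_{W}(s,\tilde{a})\pi_e(\tilde{a}|s)\epsilon_{G}(s,\tilde{a})\bigr]$. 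The care needed here is precisely the exclusion of the diagonal, which is what distinguishes this from the reward part and what generates the $\delta_W$ correction rather than a clean cancellation.

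Finally I would split the supported set. Because $\gU(s,\pi_{b^{\smallplus}})\subseteq\gU(s,\pi_b)$ (the same inclusion underlying the observation that \cref{asm:common-support-cf} weakens \cref{asm:common-support}, assuming the factual action retains positive weight), the set $\Acal\setminus\gU(s,\pi_{b^{\smallplus}})$ partitions disjointly into the factually supported actions $\Acal\setminus\gU(s,\pi_b)$ and the annotation-only supported actions $\gU(s,\pi_b)\setminus\gU(s,\pi_{b^{\smallplus}})$. On the latter set $\pi_b(\tilde{a}|s)=0$ forces $\delta_{W}(s,\tilde{a})=1$, so those terms shed the $\delta_W$ factor; this yields exactly the two annotation-error sums in the statement. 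Combining with the reward part gives the claimed formula, and as consistency checks it reduces to \cref{thm:C-IS-bias-support} when $\epsilon_{G}\equiv0$ and to \cref{thm:C-IS-bias-annot} when $\gU(s,\pi_{b^{\smallplus}})$ is empty.
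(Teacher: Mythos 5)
Your proposal is correct and follows essentially the same route as the paper's proof: expand $\E[\hat{v}^{\textup{C-IS}}]$, separate the $\bar{R}$ and $\epsilon_G$ contributions, apply the identity $\sum_a \pi_b(a|s)\bar{W}(\tilde{a}|s,a) = \pi_{b^{\smallplus}}(\tilde{a}|s)$ from \cref{def:aug-beh}, and split $\Acal\setminus\gU(s,\pi_{b^{\smallplus}})$ into factually supported and annotation-only supported actions. The only difference is presentational — the paper completes the $\epsilon_G$ sum to include the diagonal and subtracts it back as a separate term, while you keep the off-diagonal sum directly — and your explicit remark that $\gU(s,\pi_{b^{\smallplus}})\subseteq\gU(s,\pi_b)$ requires the factual action to retain positive weight is a point the paper glosses over.
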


\begin{remark}
There are two main sources of bias for C-IS, resulting from each of the two assumptions being violated. In \cref{appx-thm:C-IS-bias}, missing annotations (i.e., violation of \cref{asm:common-support-cf}) contribute to the bias through the first term as the rewards of unsupported actions, whereas imperfect annotations (i.e., violation of \cref{asm:perfect-annot}) contribute to the bias through the second term as the annotation error over supported actions. When both assumptions hold, C-IS only requires a weaker version of the common support assumption to remain unbiased (\cref{thm:C-IS-unbiasedness}); consequently, compared to IS, there is a larger space of policies that C-IS can evaluate without bias. Note that the unbiasedness property is not affected by the user-defined weighting scheme (except that the weights must not be to 0 or 1)  and directly applies to C*-IS. When only \cref{asm:perfect-annot} holds (\cref{thm:C-IS-bias-support}), the bias is related to the negative rewards over unsupported actions. On the other hand, when only \cref{asm:common-support-cf} holds (\cref{thm:C-IS-bias-annot}), the annotation error $\epsilon_{G}$ contributes to the resulting bias and this contribution is scaled by a factor of $\delta_{W}(s,a) \leq 1$ when the estimator can make use of (unbiased) factual samples of the state-action pair $(s,a)$ (i.e., when $\pi_b(a|s) > 0$ and $\bar{W}(a|s,a) > 0$). 
\end{remark}

We show \cref{appx-thm:C-IS-bias} first and then discuss \cref{thm:C-IS-bias-annot,thm:C-IS-bias-support} as two special cases.  

\begin{proof}[Proof of \cref{appx-thm:C-IS-bias}]
\begin{align*}
&\ \Bias[\hat{v}^{\textup{C-IS}}] = \E[\hat{v}^{\textup{C-IS}}] - v(\pi_e) \\
&=\ \E_{s \sim d_1} \Bigg[\sum_{a \in \Acal \setminus \gU(s,\pi_b)} \pi_b(a|s) \Bigg(\EE_{w^{a} \sim W(a|s,a)}[w^{a}] \frac{\pi_e(a|s)}{\pi_{b^{\smallplus}}(a|s)} \EE_{r \sim R(s,a)}[r] \\
&\qquad\qquad\qquad\qquad\qquad\qquad + \sum_{\tilde{a} \in \Acal \setminus \gU(s,\pi_{b^{\smallplus}}) \setminus \{a\}} \EE_{w^{\tilde{a}} \sim W(\tilde{a}|s,a)}[w^{\tilde{a}}] \frac{\pi_e(\tilde{a}|s)}{\pi_{b^{\smallplus}}(\tilde{a}|s)} \EE_{g^{\tilde{a}} \sim G(s, \tilde{a})}[g^{\tilde{a}}] \Bigg) \Bigg] - v(\pi_e) \\
&=\ \E_{s \sim d_1} \Bigg[\sum_{a \in \Acal \setminus \gU(s,\pi_b)} \pi_b(a|s) \Bigg( \sum_{\tilde{a} \in \Acal \setminus \gU(s,\pi_{b^{\smallplus}})} \bar{W}(\tilde{a}|s,a) \frac{\pi_e(\tilde{a}|s)}{\pi_{b^{\smallplus}}(\tilde{a}|s)} \left(\bar{R}(s,\tilde{a}) + \epsilon_{G}(s,\tilde{a})\right) \\
&\qquad\qquad\qquad\qquad\qquad\qquad - \bar{W}(a|s,a) \frac{\pi_e(a|s)}{\pi_{b^{\smallplus}}(a|s)} \epsilon_{G}(s,a) \Bigg) \Bigg] - v(\pi_e) \\
&=\ \E_{s \sim d_1} \Bigg[\sum_{a \in \Acal \setminus \gU(s,\pi_b)} \pi_b(a|s) \sum_{\tilde{a} \in \Acal \setminus \gU(s,\pi_{b^{\smallplus}})} \bar{W}(\tilde{a}|s,a) \frac{\pi_e(\tilde{a}|s)}{\pi_{b^{\smallplus}}(\tilde{a}|s)} \bar{R}(s,\tilde{a}) \Bigg] - v(\pi_e) \\
&\qquad + \E_{s \sim d_1} \Bigg[ \sum_{a \in \Acal \setminus \gU(s,\pi_b)} \pi_b(a|s) \sum_{\tilde{a} \in \Acal \setminus \gU(s,\pi_{b^{\smallplus}})} \bar{W}(\tilde{a}|s,a) \frac{\pi_e(\tilde{a}|s)}{\pi_{b^{\smallplus}}(\tilde{a}|s)} \epsilon_{G}(s,\tilde{a}) \Bigg] \\
&\qquad - \E_{s \sim d_1} \Bigg[ \sum_{a \in \Acal \setminus \gU(s,\pi_b)} \pi_b(a|s) \bar{W}(a|s,a) \frac{\pi_e(a|s)}{\pi_{b^{\smallplus}}(a|s)} \epsilon_{G}(s,a) \Bigg] \\
&= \ \E_{s \sim d_1} \Bigg[ \sum_{\tilde{a} \in \Acal \setminus \gU(s,\pi_{b^{\smallplus}})} \Biggl(\sum_{a \in \Acal \setminus \gU(s,\pi_b)} \pi_b(a|s) \bar{W}(\tilde{a}|s,a) \Biggr) \frac{\pi_e(\tilde{a}|s)}{\pi_{b^{\smallplus}}(\tilde{a}|s)} \bar{R}(s,\tilde{a}) \Bigg] - v(\pi_e) \\
&\qquad + \E_{s \sim d_1} \Bigg[ \sum_{\tilde{a} \in \Acal \setminus \gU(s,\pi_{b^{\smallplus}})} \Biggl(\sum_{a \in \Acal \setminus \gU(s,\pi_b)} \pi_b(a|s) \bar{W}(\tilde{a}|s,a) \Biggr) \frac{\pi_e(\tilde{a}|s)}{\pi_{b^{\smallplus}}(\tilde{a}|s)} \epsilon_{G}(s,\tilde{a}) \Bigg] \\
&\qquad - \E_{s \sim d_1} \Bigg[ \sum_{a \in \Acal \setminus \gU(s,\pi_b)} \pi_b(a|s) \bar{W}(a|s,a) \frac{\pi_e(a|s)}{\pi_{b^{\smallplus}}(a|s)} \epsilon_{G}(s,a) \Bigg] \\
=&\ \E_{s \sim d_1} \left[-\sum_{a \in \gU(s,\pi_{b^{\smallplus}})} \pi_e(a|s) \bar{R}(s,a) \right] + \E_{s \sim d_1} \left[\sum_{a \in \Acal \setminus \gU(s,\pi_{b^{\smallplus}})} \pi_e(a|s) \epsilon_{G}(s,a) \right] \\
&\qquad - \E_{s \sim d_1} \Bigg[ \sum_{a \in \Acal \setminus \gU(s,\pi_b)} \pi_e(a|s) \bar{W}(a|s,a) \frac{\pi_b(a|s)}{\pi_{b^{\smallplus}}(a|s)} \epsilon_{G}(s,a) \Bigg] \\
=&\ \E_{s \sim d_1} \left[-\sum_{a \in \gU(s,\pi_{b^{\smallplus}})} \pi_e(a|s) \bar{R}(s,a) \right] \\
&\qquad + \E_{s \sim d_1} \Bigg[ \sum_{a \in \Acal \setminus \gU(s,\pi_{b})} \delta_{W}(s,a) \pi_e(a|s) \epsilon_{G}(s,a) + \sum_{a \in  \gU(s,\pi_b) \setminus \gU(s,\pi_{b^{\smallplus}})} \pi_e(a|s)  \epsilon_{G}(s,a) \Bigg]
\end{align*}

where \(\delta_{W}(s,a) = \bigl(1 - \frac{\bar{W}(a|s,a)\pi_b(a|s)}{\pi_{b^{\smallplus}}(a|s)} \bigr)\). 

Note that $\gU(s,\pi_{b^{\smallplus}}) \subseteq \gU(s,\pi_{b})$, because $\pi_{b^{\smallplus}}(a|s) = 0$ implies $\pi_{b}(a|s) = 0$ (factual data does not contain action $a$) and $\bar{W}(a|s,\check{a}) = 0, \forall \check{a} \in \Acal$ (counterfactual annotations for other actions $\check{a}$ also do not contain action $a$). 
\end{proof}
\vspace{1em}

\begin{proof}[Proof for \cref{thm:C-IS-bias-support}]
Given \cref{asm:perfect-annot} but not \cref{asm:common-support-cf}, 
\begin{align*}
&\ \Bias[\hat{v}^{\textup{C-IS}}] = \mathbb{E}[\hat{v}^{\textup{C-IS}}] - v(\pi_e) \\
=&\ \E_{s \sim d_1} \left[\sum_{a \in \Acal \setminus \gU(s,\pi_b)} \pi_b(a|s) \left(\sum_{\tilde{a} \in \Acal \setminus \gU(s,\pi_{b^{\smallplus}})} \bar{W}(\tilde{a}|s,a) \frac{\pi_e(\tilde{a}|s)}{\pi_{b^{\smallplus}}(\tilde{a}|s)} \bar{R}(s,\tilde{a})\right) \right] - v(\pi_e) \\
=&\ \E_{s \sim d_1}\left[\sum_{\tilde{a} \in \Acal \setminus \gU(s,\pi_{b^{\smallplus}})} \left( \Biggl(\sum_{a \in \Acal \setminus \gU(s,\pi_{b})} \pi_b(a|s) \bar{W}(\tilde{a}|s,a) \Biggr) \frac{\pi_e(\tilde{a}|s)}{\pi_{b^{\smallplus}}(\tilde{a}|s)} \bar{R}(s,\tilde{a}) \right) \right] - \E_{s \sim d_1} \left[\sum_{a \in \Acal} \pi_e(a|s) \bar{R}(s,a) \right] \\
=&\ \E_{s \sim d_1} \left[\sum_{a \in \Acal \setminus \gU(s,\pi_{b^{\smallplus}})} \pi_e(a|s) \bar{R}(s,a) - \sum_{a \in \Acal} \pi_e(a|s) \bar{R}(s,a) \right] \\
=&\ \E_{s \sim d_1} \left[-\sum\nolimits_{a \in \gU(s,\pi_{b^{\smallplus}})} \pi_e(a|s) \bar{R}(s,a) \right] \qedhere
\end{align*}
\end{proof}

\begin{proof}[Proof for \cref{thm:C-IS-bias-annot}]
Given \cref{asm:common-support,asm:common-support-cf} but not \cref{asm:perfect-annot}, 
\begin{align*}
&\ \Bias[\hat{v}^{\textup{C-IS}}] = \mathbb{E}[\hat{v}^{\textup{C-IS}}] - v(\pi_e) \\
=&\ \E_{s \sim d_1} \left[\sum_{a \in \Acal} \pi_b(a|s) \left(\bar{W}(a|s,a) \frac{\pi_e(a|s)}{\pi_{b^{\smallplus}}(a|s)} \bar{R}(s,\tilde{a}) + \sum_{\tilde{a} \in \Acal \setminus \{a\}} \bar{W}(\tilde{a}|s,a) \frac{\pi_e(\tilde{a}|s)}{\pi_{b^{\smallplus}}(\tilde{a}|s)} \left(\bar{R}(s,\tilde{a}) + \epsilon_{G}(s,\tilde{a})\right)\right) \right] - v(\pi_e) \\
=&\ \E_{s \sim d_1} \left[\sum_{a \in \Acal} \pi_b(a|s) \left(\sum_{\tilde{a} \in \Acal} \bar{W}(\tilde{a}|s,a) \frac{\pi_e(\tilde{a}|s)}{\pi_{b^{\smallplus}}(\tilde{a}|s)} \bar{R}(s,\tilde{a}) + \sum_{\tilde{a} \in \Acal \setminus \{a\}} \bar{W}(\tilde{a}|s,a) \frac{\pi_e(\tilde{a}|s)}{\pi_{b^{\smallplus}}(\tilde{a}|s)} \epsilon_{G}(s,\tilde{a})\right) \right] - v(\pi_e) \\
=&\ \E_{s \sim d_1}\left[\sum_{\tilde{a} \in \Acal} \left( \Bigl(\sum_{a \in \Acal} \pi_b(a|s) \bar{W}(\tilde{a}|s,a) \Bigr) \frac{\pi_e(\tilde{a}|s)}{\pi_{b^{\smallplus}}(\tilde{a}|s)} \bar{R}(s,\tilde{a}) \right) \right] - v(\pi_e) \\
&\ + \E_{s \sim d_1} \left[\sum_{\tilde{a} \in \Acal} \left( \Bigl( \sum_{a \in \Acal} \pi_b(a|s) \bar{W}(\tilde{a}|s,a) \Bigr) \frac{\pi_e(\tilde{a}|s)}{\pi_{b^{\smallplus}}(\tilde{a}|s)}\epsilon_{G}(s,\tilde{a})\right) \right] \\
&\ - \E_{s \sim d_1} \left[\left( \sum_{a \in \Acal} \pi_b(a|s) \bar{W}(a|s,a) \frac{\pi_e(a|s)}{\pi_{b^{\smallplus}}(a|s)}\epsilon_{G}(s,a)\right) \right] \\
=&\ \E_{s \sim d_1} \left[ \sum_{\tilde{a} \in \Acal} \pi_e(\tilde{a}|s) \epsilon_{G}(s,\tilde{a}) \right] - \E_{s \sim d_1} \left[ \sum_{a \in \Acal} \pi_e(a|s) \bar{W}(a|s,a) \frac{\pi_b(a|s)}{\pi_{b^{\smallplus}}(a|s)}\epsilon_{G}(s,a) \right] \\
=&\ \E_{s \sim d_1} \E_{a \sim \pi_e(s)} \left[ \left(1 - \frac{\bar{W}(a|s,a)\pi_b(a|s)}{\pi_{b^{\smallplus}}(a|s)} \right) \epsilon_{G}(s,a) \right] \qedhere
\end{align*}
\end{proof}

While it is generally true that the space of policies that C-IS can evaluate without bias is larger than that of IS, since the lack of support leads to C-IS being biased towards 0 (similar to the case of standard IS), the magnitude of bias of C-IS is not guaranteed to be less than standard IS without additional assumptions about the rewards. However, if all rewards are non-negative, then under mild assumptions, we can prove the following bias reduction result. 

\thmCISbiasreduction*

\begin{proof} $|\Bias[\hat{v}^{\textup{IS}}]| - |\Bias[\hat{v}^{\textup{C-IS}}]|$
\begin{align*}
=&\ \left|\E_{s \sim d_1} \left[-\sum\nolimits_{a \in \gU(s,\pi_{b})} \pi_e(a|s) \bar{R}(s,a) \right]\right| - \left| \E_{s \sim d_1} \left[-\sum\nolimits_{a \in \gU(s,\pi_{b^{\smallplus}})} \pi_e(a|s) \bar{R}(s,a) \right] \right| \\
=&\ \E_{s \sim d_1} \left[\sum\nolimits_{a \in \gU(s,\pi_{b})} \pi_e(a|s) \bar{R}(s,a) \right] - \E_{s \sim d_1} \left[\sum\nolimits_{a \in \gU(s,\pi_{b^{\smallplus}})} \pi_e(a|s) \bar{R}(s,a) \right] \\
=&\ \E_{s \sim d_1} \left[\sum\nolimits_{a \in \gU(s,\pi_{b}) \setminus \gU(s,\pi_{b^{\smallplus}})} \pi_e(a|s) \bar{R}(s,a) \right] > 0 \qedhere
\end{align*}
\end{proof}

Lastly, we note a useful corollary of \cref{thm:C-IS-unbiasedness}. 

\thmCISweightedrho*

\begin{proof}[Proof of \cref{thm:C-IS-weighted-rho}]
Starting with \cref{thm:C-IS-unbiasedness} and substituting $R(s,a) = G(s,a) = 1$ as the constant reward and annotation for all $s\in\Scal, a\in\Acal$, we have:
$\E_{\tau^{\smallplus}}[w^{a} \rho^{a} + \sum_{\tilde{a} \in \Acal \setminus \{a\}} w^{\tilde{a}} \rho^{\tilde{a}}] = \EE_{s \sim d_1}\EE_{\tilde{a}\sim \pi_e(s)}[1] = 1$. 
\end{proof}

\newpage
\subsection{C-IS: Variance Analyses} \label{appx:C-IS-varianc-proof}

For variance analyses, we focus on the scenario where \cref{asm:common-support-cf,asm:perfect-annot} hold and bias is zero. Below, we state the variance decomposition results under a few assumptions about the weighting scheme and the annotation variance.

\begin{theorem}[Variance of C-IS] \label{appx-thm:C-IS-variance}
Let $\rho^{+}(a|s) = \rho^{a} = \frac{\pi_e(a|s)}{\pi_{b^{\smallplus}}(a|s)}$ be the importance ratio defined using the augmented behavior policy, and assume variance of the annotation function and the variance of the reward function are related by $\sigma_R(s,a)^2 = \sigma_R(s,a)^2 + \Delta_{\sigma}(s,a)$ where $\Delta_{\sigma}(s,a) \in \mathbb{R}$, then under \cref{asm:common-support-cf,asm:perfect-annot},
\begin{align}
\begin{split}
    \V[\hat{v}^{\textup{C-IS}}] = \mathbb{V}_{s \sim d_1}[V^{\pi_e}(s)]\ +&\ \textstyle \mathbb{E}_{s \sim d_1} \Big[\mathbb{V}_{a\sim \pi_b(s)} \left[\sum_{\tilde{a} \in \mathcal{A}} \rho^{\scriptscriptstyle +}(\tilde{a}|s)\, \bar{W}(\tilde{a}|s,a)\, \bar{R}(s,\tilde{a}) \right] \Big] \\
    +&\ \textstyle \mathbb{E}_{s \sim d_1} \mathbb{E}_{a\sim \pi_b(s)} \Big[ \sum_{\tilde{a} \in \mathcal{A}} \rho^{\scriptscriptstyle +}(\tilde{a}|s)^2\, \bar{W}(\tilde{a}|s,a)^2\, \sigma_R(s,\tilde{a})^2 \Big] \\
    +&\ \textstyle  \E_{s \sim d_1} \E_{a \sim \pi_b(s)} \left[ \sum_{\tilde{a} \in \Acal \setminus \{a\}} \rho^{+}(\tilde{a}|s)^2 \bar{W}(\tilde{a}|s,a)^2 \Delta_{\sigma}(s,\tilde{a}) \right] \\
    +&\ \textstyle \mathbb{E}_{s \sim d_1} \mathbb{E}_{a\sim \pi_b(s)} \Big[\sum_{\tilde{a} \in \mathcal{A}} \rho^{\scriptscriptstyle +}(\tilde{a}|s)^2\, \bar{R}(s,\tilde{a})^2\, \sigma_{W}(\tilde{a}|s,a)^2 \Big] \\
    +&\ \textstyle \mathbb{E}_{s \sim d_1} \mathbb{E}_{a\sim \pi_b(s)} \Big[\sum_{\tilde{a} \in \mathcal{A}} \rho^{\scriptscriptstyle +}(\tilde{a}|s)^2\, \sigma_R(s,\tilde{a})^2\, \sigma_{W}(\tilde{a}|s,a)^2 \Big] \\
    +&\ \EE_{s \sim d_1} \EE_{a \sim \pi_b(s)}\Big[ C(s,a) \Big] \\
    +&\ \textstyle \mathbb{E}_{s \sim d_1} \mathbb{E}_{a \sim \pi_b(s)} \Big[ \sum_{\tilde{a} \in \Acal \setminus \{a\}} \rho^{+}(\tilde{a}|s)^2 \Delta_{\sigma}(s,\tilde{a}) \sigma_{W}(\tilde{a}|s,a)^2 \Big]
\end{split}
\end{align}

where $C(s,a) = 2\sum\limits^{a_i \neq a_j}_{a_i, a_j \in \Acal} \rho^{\smallplus}(a_i|s)\ \rho^{\smallplus}(a_j|s)\ \bar{R}(s,a_i)\ \bar{R}(s,a_j)\ \Cov\big(W({a_i}|s,a), W({a_j}|s,a)\big)$. 

\end{theorem}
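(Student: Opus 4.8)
The plan is to obtain the decomposition by two nested applications of the law of total variance, peeling off the state randomness first and the factual-action randomness second, and then expanding the remaining fully-conditional variance as a variance of a sum over all actions. First I would condition on $s \sim d_1$ and write $\V[\hat{v}^{\textup{C-IS}}] = \V_{s \sim d_1}\bigl[\E[\hat{v}^{\textup{C-IS}} \mid s]\bigr] + \E_{s \sim d_1}\bigl[\V[\hat{v}^{\textup{C-IS}} \mid s]\bigr]$. The computation already carried out in the proof of \cref{thm:C-IS-unbiasedness} shows $\E[\hat{v}^{\textup{C-IS}} \mid s] = V^{\pi_e}(s)$ under \cref{asm:common-support-cf,asm:perfect-annot}, so the first summand is exactly the leading term $\V_{s \sim d_1}[V^{\pi_e}(s)]$.

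Next I would apply the law of total variance a second time to the inner conditional variance, conditioning on the factual action $a \sim \pi_b(s)$: $\V[\hat{v}^{\textup{C-IS}} \mid s] = \V_{a \sim \pi_b(s)}\bigl[\E[\hat{v}^{\textup{C-IS}} \mid s,a]\bigr] + \E_{a \sim \pi_b(s)}\bigl[\V[\hat{v}^{\textup{C-IS}} \mid s,a]\bigr]$. Averaging over the weights $\wvec$, the reward $r$, and the annotations while holding $s,a$ fixed and using $\E[r]=\bar{R}(s,a)$ together with $\E[g^{\tilde{a}}]=\bar{R}(s,\tilde{a})$ from \cref{asm:perfect-annot}, the conditional mean collapses to $\E[\hat{v}^{\textup{C-IS}} \mid s,a] = \sum_{\tilde{a} \in \Acal} \rho^{+}(\tilde{a}|s)\,\bar{W}(\tilde{a}|s,a)\,\bar{R}(s,\tilde{a})$, whose $a$-variance is precisely the second term of the claimed decomposition.

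The substance of the proof is the fully-conditional variance $\V[\hat{v}^{\textup{C-IS}} \mid s,a]$. Here I would unify the factual and counterfactual contributions by writing $\hat{v}^{\textup{C-IS}} = \sum_{b \in \Acal} w^{b}\,\rho^{+}(b|s)\,Y^{b}$, where $Y^{a}=r$ and $Y^{b}=g^{b}$ for $b \neq a$, and then expand the variance of this sum into single-action variances plus pairwise covariances. Under the natural assumptions that $r$ and the annotations $\{g^{b}\}$ are mutually independent and independent of the weights, each single-action term follows from the variance-of-a-product identity $\V[w^{b}Y^{b}] = \sigma_W^2\,\sigma_Y^2 + \sigma_W^2\,\bar{R}^2 + \bar{W}^2\,\sigma_Y^2$ (with reward/annotation statistics evaluated at $(s,b)$ and weight statistics at $(b|s,a)$), while each cross term reduces to $\Cov(w^{b_i}Y^{b_i}, w^{b_j}Y^{b_j}) = \bar{R}(s,b_i)\,\bar{R}(s,b_j)\,\Cov(w^{b_i},w^{b_j})$ because the $Y$'s decorrelate across distinct actions. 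Summing these covariances over unordered pairs of actions produces the factor-of-two term $C(s,a)$.

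Finally I would substitute $\sigma_Y^2 = \sigma_R(s,b)^2$ at the factual index $b=a$ and $\sigma_Y^2 = \sigma_R(s,b)^2 + \Delta_{\sigma}(s,b)$ at the counterfactual indices $b \neq a$, and regroup: the piece $\bar{W}^2\sigma_Y^2$ splits into an all-action $\sigma_R^2$ sum plus a counterfactual-only $\Delta_{\sigma}$ sum, the piece $\sigma_W^2\sigma_Y^2$ splits the same way, and $\sigma_W^2\bar{R}^2$ supplies the remaining all-action term. I expect the main obstacle to be purely bookkeeping: tracking which independence assumption justifies each factorization, and in particular cleanly isolating the annotation-variance excess $\Delta_{\sigma}$ — which appears only for the counterfactual actions $\tilde{a} \neq a$, since the factual reward carries variance exactly $\sigma_R^2$ — so that it lands in the two restricted sums over $\Acal \setminus \{a\}$ rather than the unrestricted sums over $\Acal$.
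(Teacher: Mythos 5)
Your proposal is correct and follows essentially the same route as the paper: two applications of the law of total variance (first over $s \sim d_1$, then over $a \sim \pi_b(s)$), with the leading term $\V_{s\sim d_1}[V^{\pi_e}(s)]$ and the second term $\E_{s}\bigl[\V_{a\sim\pi_b(s)}[\sum_{\tilde{a}}\rho^{+}(\tilde{a}|s)\bar{W}(\tilde{a}|s,a)\bar{R}(s,\tilde{a})]\bigr]$ obtained exactly as you describe. The only (cosmetic) difference is in the final step: the paper peels off the weight randomness with a third application of the law of total variance, conditioning on $(r,\gvec)$ versus $\wvec$, whereas you expand $\V\bigl[\sum_{b} w^{b}\rho^{+}(b|s)Y^{b}\,\big|\,s,a\bigr]$ directly via the product-variance and product-covariance identities; under the same conditional-independence assumptions both computations produce the identical collection of $\bar{W}^2\sigma_R^2$, $\Delta_{\sigma}$, $\sigma_W^2$, and covariance terms.
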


\begin{corollary}[Variance of C*-IS] \label{appx-thm:C-IS-split-variance}
Assuming $\sigma_G(s,a)^2 = \sigma_R(s,a)^2 + \Delta_{\sigma}(s,a)$ where $\Delta_{\sigma}(s,a) \in \mathbb{R}$, then under \cref{asm:common-support-cf,asm:perfect-annot},
\begin{align*}
\begin{split}
    \V[\hat{v}^{\textup{C*-IS}}] = \mathbb{V}_{s \sim d_1}[V^{\pi_e}(s)] +&\ \mathbb{E}_{s \sim d_1} \mathbb{E}_{a\sim \pi_b(s)} \bigl[ \pi_b(a|s)\, \rho(a|s)^2\, \sigma_R(s,a)^2 \bigr] \\
    +&\ \textstyle \mathbb{E}_{s \sim d_1} \bigl[ \sum_{\tilde{a} \in \mathcal{A} \setminus \{a\}} \pi_e(a|s)^2 \Delta_{\sigma}(s,a) \bigr]
\end{split}
\end{align*}
where $\rho(a|s) = \smash{\frac{\pi_e(a|s)}{\pi_{b}(a|s)}}$ is the importance ratio under the original behavior policy. 
\end{corollary}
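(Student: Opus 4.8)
The plan is to prove this directly by two applications of the law of total variance to the estimator $\hat{v}^{\textup{C*-IS}} = \pi_e(a|s)\,r + \sum_{\tilde{a} \in \Acal \setminus \{a\}} \pi_e(\tilde{a}|s)\,g^{\tilde{a}}$, where $s \sim d_1$, $a \sim \pi_b(s)$, $r \sim R(s,a)$, and each $g^{\tilde{a}} \sim G(s,\tilde{a})$ independently. This is cleaner than routing through the general decomposition in \cref{appx-thm:C-IS-variance}, though the same result falls out of that theorem by substituting the C*-IS specialization $\bar{W}(\tilde{a}|s,a) = 1/|\Acal|$ (a deterministic constant, so $\sigma_{W} = 0$ and every weight-covariance term $C(s,a)$ vanishes) together with $\pi_{b^{\smallplus}}$ equal to the uniform policy, which gives $\rho^{\smallplus}(\tilde{a}|s) = |\Acal|\,\pi_e(\tilde{a}|s)$. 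I would present the direct argument and remark that the general theorem agrees.

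First I would condition on the initial state $s$ and compute the conditional mean. Under \cref{asm:perfect-annot} we have $\E[r \mid s,a] = \bar{R}(s,a)$ and $\E[g^{\tilde{a}} \mid s] = \bar{R}(s,\tilde{a})$, so $\E[\hat{v}^{\textup{C*-IS}} \mid s,a] = \sum_{\tilde{a} \in \Acal} \pi_e(\tilde{a}|s)\,\bar{R}(s,\tilde{a}) = V^{\pi_e}(s)$. The key observation is that this is \emph{independent of the factual action} $a$: the factual and counterfactual contributions reassemble into the full $\pi_e$-average over the action space. Consequently the outer term of the law of total variance is $\V_{s \sim d_1}[V^{\pi_e}(s)]$ (the first term of the claim), and when I further split $\V[\,\cdot \mid s\,]$ by conditioning on $a$, the ``variance of the conditional mean across $a$'' piece is zero.

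What remains is $\E_{s \sim d_1}\E_{a \sim \pi_b(s)}\V[\hat{v}^{\textup{C*-IS}} \mid s,a]$. Given $(s,a)$ the only randomness is in $r$ and the mutually independent annotations $g^{\tilde{a}}$, so this conditional variance is $\pi_e(a|s)^2\,\sigma_R(s,a)^2 + \sum_{\tilde{a} \neq a} \pi_e(\tilde{a}|s)^2\,\sigma_G(s,\tilde{a})^2$. Substituting $\sigma_G(s,\tilde{a})^2 = \sigma_R(s,\tilde{a})^2 + \Delta_{\sigma}(s,\tilde{a})$ and re-absorbing the factual term lets me rewrite it as $\sum_{\tilde{a} \in \Acal} \pi_e(\tilde{a}|s)^2\,\sigma_R(s,\tilde{a})^2 + \sum_{\tilde{a} \neq a} \pi_e(\tilde{a}|s)^2\,\Delta_{\sigma}(s,\tilde{a})$. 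Taking $\E_{a \sim \pi_b(s)}$, the first sum is $a$-free and, via the identity $\pi_e(a|s)^2 = \pi_b(a|s)^2\,\rho(a|s)^2$, equals $\E_{a \sim \pi_b(s)}[\pi_b(a|s)\,\rho(a|s)^2\,\sigma_R(s,a)^2]$, matching the second term of the claim; the residual $\Delta_{\sigma}$ sum supplies the third term.

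The point requiring care — rather than a genuine obstacle — is the bookkeeping that distinguishes the factual action, whose noise is the reward variance $\sigma_R$, from the counterfactual actions, whose noise is the annotation variance $\sigma_G$; this asymmetry is exactly what produces the $\tilde{a} \neq a$ restriction on the $\Delta_{\sigma}$ term and is the only place the gap between rewards and annotations enters. The identity $\E[\hat{v}^{\textup{C*-IS}} \mid s,a] = V^{\pi_e}(s)$ being independent of $a$ is what collapses the between-action variance term that is present (and generally nonzero) for arbitrary-weight C-IS, and it is worth verifying explicitly that the constant-weight choice is precisely what drives the $\sigma_{W}$ and covariance terms of \cref{appx-thm:C-IS-variance} to zero when cross-checking against the general formula.
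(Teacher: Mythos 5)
Your proof is correct, and it takes a genuinely different (and more self-contained) route than the paper. The paper derives this corollary by specializing the general decomposition of \cref{appx-thm:C-IS-variance}: it substitutes the constant weights $\bar{W}(\tilde{a}|s,a)=|\Acal|^{-1}$, notes that $\pi_{b^{\smallplus}}$ becomes uniform so $\rho^{\smallplus}(\tilde{a}|s)=|\Acal|\,\pi_e(\tilde{a}|s)$, observes that all four weight-variance/covariance terms vanish, and then simplifies the surviving three terms one by one (in particular showing the between-action term reduces to $\mathbb{V}_{a\sim\pi_b(s)}[V^{\pi_e}(s)]=0$). You instead apply the law of total variance directly to the estimator of \cref{def:C-IS-weights}, and the central observation --- that $\E[\hat{v}^{\textup{C*-IS}}\mid s,a]=\sum_{\tilde{a}}\pi_e(\tilde{a}|s)\bar{R}(s,\tilde{a})=V^{\pi_e}(s)$ is independent of the factual action $a$ under \cref{asm:perfect-annot}, which kills the between-action variance --- is exactly the same fact the paper recovers when its term $(2)$ collapses, but you reach it without any of the general machinery. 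Your handling of the conditional variance given $(s,a)$, the reassembly via $\sigma_G^2=\sigma_R^2+\Delta_\sigma$, and the identity $\pi_e(a|s)^2=\pi_b(a|s)^2\rho(a|s)^2$ all match the paper's simplifications of its terms $(3.1)$ and $(3.2)$; your final $\Delta_\sigma$ term carries the outer $\E_{a\sim\pi_b(s)}$ explicitly, which is actually more careful than the corollary's stated form (the paper's own derivation of $(3.2)$ retains that expectation too). The trade-off is the usual one: the paper's route makes the corollary a one-line consequence of an already-proved theorem and makes transparent exactly which terms of the general formula the equal-weight choice eliminates, while your direct argument is shorter, requires only \cref{asm:perfect-annot} plus conditional independence of $r$ and the $g^{\tilde{a}}$'s given $(s,a)$, and parallels the standard IS variance derivation of \cref{eqn:IS-variance} term by term. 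Your closing remark cross-checking against the general theorem covers the only thing the direct route would otherwise lose.
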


\begin{remark}
The variance decomposition in \cref{appx-thm:C-IS-variance} contains eight terms. The first three terms correspond to the three terms in the IS variance decomposition in \cref{eqn:IS-variance}, the fourth and eighth terms are related to the variance difference between annotations and rewards, whereas the remaining three terms are all related to the variance (and covariance) of the weight distribution $W(\cdot|s,a)$. As we will demonstrate empirically (\cref{appx:experiment-bandits}), depending on the weight distributions, variance of C-IS may actually be larger than that of IS. While it is difficult to guarantee a general variance reduction, we provide intuition for a few special cases. (i) If the weights are constant (i.e., $W(\cdot|s,a)$ is the same value for all instantiations of $(s,a)$ in the data), then the last four terms will all vanish to zero. (ii) When the annotation function and reward function have the same variance, $\Delta_{\sigma}(s,a) = 0$, the fourth and eight terms vanish to zero; with this assumption, we can straightforwardly obtain \cref{thm:C-IS-split-variance} from \cref{appx-thm:C-IS-split-variance}. (iii) Suppose for each state-action pair $(s,a)$, we set factual weights $W(a|s,a) = 1$ and counterfactual weights $W(\tilde{a}|s,a) = 0$. Then, we effectively ignore all counterfactual annotations, and the variance of C-IS becomes identical to that of IS. 
\end{remark}

We prove the most general case for \cref{appx-thm:C-IS-variance} first, and then derive \cref{appx-thm:C-IS-split-variance} as a special case.

\begin{proof}[Proof of \cref{appx-thm:C-IS-variance}]
We apply the law of total variance:
\begin{align*}
    &\ \mathbb{V}[\hat{v}^{\textup{C-IS}}] 
    =\mathbb{V}_{\substack{s \sim d_1, a \sim \pi_b(s), r \sim R(s,a) \\ \gvec \sim G(s,\cdot), \wvec \sim W(s,a)}}\left[ w^{a} \rho^{a} r + \sum_{\tilde{a} \in \Acal \setminus \{a\}} w^{\tilde{a}} \rho^{\tilde{a}} g^{\tilde{a}} \right] \\
    =&\ \underbrace{\V_{s \sim d_1}\left[\mathbb{E}_{\substack{a \sim \pi_b(s), r \sim R(s,a) \\ \gvec \sim G(s,\cdot), \wvec \sim W(s,a)}}\Big[w^{a} \rho^{a} r + \sum_{\tilde{a} \in \Acal \setminus \{a\}} w^{\tilde{a}} \rho^{\tilde{a}} g^{\tilde{a}} \Big]\right]}_{(1)} + \underbrace{\E_{s \sim d_1}\left[\mathbb{V}_{\substack{a \sim \pi_b(s), r \sim R(s,a) \\ \gvec \sim G(s,\cdot), \wvec \sim W(s,a)}}\Big[w^{a} \rho^{a} r + \sum_{\tilde{a} \in \Acal \setminus \{a\}} w^{\tilde{a}} \rho^{\tilde{a}} g^{\tilde{a}} \Big]\right]}_{(1')}
\end{align*}

As shown in the bias analyses, we have $(1) = \VV_{s \sim d_1}[V^{\pi_e}(s)]$. We further apply the law of total variance on $(1')$:
\begin{align*}
    (1') =&\ \E_{s \sim d_1} \left[ \V_{a \sim \pi_b(s)} \Bigg[\EE_{\substack{r \sim R(s,a), \gvec \sim G(s,\cdot) \\ \wvec \sim W(s,a)}} \Big[ w^{a} \rho^{a} r + \sum_{\tilde{a} \in \Acal \setminus \{a\}} w^{\tilde{a}} \rho^{\tilde{a}} g^{\tilde{a}} \Big] \Bigg] \right] \dots (2) \\
    +&\ \E_{s \sim d_1} \left[ \E_{a \sim \pi_b(s)} \Bigg[\VV_{\substack{r \sim R(s,a), \gvec \sim G(s,\cdot) \\ \wvec \sim W(s,a)}} \Big[ w^{a} \rho^{a} r + \sum_{\tilde{a} \in \Acal \setminus \{a\}} w^{\tilde{a}} \rho^{\tilde{a}} g^{\tilde{a}} \Big] \Bigg] \right] \dots (2') 
\end{align*}

Since $(r, \gvec)$ and $\wvec$ are conditionally independent given $(s,a)$,
\begin{align*}
    (2) =&\ \E_{s \sim d_1} \left[ \V_{a \sim \pi_b(s)} \Big[\rho^{a} \EE_{ \substack{ w^{a} \sim W(a|s,a) \\ r \sim R(s,a)} } [w^{a} r] + \sum_{\tilde{a} \in \Acal \setminus \{a\}} \rho^{\tilde{a}} \EE_{\substack{ w^{\tilde{a}} \sim W(\tilde{a}|s,a) \\ g^{\tilde{a}} \sim G(s,{\tilde{a}}) }} [w^{\tilde{a}} g^{\tilde{a}} ] \Big] \right] \\
    =&\ \E_{s \sim d_1} \left[ \V_{a \sim \pi_b(s)} \left[\rho^{a} \EE_{w^{a} \sim W(a|s,a)}[w^{a}] \EE_{r \sim R(s,a)}[r] + \sum_{\tilde{a} \in \Acal \setminus \{a\}} \rho^{\tilde{a}} \EE_{w^{\tilde{a}} \sim W(\tilde{a}|s,a)}[w^{\tilde{a}}] \EE_{g^{\tilde{a}} \sim G(s,{\tilde{a}})}[g^{\tilde{a}}] \right] \right] \\
    =&\ \E_{s \sim d_1} \left[ \V_{a \sim \pi_b(s)} \left[ \sum_{\tilde{a} \in \Acal} \rho^{+}(a|s) \bar{W}(\tilde{a}|s,a) \bar{R}(s,{\tilde{a}}) \right] \right]
\end{align*}
where in the last step, we apply \cref{asm:perfect-annot} to combine the expressions involving $\EE_{r \sim R(s,a)}[r]$ and $\EE_{g^{\tilde{a}} \sim G(s,{\tilde{a}})}[g^{\tilde{a}}]$ (both are equal to $\bar{R}(s,\tilde{a})$) into a single summation over $\tilde{a} \in \Acal$. 

We further apply the law of total variance on $(2')$:
\begin{align*}
    (2') =&\ \E_{s \sim d_1} \E_{a \sim \pi_b(s)} \left[ \VV_{\substack{r \sim R(s,a) \\ \gvec \sim G(s,\cdot)}} \EE_{\wvec \sim W(s,a)} \Big[w^{a} \rho^{a} r + \sum_{\tilde{a} \in \Acal \setminus \{a\}} w^{\tilde{a}} \rho^{\tilde{a}} g^{\tilde{a}}\Big] \right] \dots (3) \\
    +&\ \E_{s \sim d_1} \E_{a \sim \pi_b(s)} \left[ \EE_{\substack{r \sim R(s,a) \\ \gvec \sim G(s,\cdot)}} \VV_{\wvec \sim W(s,a)} \Big[w^{a} \rho^{a} r + \sum_{\tilde{a} \in \Acal \setminus \{a\}} w^{\tilde{a}} \rho^{\tilde{a}} g^{\tilde{a}}\Big] \right] \dots (3')
\end{align*}

Then we have
\begin{align*}
    (3) =&\ \E_{s \sim d_1} \E_{a \sim \pi_b(s)} \left[ \VV_{r \sim R(s,a)} \Big[\rho^{a} \EE_{w^{a} \sim W(a|s,a)}[w^{a}] r \Big] + \sum_{\tilde{a} \in \Acal \setminus \{a\}} \VV_{\gvec \sim G(s,\cdot)} \Big[\rho^{\tilde{a}} \EE_{w^{\tilde{a}} \sim W(\tilde{a}|s,a)}[w^{\tilde{a}}] g^{\tilde{a}} \Big] \right] \\
    =&\ \E_{s \sim d_1} \E_{a \sim \pi_b(s)} \left[ \rho^{+}(a|s)^2 \bar{W}(a|s,a)^2 \VV_{r \sim R(s,a)}[r] + \sum_{\tilde{a} \in \Acal \setminus \{a\}} \rho^{+}(\tilde{a}|s)^2 \bar{W}(\tilde{a}|s,a)^2 \VV_{g^{\tilde{a}} \sim G(s,\tilde{a})}[g^{\tilde{a}}] \right] \\
    =&\ \E_{s \sim d_1} \E_{a \sim \pi_b(s)} \left[ \sum_{\tilde{a} \in \Acal} \rho^{+}(\tilde{a}|s)^2 \bar{W}(\tilde{a}|s,a)^2 \sigma_R(s,\tilde{a})^2 \right] + \E_{s \sim d_1} \E_{a \sim \pi_b(s)} \left[ \sum_{\tilde{a} \in \Acal \setminus \{a\}} \rho^{+}(\tilde{a}|s)^2 \bar{W}(\tilde{a}|s,a)^2 \Delta_{\sigma}(s,\tilde{a}) \right]
\end{align*}

where in the last step we substitute $\sigma_G(s,\tilde{a})^2 = \sigma_R(s,\tilde{a})^2 + \Delta_{\sigma}(s,\tilde{a})$. 

Letting $g^{a} = r$ for clarity, we have
\begin{align*}
    (3') =&\ \E_{s \sim d_1} \E_{a \sim \pi_b(s)} \EE_{\substack{r \sim R(s,a) \\ \gvec \sim G(s,\cdot)}} \left[ \sum_{\tilde{a} \in \Acal} \VV_{w^{\tilde{a}} \sim W(\tilde{a}|s,a)}[\rho^{\tilde{a}} g^{\tilde{a}} w^{\tilde{a}}] + 2 \sum^{a_i \neq a_j}_{a_i, a_j \in \Acal} \Cov_{\wvec \sim W(s,a)}(\rho^{a_i} g^{a_i} w^{a_i}, \rho^{a_j} g^{a_j} w^{a_j}) \right] \\
    =&\ \E_{s \sim d_1} \E_{a \sim \pi_b(s)} \EE_{\substack{r \sim R(s,a) \\ \gvec \sim G(s,\cdot)}} 
    \begin{alignedat}[t]{1}
    &\Biggl[ \sum_{\tilde{a} \in \Acal} (\rho^{\tilde{a}})^2 (g^{\tilde{a}})^2 \VV_{w^{\tilde{a}} \sim W(\tilde{a}|s,a)}[w^{\tilde{a}}] \Biggr. \\
    &+ \Biggl. 2 \sum^{a_i \neq a_j}_{a_i, a_j \in \Acal} (\rho^{a_i}) (\rho^{a_j}) (g^{a_i}) (g^{a_j}) \times \Cov\Big(W({a_i}|s,a), W({a_j}|s,a)\Big) \Biggr]
    \end{alignedat} \\
    =&\ \E_{s \sim d_1} \E_{a \sim \pi_b(s)} 
    \Biggl[ \sum_{\tilde{a} \in \Acal} \rho^{+}(\tilde{a}|s)^2 \Big(\bar{R}(s,\tilde{a})^2 + \sigma_R(s,\tilde{a})^2\Big) \sigma_{W}(\tilde{a}|s,a)^2 + \sum_{\tilde{a} \in \Acal \setminus \{a\}} \rho^{+}(\tilde{a}|s)^2 \Delta_{\sigma}(s,\tilde{a}) \sigma_{W}(\tilde{a}|s,a)^2 \\
    &\qquad + 2 \sum^{a_i \neq a_j}_{a_i, a_j \in \Acal} 
    \rho^{+}(a_i|s)\ \rho^{+}(a_j|s)\ \bar{R}(s,a_i)\ \bar{R}(s,a_j) \times \Cov\Big(W({a_i}|s,a), W({a_j}|s,a)\Big) \Biggr] \\
    =&\ \underbrace{\E_{s \sim d_1} \E_{a \sim \pi_b(s)} \left[ \sum_{\tilde{a} \in \Acal} \rho^{+}(\tilde{a}|s)^2 \bar{R}(s,\tilde{a})^2 \sigma_{W}(\tilde{a}|s,a)^2 \right]}_{(4)} 
    + \underbrace{\E_{s \sim d_1} \E_{a \sim \pi_b(s)} \left[ \sum_{\tilde{a} \in \Acal} \rho^{+}(\tilde{a}|s)^2 \sigma_R(s,\tilde{a})^2 \sigma_{W}(\tilde{a}|s,a)^2 \right]}_{(5)} \\
    &+\ \underbrace{\E_{s \sim d_1} \E_{a \sim \pi_b(s)} \left[ \sum_{\tilde{a} \in \Acal \setminus \{a\}} \rho^{+}(\tilde{a}|s)^2 \Delta_{\sigma}(s,\tilde{a}) \sigma_{W}(\tilde{a}|s,a)^2 \right]}_{(6)} \\
    &+\ \underbrace{\E_{s \sim d_1} \E_{a \sim \pi_b(s)} \left[ 2 \sum^{a_i \neq a_j}_{a_i, a_j \in \Acal} 
    \rho^{+}(a_i|s)\ \rho^{+}(a_j|s)\ \bar{R}(s,a_i)\ \bar{R}(s,a_j) \times \Cov\Big(W({a_i}|s,a), W({a_j}|s,a)\Big) \right]}_{(7)}
\end{align*}

Putting together expressions $(1)$ through $(7)$, we have the desired decomposition for $\mathbb{V}[\hat{v}^{\textup{C-IS}}]$. 
\end{proof}

\begin{proof}[Proof of \cref{appx-thm:C-IS-split-variance}]
Here we derive the variance of C*-IS, which is C-IS with $W(\tilde{a}|s,a) = |\Acal|^{-1}$. Since the weights are constant, expressions $(4)(5)(6)(7)$ all vanish to zero because the variance and covariance associated with $W(\cdot|s,a)$ are both zero. We focus on simplifying the second, third, and fourth terms in the variance decomposition (which correspond to expressions $(2)$ and $(3)$ in the proof above; we denote the two parts of $(3)$ as $(3.1)$ and $(3.2)$). First, note that
\begin{align*}
\pi_{b^{\smallplus}}(a|s) =&\ \textstyle \sum_{\check{a} \in \Acal} \bar{W}(a|s,\check{a}) \pi_b(\check{a}|s) \\
=&\ \textstyle \sum_{\check{a} \in \Acal} |\Acal|^{-1} \pi_b(\check{a}|s) \\
=&\ \textstyle |\Acal|^{-1} \left(\sum_{\check{a} \in \Acal} \pi_b(\check{a}|s) \right) \\
=&\ |\Acal|^{-1}
\end{align*}

Then,
\begin{align*}
(2) =&\ \textstyle \mathbb{E}_{s \sim d_1} \Big[\mathbb{V}_{a\sim \pi_b(s)} \left[\sum_{\tilde{a} \in \mathcal{A}} \rho^{\scriptscriptstyle +}(\tilde{a}|s)\, \bar{W}(\tilde{a}|s,a)\, \bar{R}(s,\tilde{a}) \right] \Big] \\
=&\ \textstyle \mathbb{E}_{s \sim d_1} \Big[\mathbb{V}_{a\sim \pi_b(s)} \left[\sum_{\tilde{a} \in \mathcal{A}} \frac{\pi_e(\tilde{a}|s)}{1/|\Acal|}\, (1/|\Acal|)\, \bar{R}(s,\tilde{a}) \right] \Big] \\
=&\ \textstyle \mathbb{E}_{s \sim d_1} \Big[\mathbb{V}_{a\sim \pi_b(s)} \left[\sum_{\tilde{a} \in \mathcal{A}} \pi_e(\tilde{a}|s) \bar{R}(s,\tilde{a}) \right] \Big] \\
=&\ \textstyle \mathbb{E}_{s \sim d_1} \Big[\mathbb{V}_{a\sim \pi_b(s)} \left[ V^{\pi_e}(s) \right] \Big] \\
=&\ 0
\end{align*}

\begin{align*}
(3.1) =&\ \textstyle \mathbb{E}_{s \sim d_1} \mathbb{E}_{a\sim \pi_b(s)} \Big[ \sum_{\tilde{a} \in \mathcal{A}} \rho^{\scriptscriptstyle +}(\tilde{a}|s)^2\, \bar{W}(\tilde{a}|s,a)^2\, \sigma_R(s,\tilde{a})^2 \Big] \\
=&\ \textstyle \mathbb{E}_{s \sim d_1} \mathbb{E}_{a\sim \pi_b(s)} \Big[ \sum_{\tilde{a} \in \mathcal{A}} \frac{\pi_e(\tilde{a}|s)^2}{(1/|\Acal|)^{2}}\, (1/|\Acal|)^{2}\, \sigma_R(s,\tilde{a})^2 \Big] \\
=&\ \textstyle \mathbb{E}_{s \sim d_1} \Big[ \sum_{a\in\Acal} \pi_b(a|s) \sum_{\tilde{a} \in \mathcal{A}} \pi_e(\tilde{a}|s)^2\, \sigma_R(s,\tilde{a})^2 \Big] \\
=&\ \textstyle \mathbb{E}_{s \sim d_1} \Big[ \sum_{\tilde{a} \in \mathcal{A}} \left(\sum_{a\in\Acal} \pi_b(a|s)\right)\, \pi_e(\tilde{a}|s)^2 \, \sigma_R(s,\tilde{a})^2 \Big] \\
=&\ \textstyle \mathbb{E}_{s \sim d_1} \Big[ \sum_{\tilde{a} \in \mathcal{A}} \pi_e(\tilde{a}|s)^2\, \sigma_R(s,\tilde{a})^2 \Big] \\
=&\ \textstyle \mathbb{E}_{s \sim d_1} \Big[ \sum_{a \in \mathcal{A}} \pi_b(a|s)^2\, \rho(a|s)^2 \, \sigma_R(s,a)^2 \Big] \\
=&\ \textstyle \mathbb{E}_{s \sim d_1} \mathbb{E}_{a\sim \pi_b(s)} \Big[ \pi_b(a|s)\, \rho(a|s)^2\, \sigma_R(s,a)^2 \Big]
\end{align*}

\begin{align*}
(3.2) =&\ \textstyle \mathbb{E}_{s \sim d_1} \mathbb{E}_{a\sim \pi_b(s)} \Big[ \sum_{\tilde{a} \in \mathcal{A} \setminus \{a\}} \rho^{\scriptscriptstyle +}(\tilde{a}|s)^2\, \bar{W}(\tilde{a}|s,a)^2\, \Delta_{\sigma}(s,\tilde{a}) \Big] \\
=&\ \textstyle \mathbb{E}_{s \sim d_1} \mathbb{E}_{a\sim \pi_b(s)} \Big[ \sum_{\tilde{a} \in \mathcal{A} \setminus \{a\}} \frac{\pi_e(\tilde{a}|s)^2}{(1/|\Acal|)^{2}}\, (1/|\Acal|)^{2}\, \Delta_{\sigma}(s,\tilde{a}) \Big] \\
=&\ \textstyle \mathbb{E}_{s \sim d_1} \Big[ \sum_{a\in\Acal} \pi_b(a|s) \sum_{\tilde{a} \in \mathcal{A} \setminus \{a\}} \pi_e(\tilde{a}|s)^2\, \Delta_{\sigma}(s,\tilde{a}) \Big] \\
=&\ \textstyle \mathbb{E}_{s \sim d_1} \Big[ \sum_{\tilde{a} \in \mathcal{A} \setminus \{a\}} \left(\sum_{a\in\Acal} \pi_b(a|s)\right)\, \pi_e(\tilde{a}|s)^2 \, \Delta_{\sigma}(s,\tilde{a}) \Big] \\
=&\ \textstyle \mathbb{E}_{s \sim d_1} \Big[ \sum_{\tilde{a} \in \mathcal{A} \setminus \{a\}} \pi_e(\tilde{a}|s)^2\, \Delta_{\sigma}(s,\tilde{a}) \Big]
\end{align*}
\end{proof}

\subsection{C-PDIS: Bias Analyses} \label{appx:C-PDIS-bias-proof}

\thmCPDISunbiasedness*

First, we introduce a few definitions useful for the proof. Let the $t$-step state distribution be denoted by $d_t^{\pi}(s) = \Pr(s_t=s \mid s_1\sim d_1, a_{t'} \sim \pi(s_{t'}))$. Similarly, the $t$-step state-action distribution is $d_t^{\pi}(s,a) = \Pr(s_t=s, a_t=a \mid s_1\sim d_1, a_{t'} \sim \pi(s_{t'}))$. Note that $d_{t+1}^{\pi}(s')  = \E_{s,a \sim d_t^{\pi}} [p(s'|s,a)]$. Let $d_t = d_t^{\pi_b}$ denote the distribution under the behavior policy. Recall the horizon-$t$ value functions: $V_{t:T}^{\pi_e}(s) = \E_{\pi_e}[\sum_{t'=t}^{T}\gamma^{t'-1}r_{t'} | s_{t}=s]$, $Q_{t:T}^{\pi_e}(s,a) = \E_{\pi_e}[\sum_{t'=t}^{T}\gamma^{t'-1}r_{t'} | s_t=s, a_t=a]$. Based on the Bellman equation, $Q_{t:T}^{\pi_e}(s,a) = r(s,a) + \gamma \E_{s'\sim p(s,a)}[V_{(t+1):T}^{\pi_e}(s')]$. Note that $v(\pi_e) = \E_{s \sim d_1} [V_{1:T}^{\pi_e}(s)]$. Also recall \cref{asm:perfect-annot-rl}, the perfect annotation function in the MDP setting satisfies $\mathbb{E}_{g \sim G_{t}(s,a)}[g] = Q_{t:T}^{\pi_e}(s,a)$ for $(s,a)$ at step $t$ of the trajectory. 

Recall the recursive definition of C-PDIS (\cref{def:C-PDIS}): $\hat{v}^{\textup{C-PDIS}} = v_T$, with $v_0 = 0$, $v_{T-t+1} = w_t^{a_t} \rho_t^{a_t} (r_t + \gamma v_{T-t}) + \sum_{\tilde{a} \in \Acal \setminus \{a_t\}} w_t^{\tilde{a}} \rho_t^{\tilde{a}} q_t^{\tilde{a}}$ for $t=T...1$, where $\rho_t^{\tilde{a}} = \frac{\pi_e(\tilde{a} | s_t)}{\pi_{b^{\smallplus}}(\tilde{a} | s_t)}$. 

\begin{proof}
We show this via backward induction on a sequence of horizon-$t$ value functions of $\pi_e$ denoted by $V_{t:T}^{\pi_e}(s)$. The goal is to show that $\E_{\tau^{\smallplus},\wvec}[v_{T-t}] = \E_{s\sim d_{t+1}}[V_{(t+1):T}^{\pi_e}(s)]$ for all $t=T...0$ (for clarity, the subscript of expectation of the estimator may be omitted and assumed to be $\E_{\tau^{\smallplus},\wvec}$ with $\tau$ generated by the behavior policy $\pi_b$ unless otherwise specified).

\begin{adjustwidth}{1em}{0pt}
\textit{Base case.} It is trivially true that $\E[v_0] = \E_{s\sim d_{T+1}} [V_{(T+1):T}^{\pi_e}(s)] = 0$ since there are no more steps after $t=T$ and $s \sim d_{T+1}$ can be seen as a dummy absorbing state. 

\textit{Inductive step.} 
Suppose \(\E[v_{T-t}] = \E_{s\sim d_{t+1}}[V_{(t+1):T}^{\pi_e}(s)] \) holds. For factual state-action pair $(s,a) \sim d_t$ occurring at step $t$, we have
\begin{align*}
    \E_{s,a \sim d_t}[r_t + \gamma v_{T-t}] 
    &= \E_{s,a \sim d_t} \left[r_t + \gamma \E_{s'\sim d_{t+1}}[V_{(t+1):T}^{\pi_e}(s')] \right] \\
    &= \E_{s,a \sim d_t} \left[r_t + \gamma \E_{s'\sim p(\cdot|s,a)}[V_{(t+1):T}^{\pi_e}(s')] \right] \\
    &= \E_{s,a \sim d_t}[Q_{t:T}^{\pi_e}(s,a)]
\end{align*}

To show the case for $t-1$,
\begin{align*}
    \E[v_{T-t+1}] &= \E\left[w_t^{a_t} \rho_t^{a_t} (r_t + \gamma v_{T-t}) + \sum_{\tilde{a} \in \Acal \setminus \{a_t\} } w_t^{\tilde{a}} \rho_t^{\tilde{a}} q_t^{\tilde{a}}\right] \\
    &= \E_{s \sim d_t}\E_{a \sim \pi_b(s)}\Bigg[\EE_{w_t^{a} \sim W(a|s,a)}[w_t^{a}] \frac{\pi_e(a|s)}{\pi_{b^{\smallplus}}(a|s)} \EE_{s,a \sim d_t}[r_t + \gamma v_{T-t}] \\
    &\qquad\qquad\qquad + \sum_{\tilde{a} \in \Acal \setminus \{a\}} \EE_{w_t^{\tilde{a}} \sim W(\tilde{a}|s,a)}[w_t^{\tilde{a}}] \frac{\pi_e(\tilde{a}|s)}{\pi_{b^{\smallplus}}(\tilde{a}|s)} \EE_{g_t^{\tilde{a}} \sim G_{t}(s,\tilde{a})}[g_t^{\tilde{a}}] \Bigg] \\
    &\overset{(1)}{=} \E_{s \sim d_t}\left[\sum_{a \in \Acal} \pi_b(a|s) \left(\sum_{\tilde{a} \in \Acal} \bar{W}(\tilde{a}|s,a) \frac{\pi_e(\tilde{a}|s)}{\pi_{b^{\smallplus}}(\tilde{a}|s)} Q^{\pi_e}_{t:T}(s,\tilde{a})\right) \right] \\
    &\overset{(2)}{=} \E_{s \sim d_t}\left[\sum_{\tilde{a} \in \Acal} \left( \sum_{a \in \Acal} \pi_b(a|s) \bar{W}(\tilde{a}|s,a) \frac{\pi_e(\tilde{a}|s)}{\pi_{b^{\smallplus}}(\tilde{a}|s)} Q^{\pi_e}_{t:T}(s,\tilde{a}) \right) \right] \\
    &\overset{(3)}{=} \E_{s \sim d_t}\left[\sum_{\tilde{a} \in \Acal} \left( \Big(\sum_{a \in \Acal} \pi_b(a|s) \bar{W}(\tilde{a}|s,a)\Big) \frac{\pi_e(\tilde{a}|s)}{\pi_{b^{\smallplus}}(\tilde{a}|s)} Q^{\pi_e}_{t:T}(s,\tilde{a}) \right) \right] \\
    &\overset{(4)}{=} \E_{s \sim d_t}\left[\sum_{\tilde{a} \in \Acal} \cancel{\pi_{b^+}(\tilde{a}|s)} \frac{\pi_e(\tilde{a}|s)}{\cancel{\pi_{b^{\smallplus}}(\tilde{a}|s)}} Q^{\pi_e}_{t:T}(s,\tilde{a}) \right] \\
    &\overset{\phantom{(5)}}{=} \E_{s \sim d_t}\left[\sum_{\tilde{a} \in \Acal} \pi_e(\tilde{a}|s) Q^{\pi_e}_{t:T}(s,\tilde{a}) \right] \\
    &\overset{\phantom{(5)}}{=} \E_{s \sim d_t}\E_{\tilde{a}\sim \pi_e}[Q^{\pi_e}_{t:T}(s,\tilde{a})] = \E_{s \sim d_t} \left[ V_{t:T}^{\pi_e}(s) \right]
\end{align*}

where in $(1)$ we replace $\EE_{g_t^{\tilde{a}} \sim G_{t}(s,\tilde{a})}[g_t^{\tilde{a}}]$ with $Q^{\pi_e}_{t:T}(s,\tilde{a})$ following \cref{asm:perfect-annot-rl} and combine it with $\EE_{s,a \sim d_t}[r_t + \gamma v_{T-t}] = Q^{\pi_e}_{t:T}(s,\tilde{a})$ (as shown above) in a single summation over $\tilde{a} \in \Acal$, in $(2)$ we swap the order of summations, in $(3)$ we take out common factors that do not depend on the inner summation $a$, and in $(4)$ we use \cref{def:aug-beh} for $\pi_{b^{\smallplus}}(\tilde{a}|s)$, which cancels out the denominator in the importance ratio on the next line. The proof techniques are similar to that in \cref{appx:C-IS-bias-proof}. 
\end{adjustwidth}

Since $\E[v_{T-t}] = \E_{s\sim d_{h+1}}[V_{(t+1):T}^{\pi_e}(s)]$ implies $\E[v_{T-t+1}] = \E_{s\sim d_{t}}[V_{t:T}^{\pi_e}(s)]$, and the base case $\E[v_0] = \E_{s\sim d_{T+1}} [V_{(T+1):T}^{\pi_e}(s)]$ is true, by mathematical induction, we have the desired property that $\E[\hat{v}^{\textup{C-PDIS}}] = \E[v_{T}] = \E_{s\sim d_{1}}[V_{1:T}^{\pi_e}(s)] = v(\pi_e)$. 
\end{proof}

\newpage
\section{Extended Discussions on Practical Implications}

\subsection{Using an Approximate MDP Model to Correct Annotation Bias} \label{appx:practical-annot-behavior-IS}

In the sequential RL setting, if the annotation function reflects the expected returns under the behavior policy, i.e. $G_t = Q^{\pi_b}_{t:T}$, then annotations have a nonzero bias of $\epsilon_{G_t} = Q^{\pi_b}_{t:T} - Q^{\pi_e}_{t:T} \neq 0$ (since $Q^{\pi_e}_{t:T} \neq Q^{\pi_b}_{t:T}$ for meaningful OPE problems where $\pi_e \neq \pi_b$), and consequently, C-PDIS using such annotations becomes biased. While such annotations may aid in model selection (as shown in experiments), here we further propose a procedure to convert the annotations so that they better reflect the evaluation policy. 

From the offline data, we first learn an approximate model of the MDP $\hat{\Mcal}$, which includes the transition model $\hat{p}(s'|s,a)$ and reward model $\hat{r}(s,a)$. $\hat{\Mcal}$ is then used to evaluate both $\pi_b$ and $\pi_e$ using a model-based approach, leading to estimated horizon-specific Q-functions, $\hat{Q}^{\pi_b}_{t:T}$ and $\hat{Q}^{\pi_e}_{t:T}$. We then approximate the annotation error for state $s_t$ occurring at horizon $t$ and counterfactual action $\tilde{a}$ as $\hat{\epsilon}_{G_t}(s_t, \tilde{a}) = \hat{Q}^{\pi_b}_{t:T}(s_t, \tilde{a}) - \hat{Q}^{\pi_e}_{t:T}(s_t, \tilde{a})$. Given the factual state-action pair $(s^{(i)}_{t}, a^{(i)}_{t})$ and the counterfactual annotation $g^{(i),\tilde{a}}_{t} \sim Q^{\pi_b}_{t:T}(s^{(i)}_{t}, \tilde{a})$ for action $\tilde{a}$, we convert the annotation as 
\[\hat{g}^{(i),\tilde{a}}_{t} = g^{(i),\tilde{a}}_{t} - \hat{\epsilon}_{G_t}(s_t, \tilde{a}) \]
One may verify that $\smash{\hat{g}^{(i),\tilde{a}}_{t} \sim Q^{\pi_e}_{t:T}(s^{(i)}_{t}, \tilde{a})}$ in expectation. Note that this approach for annotation conversion is only possible if the counterfactual action has support in the offline data (i.e., $(s^{(i)}_{t}, \tilde{a})$ is seen in the data); otherwise, we suggest using the annotation as collected.

\subsection{Imputing Missing Annotations} \label{appx:practical-missing}

Consider the following scenario: for two instances of the same factual $(s,a)$ in the dataset, only one instance has a counterfactual annotation but not the other. This means that the weights for the factual and counterfactual are $(0.5, 0.5)$ and $(1,0)$ (assuming a binary action space), and the weight distribution $W(\cdot|s,a)$ has a nonzero variance and covariance. Specifically, in the example above with two samples, $\sigma_{W}(a|s,a)^2 = \sigma_{W}(\tilde{a}|s,a)^2 = \Cov(W(a|s,a), W(\tilde{a}|s,a)) = 0.0625$. These variances appear in the variance decomposition of C-IS as shown in \cref{appx-thm:C-IS-variance} and may lead to an overall larger variance compared to IS. 

To address this issue, we suggest a procedure to impute the missing annotations using other available annotations when possible. Given all annotations $\gvec_{\Dcal} = \{g^{(i),\tilde{a}}_t: c^{(i),\tilde{a}}_t = 1\}$ associated with dataset $\Dcal = \{\tau^{(i)}\}_{i=1}^{N}$, we first build an approximate annotation model $\smash{\hat{G}}$ by solving a regression problem on $\{\bigl( (s^{(i)}_t,\tilde{a}), g^{(i),\tilde{a}}_t \bigr): c^{(i),\tilde{a}}_t = 1\}$ using all available annotations $\gvec_{\Dcal}$. For discrete state and action spaces, this is essentially to averaging the annotations for each state-action pair. Then, $\hat{G}$ can be used to impute the missing annotations as $\hat{\gvec} = \{\hat{G}(s^{(i)}_t, \tilde{a}): c^{(i),\tilde{a}}_t = 0, \tilde{a} \in \Acal \setminus \{a^{(i)}_t\}\}$ if there is ``support'' for annotations of the same state and action, i.e., $c^{(i),\tilde{a}}_t = 0$ (annotation is missing) and $\sum_{i'=1}^{N} \sum_{t'=1}^{T} \mathbbm{1}[s^{(i')}_{t'} = s^{(i)}_{t}]\, c^{(i'),\tilde{a}}_{t'} > 0$ (annotation has support and thus can be imputed). This approximate annotation model $\hat{G}$ may be biased, but generally its bias can be outweighed by the benefit of variance reduction for having equal weights (we also observed this empirically in \cref{appx:experiment-bandits,appx:experiment-sepsisSim}).

\subsection{Optimizing Weighting Schemes to Minimize Variance} \label{appx:practical-weights}

Using a worked example, we illustrate that optimizing weights for variance reduction is a highly non-trivial problem. Suppose we have a bandit problem with one state $\{s\}$, two actions $\{\nearrow,\searrow\}$, and the reward function is $R(s,\nearrow) = \mathcal{N}(r_0, \sigma_0^2)$, $R(s,\searrow) = \mathcal{N}(r_1, \sigma_1^2)$. Furthermore, suppose all annotations are available, allowing us to use constant weights and eliminate the weight variances, and suppose the annotation function is identical to the reward function, i.e., $G=R$. We can simplify the variance decomposition of C-IS to be 
\begin{align}
\begin{split}
    \V[\hat{v}^{\textup{C-IS}}] = \mathbb{V}_{s \sim d_1}[V^{\pi_e}(s)]\ +&\ \textstyle \mathbb{E}_{s \sim d_1} \Big[\mathbb{V}_{a\sim \pi_b(s)} \left[\sum_{\tilde{a} \in \mathcal{A}} \rho^{\scriptscriptstyle +}(\tilde{a}|s)\, \bar{W}(\tilde{a}|s,a)\, \bar{R}(s,\tilde{a}) \right] \Big] \\
    +&\ \textstyle \mathbb{E}_{s \sim d_1} \mathbb{E}_{a\sim \pi_b(s)} \Big[ \sum_{\tilde{a} \in \mathcal{A}} \rho^{\scriptscriptstyle +}(\tilde{a}|s)^2\, \bar{W}(\tilde{a}|s,a)^2\, \sigma_R(s,\tilde{a})^2 \Big]
\end{split} \label{eqn:C-IS-variance-simple}
\end{align}

We denote policies in terms of the probabilities assigned to the two actions. 

\textbf{Setting 1.} Consider $\pi_b(s) = [1,0]$, $\pi_e(s) = [\alpha, 1-\alpha]$. The offline dataset contains only $(s, a=\nearrow)$ with counterfactual annotations $(s, \tilde{a}=\searrow)$, and we assume they are each assigned a weight of $w$ and $(1-w)$. The augmented behavior policy is then $\pi_{b^{\smallplus}}(s) = [w, 1-w]$. The first and second terms of \cref{eqn:C-IS-variance-simple} are zero because no variance is associated with sampling state-action pairs (only one possibility under $\pi_b$). The third term becomes (state $s$ is omitted from the expressions)
\begin{align*}
    &\ \frac{\pi_{e}(\nearrow)}{\pi_{b^{\smallplus}}(\nearrow)} \bar{W}(\nearrow|\nearrow)^2 \sigma_0^2 + \frac{\pi_{e}(\searrow)}{\pi_{b^{\smallplus}}(\searrow)} \bar{W}(\searrow|\nearrow)^2 \sigma_1^2 \\
    =&\ \frac{\alpha}{w} \times w^2 \times \sigma_0^2 + \frac{1-\alpha}{1-w} \times (1-w)^2 \times \sigma_1^2 \\
    =&\ w \alpha \sigma_0^2 + (1-w) (1-\alpha)\sigma_1^2 \\
    =&\ \big(\alpha \sigma_0^2 - (1-\alpha) \sigma_1^2\big) w + (1-\alpha)\sigma_1^2
\end{align*}
This is a linear function of $w$, and where the variance achieves the minimum value depends on the slope $\big(\alpha \sigma_0^2 - (1-\alpha) \sigma_1^2\big)$. 
\begin{itemize}
    \item If $\alpha \sigma_0^2 = (1-\alpha) \sigma_1^2$, the slope is zero, and $w$ does not affect variance. 
    \item If $\alpha \sigma_0^2 < (1-\alpha) \sigma_1^2$, the slope is negative, and $w^* \to 1$ achieves minimum variance of $\alpha\sigma_0^2$. 
    \item If $\alpha \sigma_0^2 > (1-\alpha) \sigma_1^2$, the slope is positive, and $w^* \to 0$ achieves minimum variance of $(1-\alpha)\sigma_1^2$. 
\end{itemize}

\textbf{Setting 2.} Consider $\pi_b(s) = [\beta, 1-\beta]$, $\pi_e(s) = [\alpha, 1-\alpha]$. In the offline dataset, $(s, a=\nearrow)$ appears with probability $\beta$ and $(s, a=\searrow)$ appears with probability $1-\beta$. When the factual data is $(s, a=\nearrow)$, the counterfactual annotation is for $(s, \tilde{a}=\searrow)$, and let the weights be $w_0$ and $(1-w_0)$; when the factual data is $(s, a=\searrow)$, the counterfactual annotation is for $(s, \tilde{a}=\nearrow)$, and let the weights be $w_1$ and $(1-w_1)$. The augmented behavior policy is then 
\[\pi_{b^{\smallplus}}(s) = \begin{cases} \beta w_0 + (1-\beta)(1-w_1) \\ \beta (1-w_0) + (1-\beta) w_1 \end{cases}. \]

The first term of \cref{eqn:C-IS-variance-simple} is zero because there is only one state. The third term is
\begin{align*}
    &\ \beta \Big(\frac{\pi_{e}(\nearrow)}{\pi_{b^{\smallplus}}(\nearrow)} \bar{W}(\nearrow|\nearrow)^2 \sigma_0^2 + \frac{\pi_{e}(\searrow)}{\pi_{b^{\smallplus}}(\searrow)} \bar{W}(\searrow|\nearrow)^2 \sigma_1^2 \Big) \\
    &\ \ + (1-\beta)\Big(\frac{\pi_{e}(\searrow)}{\pi_{b^{\smallplus}}(\searrow)} \bar{W}(\nearrow|\searrow)^2 \sigma_0^2 + \frac{\pi_{e}(\nearrow)}{\pi_{b^{\smallplus}}(\nearrow)} \bar{W}(\nearrow|\searrow)^2 \sigma_1^2 \Big) \\
    =&\ \ \beta \Big(\frac{\alpha}{\beta w_0 + (1-\beta)(1-w_1)} \times w_0^2 \times \sigma_0^2 + \frac{1-\alpha}{\beta (1-w_0) + (1-\beta) w_1} \times (1-w_0)^2 \times \sigma_1^2 \Big) \\
    &\ + (1-\beta) \Big(\frac{\alpha}{\beta w_0 + (1-\beta)(1-w_1)} \times (1-w_1)^2 \times \sigma_0^2 + \frac{1-\alpha}{\beta (1-w_0) + (1-\beta) w_1} \times w_1^2 \times \sigma_1^2 \Big) \\
    =&\ \ \frac{\beta w_0^2 + (1-\beta) (1-w_1)^2}{\beta w_0 + (1-\beta)(1-w_1)} \times \alpha \sigma_0^2 + \frac{\beta (1-w_0)^2 + (1-\beta) w_1^2}{\beta (1-w_0) + (1-\beta) w_1}  \times (1-\alpha) \sigma_1^2
\end{align*}

Next, we will attempt to simplify the second variance term:
\[\textstyle \mathbb{V}_{a\sim \pi_b(s)} \left[\sum_{\tilde{a} \in \mathcal{A}} \rho^{\scriptscriptstyle +}(\tilde{a}|s)\, \bar{W}(\tilde{a}|s,a)\, \bar{R}(s,\tilde{a}) \right].\] 
First note that
\begin{align*}
    &\ \textstyle \mathbb{E}_{a\sim \pi_b(s)} \left[\sum_{\tilde{a} \in \mathcal{A}} \rho^{\scriptscriptstyle +}(\tilde{a}|s)\, \bar{W}(\tilde{a}|s,a)\, \bar{R}(s,\tilde{a}) \right] \\
    =&\ \frac{\pi_{e}(\nearrow)}{\pi_{b^{\smallplus}}(\nearrow)} \bar{W}(\nearrow|\nearrow) r_0 + \frac{\pi_{e}(\searrow)}{\pi_{b^{\smallplus}}(\searrow)} \bar{W}(\searrow|\nearrow) r_1 \\
    =&\ \frac{\alpha}{w} \times w \times r_0 + \frac{1-\alpha}{1-w} \times (1-w) \times r_1 \\
    =&\ \alpha r_0 + (1-\alpha) r_1
\end{align*}

Then, the second variance term becomes
\begin{align*}
    &\ \ \beta \left(\frac{\alpha}{\beta w_0 + (1-\beta)(1-w_1)} \times w_0 \times r_0 + \frac{1-\alpha}{\beta (1-w_0) + (1-\beta) w_1} \times (1-w_0) \times r_1 - \big(\alpha r_0 + (1-\alpha) r_1\big)\right)^2 \\
    &\ + (1-\beta) \left(\frac{\alpha}{\beta w_0 + (1-\beta)(1-w_1)} \times (1-w_1) \times r_0 + \frac{1-\alpha}{\beta (1-w_0) + (1-\beta) w_1} \times w_1 \times r_1 - \big(\alpha r_0 + (1-\alpha) r_1\big) \right)^2 \\
    =&\ \beta \left[\alpha r_0 \Big(\frac{w_0}{\beta w_0 + (1-\beta)(1-w_1)} - 1 \Big) + (1-\alpha)r_1 \Big(\frac{(1-w_0)}{\beta (1-w_0) + (1-\beta) w_1} - 1 \Big)\right]^2 \\
    &\ + (1-\beta) \left[\alpha r_0 \Big(\frac{1-w_1}{\beta w_0 + (1-\beta)(1-w_1)} - 1 \Big) + (1-\alpha)r_1 \Big(\frac{w_1}{\beta (1-w_0) + (1-\beta) w_1} - 1 \Big)\right]^2 \\
    =&\ \beta \bigg[ \begin{alignedat}[t]{1}
    & \alpha^2 r_0^2 \Big(\frac{w_0}{\beta w_0 + (1-\beta)(1-w_1)} -1 \Big)^2 + (1-\alpha)^2 r_1^2 \Big(\frac{(1-w_0)}{\beta (1-w_0) + (1-\beta) w_1} - 1 \Big)^2 \\
    & + 2 \alpha (1-\alpha) r_0 r_1 \Big(\frac{w_0}{\beta w_0 + (1-\beta)(1-w_1)} - 1 \Big) \Big(\frac{(1-w_0)}{\beta (1-w_0) + (1-\beta) w_1} - 1 \Big) \bigg] \end{alignedat} \\
    &\ + (1-\beta) \bigg[ \begin{alignedat}[t]{1}
    & \alpha^2 r_0^2 \Big(\frac{1-w_1}{\beta w_0 + (1-\beta)(1-w_1)} - 1 \Big)^2 + (1-\alpha)^2 r_1^2 \Big(\frac{w_1}{\beta (1-w_0) + (1-\beta) w_1} - 1 \Big)^2 \\
    & + 2 \alpha (1-\alpha) r_0 r_1 \Big(\frac{1-w_1}{\beta w_0 + (1-\beta)(1-w_1)} - 1 \Big) \Big(\frac{w_1}{\beta (1-w_0) + (1-\beta) w_1} - 1 \Big) \bigg] \end{alignedat} \\
    =&\ \begin{alignedat}[t]{1}
        &\ \quad \alpha^2 r_0^2 \left[\beta \Big(\frac{w_0}{\beta w_0 + (1-\beta)(1-w_1)}-1\Big)^2 + (1-\beta) \Big(\frac{1-w_1}{\beta w_0 + (1-\beta)(1-w_1)} - 1\Big)^2 \right] \\
        &+ (1-\alpha)^2 r_1^2 \left[\beta \Big(\frac{1-w_0}{\beta (1-w_0) + (1-\beta) w_1}-1\Big)^2 + (1-\beta) \Big(\frac{w_1}{\beta (1-w_0) + (1-\beta) w_1} - 1\Big)^2 \right] \\
        &+ 2 \alpha (1-\alpha) r_0 r_1 
        \begin{alignedat}[t]{1}
        &  \bigg[\Big(\frac{w_0}{\beta w_0 + (1-\beta)(1-w_1)}-1\Big)\Big(\frac{1-w_0}{\beta (1-w_0) + (1-\beta) w_1}-1\Big)  \\
        &+ \Big(\frac{1-w_1}{\beta w_0+ (1-\beta)(1-w_1)}-1\Big)\Big(\frac{w_1}{\beta (1-w_0) + (1-\beta) w_1}-1\Big)  \bigg]\end{alignedat}
    \end{alignedat}
\end{align*}

As shown above, the variance expression is a rather complicated function of $(w_0,w_1)$. To solve for its minimum, one may take the derivatives with respect to each of $w_0$ and $w_1$ and solve for the zeros. The final solution will depend on the problem parameters, including $r_0,r_1, \sigma_0^2, \sigma_1^2, \alpha, \beta$. Here, we do not solve for the final solution, and note that this approach may not be applicable to real-world problems due to its explicitly dependence on problem parameters that may be unknown. We encourage future work to explore different methods to find the variance-minimizing weighting schemes.

\newpage
\section{Extended Experiments} \label{appx:experiment}

\subsection{Synthetic Domains - Bandits} \label{appx:experiment-bandits}

\subsubsection{Two-State Bandits}
We expand on the experiments shown in \cref{sec:experiments-bandits}, where we consider a class of bandit problems with two states $\{s_1, s_2\}$ (drawn with equal probability), two actions $\Acal = \{\nearrow, \searrow\}$ (recall \cref{fig:dataset-example}), and corresponding reward distributions $R(s_i, a) \sim \mathcal{N}(\bar{R}_{(s_i,a)}, \sigma^2)$. Without loss of generality, we assume $\nearrow$ is always taken from $s_2$ by both $\pi_b$ and $\pi_e$. For $s_1$, we consider deterministic policies (in which one action is always taken) as well as stochastic policies (which take the two actions with probabilities that sum to $1$); see column/row header in \cref{appx-tab:bandit-2state-results}. Given $(\pi_b, \pi_e)$, we draw $1,000$ samples following $\pi_b$ and then evaluate $\pi_e$ using various estimators, including standard IS, the naive baseline of adding counterfactual annotations as new samples (\cref{sec:intuition}), and C*-IS. We assume that counterfactual annotations are only available for $s_1$, and all annotations are drawn from the true reward function. We measure the bias, standard deviation (square root of variance), and root mean-squared error (RMSE) of the estimators with respect to $v(\pi_e)$. In \cref{appx-tab:bandit-2state-results} we consider three settings of the rewards: (i) $\bar{R}_{(s_1,\nearrow)} = 1, \bar{R}_{(s_1,\searrow)} = 2$, i.e., both actions lead to a positive reward. (ii) $\bar{R}_{(s_1,\nearrow)} = -1, \bar{R}_{(s_1,\searrow)} = 1$, i.e., one action leads to a positive reward and the other leads to a negative reward, (iii) $\bar{R}_{(s_1,\nearrow)} = -1, \bar{R}_{(s_1,\searrow)} = -2$, i.e., both actions lead to a negative reward. In all cases, the reward for state $s_2$ is set to $0$ for both actions. 

\textbf{Naive baseline fails due to bias.} The naive baseline often has a nonzero bias and worse RMSE than IS regardless of whether the offline data has support. This is consistent with our analyses and example provided in \cref{appx:intuition}. 

\textbf{Bias reduction in support-deficient settings.} In the first two rows of each sub-table in \cref{appx-tab:bandit-2state-results}, $\pi_b$ is deterministic and the untaken action has poor support in the offline data. IS is often biased for these cases; note that in cases where $\pi_e$ assigns a small probability to the unsupported action (e.g., row 2, column 4, $\pi_e$ takes $\nearrow$ with probability $0.1$), the bias is small and shows up as $0$ after rounding. In particular, when rewards as all positive (the first table of \cref{appx-tab:bandit-2state-results}), the bias is negative, and when rewards are all negative (the third table of \cref{appx-tab:bandit-2state-results}), the bias is positive. In the second table of \cref{appx-tab:bandit-2state-results}, the direction of bias depends on the reward of the unsupported action. In the last three rows, IS is unbiased when the offline data has full support. In contrast, C*-IS is unbiased in all cases (but for rounding errors), and can reduce bias compared to IS in support-deficient settings by making use of counterfactual annotations. RMSE of C*-IS is often (but not always) reduced compared to IS even though variance can sometimes increase. This is consistent with our analyses in \cref{appx:C-IS-bias-proof}. 

\textbf{Variance reduction in well-supported settings.} In the last three rows of each sub-table in \cref{appx-tab:bandit-2state-results}, the offline data has full support because $\pi_b$ is stochastic, and IS is unbiased in these cases. C*-IS is also unbiased in these cases and achieves lower variance, leading to lower RMSE than IS. This is consistent with our analyses in \cref{appx:C-IS-varianc-proof}.

\subsubsection{One-State Bandits}
To simplify further experiments, we modify the two-state bandits above so that $s_2$ is drawn with probability $0$ --- equivalently, we have a set of one-state bandits (with only $s_1$) with two actions $\Acal = \{0,1\}$ where $0$ corresponds to $\nearrow$ and $1$ corresponds to $\searrow$, and corresponding reward distributions $R_0 \sim \mathcal{N}(\bar{R}_0, \sigma_0^2)$ and $R_1 \sim \mathcal{N}(\bar{R}_1, \sigma_1^2)$. For $\pi_b$ and $\pi_e$, we consider the same set of policies as before, and the same evaluation setup. We omit the naive approach in this setting because it is equivalent to our proposed approach when there is only one state (the state distribution is not affected by adding counterfactual annotations directly). 

Results for this setting are summarized in \cref{appx-tab:bandit-1state-results} and \textbf{show similar trends as above}. Next, we use the one-state bandit to study the effect of weights $\wvec$ on OPE performance.

\begin{table}[!htbp]
\centering
\renewcommand{\arraystretch}{1}
\setlength\arraycolsep{2pt}
\setlength{\tabcolsep}{4pt}
\caption{Summary of performance on the \textbf{two-state} bandit problem for various $\pi_b$ (rows) and $\pi_e$ (columns), where each policy is denoted by its probabilities assigned to the two actions from $s_1$. Each cell of the table corresponds to a ($\pi_b$, $\pi_e$) combination, for which we report \textsf{(bias, std, RMSE)} for three estimators: IS in the top row, naive in the middle row, and C*-IS in the bottom row. }
\label{appx-tab:bandit-2state-results}

\vspace{1.5em}

$\bar{R}_{(s_1,\nearrow)} = 1, \bar{R}_{(s_1,\searrow)} = 2$ \\[2pt]
\scalebox{0.9}{
\begin{tabular}{c|ccccc}
\toprule
\diagbox{$\pi_b$}{$\pi_e$} & $[\mask{0.0}{1},\mask{0.0}{0}]$ & $[\mask{0.0}{0},\mask{0.0}{1}]$ & $[0.5,0.5]$ & $[0.1,0.9]$ & $[0.8,0.2]$ \\
\midrule
$[\mask{0.0}{1.0},\mask{0.0}{0.0}]$ 
& \scalebox{0.8}{\color{lightgray} $\begin{matrix} \mask{0.00}{0} & \mask{0.00}{0.7} & \mask{0.00}{0.7} \\ \mask{0.00}{0.2} & \mask{0.00}{1.1} & \mask{0.00}{1.2} \\ \mask{0.00}{0} & \mask{0.00}{0.7} & \mask{0.00}{0.7} \end{matrix}$} 
& \scalebox{0.8}{$\begin{matrix} \mask{0.00}{\shortminus 1} & \mask{0.00}{0.4} & \mask{0.00}{1} \\ \mask{0.00}{0.3} & \mask{0.00}{2} & \mask{0.00}{2} \\ \mask{0.00}{0} & \mask{0.00}{1.1} & \mask{0.00}{1.1} \end{matrix}$} 
& \scalebox{0.8}{$\begin{matrix} \mask{0.00}{\shortminus 0.5} & \mask{0.00}{0.5} & \mask{0.00}{0.7} \\ \mask{0.00}{0.3} & \mask{0.00}{0.9} & \mask{0.00}{1} \\ \mask{0.00}{0} & \mask{0.00}{0.9} & \mask{0.00}{0.9} \end{matrix}$} 
& \scalebox{0.8}{$\begin{matrix} \mask{0.00}{\shortminus 0.9} & \mask{0.00}{0.4} & \mask{0.00}{1} \\ \mask{0.00}{0.3} & \mask{0.00}{1.8} & \mask{0.00}{1.8} \\ \mask{0.00}{0} & \mask{0.00}{1.1} & \mask{0.00}{1.1} \end{matrix}$} 
& \scalebox{0.8}{$\begin{matrix} \mask{0.00}{\shortminus 0.2} & \mask{0.00}{0.6} & \mask{0.00}{0.6} \\ \mask{0.00}{0.2} & \mask{0.00}{0.9} & \mask{0.00}{0.9} \\ \mask{0.00}{0} & \mask{0.00}{0.7} & \mask{0.00}{0.7} \end{matrix}$} 
\\[12pt]
$[\mask{0.0}{0.0},\mask{0.0}{1.0}]$ 
& \scalebox{0.8}{$\begin{matrix} \mask{0.00}{\shortminus 0.5} & \mask{0.00}{0.4} & \mask{0.00}{0.6} \\ \mask{0.00}{0.2} & \mask{0.00}{1.1} & \mask{0.00}{1.1} \\ \mask{0.00}{0} & \mask{0.00}{0.7} & \mask{0.00}{0.7} \end{matrix}$} 
& \scalebox{0.8}{\color{lightgray} $\begin{matrix} \mask{0.00}{0} & \mask{0.00}{1.1} & \mask{0.00}{1.1} \\ \mask{0.00}{0.3} & \mask{0.00}{2} & \mask{0.00}{2} \\ \mask{0.00}{0} & \mask{0.00}{1.1} & \mask{0.00}{1.1} \end{matrix}$} 
& \scalebox{0.8}{$\begin{matrix} \mask{0.00}{\shortminus 0.2} & \mask{0.00}{0.6} & \mask{0.00}{0.7} \\ \mask{0.00}{0.3} & \mask{0.00}{0.9} & \mask{0.00}{1} \\ \mask{0.00}{0} & \mask{0.00}{0.9} & \mask{0.00}{0.9} \end{matrix}$} 
& \scalebox{0.8}{$\begin{matrix} \mask{0.00}{0} & \mask{0.00}{1} & \mask{0.00}{1} \\ \mask{0.00}{0.3} & \mask{0.00}{1.8} & \mask{0.00}{1.8} \\ \mask{0.00}{0} & \mask{0.00}{1.1} & \mask{0.00}{1.1} \end{matrix}$} 
& \scalebox{0.8}{$\begin{matrix} \mask{0.00}{\shortminus 0.4} & \mask{0.00}{0.4} & \mask{0.00}{0.6} \\ \mask{0.00}{0.2} & \mask{0.00}{0.8} & \mask{0.00}{0.9} \\ \mask{0.00}{0} & \mask{0.00}{0.7} & \mask{0.00}{0.7} \end{matrix}$} 
\\[12pt]
$[\mask{0.0}{0.5},\mask{0.0}{0.5}]$ 
& \scalebox{0.8}{$\begin{matrix} \mask{0.00}{0} & \mask{0.00}{1} & \mask{0.00}{1} \\ \mask{0.00}{0.2} & \mask{0.00}{1.1} & \mask{0.00}{1.1} \\ \mask{0.00}{0} & \mask{0.00}{0.7} & \mask{0.00}{0.7} \end{matrix}$} 
& \scalebox{0.8}{$\begin{matrix} \mask{0.00}{0} & \mask{0.00}{1.8} & \mask{0.00}{1.8} \\ \mask{0.00}{0.3} & \mask{0.00}{2} & \mask{0.00}{2} \\ \mask{0.00}{0} & \mask{0.00}{1.1} & \mask{0.00}{1.1} \end{matrix}$} 
& \scalebox{0.8}{\color{lightgray} $\begin{matrix} \mask{0.00}{0} & \mask{0.00}{1} & \mask{0.00}{1} \\ \mask{0.00}{0.3} & \mask{0.00}{0.9} & \mask{0.00}{1} \\ \mask{0.00}{0} & \mask{0.00}{0.9} & \mask{0.00}{0.9} \end{matrix}$} 
& \scalebox{0.8}{$\begin{matrix} \mask{0.00}{0} & \mask{0.00}{1.6} & \mask{0.00}{1.6} \\ \mask{0.00}{0.3} & \mask{0.00}{1.8} & \mask{0.00}{1.8} \\ \mask{0.00}{0} & \mask{0.00}{1.1} & \mask{0.00}{1.1} \end{matrix}$} 
& \scalebox{0.8}{$\begin{matrix} \mask{0.00}{0} & \mask{0.00}{0.8} & \mask{0.00}{0.8} \\ \mask{0.00}{0.2} & \mask{0.00}{0.8} & \mask{0.00}{0.9} \\ \mask{0.00}{0} & \mask{0.00}{0.7} & \mask{0.00}{0.7} \end{matrix}$} 
\\[12pt]
$[\mask{0.0}{0.1},\mask{0.0}{0.9}]$ 
& \scalebox{0.8}{$\begin{matrix} \mask{0.00}{0.1} & \mask{0.00}{2.6} & \mask{0.00}{2.6} \\ \mask{0.00}{0.2} & \mask{0.00}{1.1} & \mask{0.00}{1.1} \\ \mask{0.00}{0} & \mask{0.00}{0.7} & \mask{0.00}{0.7} \end{matrix}$} 
& \scalebox{0.8}{$\begin{matrix} \mask{0.00}{0} & \mask{0.00}{1.2} & \mask{0.00}{1.2} \\ \mask{0.00}{0.3} & \mask{0.00}{2} & \mask{0.00}{2} \\ \mask{0.00}{0} & \mask{0.00}{1.1} & \mask{0.00}{1.1} \end{matrix}$} 
& \scalebox{0.8}{$\begin{matrix} \mask{0.00}{0} & \mask{0.00}{1.3} & \mask{0.00}{1.3} \\ \mask{0.00}{0.3} & \mask{0.00}{0.9} & \mask{0.00}{1} \\ \mask{0.00}{0} & \mask{0.00}{0.9} & \mask{0.00}{0.9} \end{matrix}$} 
& \scalebox{0.8}{\color{lightgray} $\begin{matrix} \mask{0.00}{0} & \mask{0.00}{1.1} & \mask{0.00}{1.1} \\ \mask{0.00}{0.3} & \mask{0.00}{1.8} & \mask{0.00}{1.8} \\ \mask{0.00}{0} & \mask{0.00}{1.1} & \mask{0.00}{1.1} \end{matrix}$} 
& \scalebox{0.8}{$\begin{matrix} \mask{0.00}{0.1} & \mask{0.00}{2} & \mask{0.00}{2} \\ \mask{0.00}{0.2} & \mask{0.00}{0.8} & \mask{0.00}{0.9} \\ \mask{0.00}{0} & \mask{0.00}{0.7} & \mask{0.00}{0.7} \end{matrix}$} 
\\[12pt]
$[\mask{0.0}{0.8},\mask{0.0}{0.2}]$ 
& \scalebox{0.8}{$\begin{matrix} \mask{0.00}{0} & \mask{0.00}{0.8} & \mask{0.00}{0.8} \\ \mask{0.00}{0.2} & \mask{0.00}{1.1} & \mask{0.00}{1.2} \\ \mask{0.00}{0} & \mask{0.00}{0.7} & \mask{0.00}{0.7} \end{matrix}$} 
& \scalebox{0.8}{$\begin{matrix} \mask{0.00}{0.1} & \mask{0.00}{3.2} & \mask{0.00}{3.2} \\ \mask{0.00}{0.3} & \mask{0.00}{2} & \mask{0.00}{2} \\ \mask{0.00}{0} & \mask{0.00}{1.1} & \mask{0.00}{1.1} \end{matrix}$} 
& \scalebox{0.8}{$\begin{matrix} \mask{0.00}{0} & \mask{0.00}{1.6} & \mask{0.00}{1.6} \\ \mask{0.00}{0.3} & \mask{0.00}{0.9} & \mask{0.00}{1} \\ \mask{0.00}{0} & \mask{0.00}{0.9} & \mask{0.00}{0.9} \end{matrix}$} 
& \scalebox{0.8}{$\begin{matrix} \mask{0.00}{0} & \mask{0.00}{2.9} & \mask{0.00}{2.9} \\ \mask{0.00}{0.3} & \mask{0.00}{1.7} & \mask{0.00}{1.8} \\ \mask{0.00}{0} & \mask{0.00}{1.1} & \mask{0.00}{1.1} \end{matrix}$} 
& \scalebox{0.8}{\color{lightgray} $\begin{matrix} \mask{0.00}{0} & \mask{0.00}{0.8} & \mask{0.00}{0.8} \\ \mask{0.00}{0.2} & \mask{0.00}{0.8} & \mask{0.00}{0.9} \\ \mask{0.00}{0} & \mask{0.00}{0.7} & \mask{0.00}{0.7} \end{matrix}$} 
\\
\bottomrule
\end{tabular}
}

\vspace{3em}

$\bar{R}_{(s_1,\nearrow)} = -1, \bar{R}_{(s_1,\searrow)} = 1$ \\[2pt]
\scalebox{0.9}{
\begin{tabular}{c|ccccc}
\toprule
\diagbox{$\pi_b$}{$\pi_e$} & $[\mask{0.0}{1},\mask{0.0}{0}]$ & $[\mask{0.0}{0},\mask{0.0}{1}]$ & $[0.5,0.5]$ & $[0.1,0.9]$ & $[0.8,0.2]$ \\
\midrule
$[\mask{0.0}{1.0},\mask{0.0}{0.0}]$ 
& \scalebox{0.8}{\color{lightgray} $\begin{matrix} \mask{0.00}{0} & \mask{0.00}{0.7} & \mask{0.00}{0.7} \\ \mask{0.00}{\shortminus 0.1} & \mask{0.00}{1.1} & \mask{0.00}{1.1} \\ \mask{0.00}{0} & \mask{0.00}{0.7} & \mask{0.00}{0.7} \end{matrix}$} 
& \scalebox{0.8}{$\begin{matrix} \mask{0.00}{\shortminus 0.5} & \mask{0.00}{0.4} & \mask{0.00}{0.6} \\ \mask{0.00}{0.2} & \mask{0.00}{1.1} & \mask{0.00}{1.1} \\ \mask{0.00}{0} & \mask{0.00}{0.7} & \mask{0.00}{0.7} \end{matrix}$} 
& \scalebox{0.8}{$\begin{matrix} \mask{0.00}{\shortminus 0.2} & \mask{0.00}{0.5} & \mask{0.00}{0.5} \\ \mask{0.00}{0} & \mask{0.00}{0.9} & \mask{0.00}{0.9} \\ \mask{0.00}{0} & \mask{0.00}{0.4} & \mask{0.00}{0.4} \end{matrix}$} 
& \scalebox{0.8}{$\begin{matrix} \mask{0.00}{\shortminus 0.4} & \mask{0.00}{0.4} & \mask{0.00}{0.6} \\ \mask{0.00}{0.2} & \mask{0.00}{1.1} & \mask{0.00}{1.1} \\ \mask{0.00}{0} & \mask{0.00}{0.6} & \mask{0.00}{0.6} \end{matrix}$} 
& \scalebox{0.8}{$\begin{matrix} \mask{0.00}{\shortminus 0.1} & \mask{0.00}{0.6} & \mask{0.00}{0.6} \\ \mask{0.00}{\shortminus 0.1} & \mask{0.00}{1} & \mask{0.00}{1} \\ \mask{0.00}{0} & \mask{0.00}{0.6} & \mask{0.00}{0.6} \end{matrix}$} 
\\[12pt]
$[\mask{0.0}{0.0},\mask{0.0}{1.0}]$ 
& \scalebox{0.8}{$\begin{matrix} \mask{0.00}{0.5} & \mask{0.00}{0.4} & \mask{0.00}{0.6} \\ \mask{0.00}{\shortminus 0.1} & \mask{0.00}{1.1} & \mask{0.00}{1.1} \\ \mask{0.00}{0} & \mask{0.00}{0.7} & \mask{0.00}{0.7} \end{matrix}$} 
& \scalebox{0.8}{\color{lightgray} $\begin{matrix} \mask{0.00}{0} & \mask{0.00}{0.7} & \mask{0.00}{0.7} \\ \mask{0.00}{0.2} & \mask{0.00}{1.1} & \mask{0.00}{1.2} \\ \mask{0.00}{0} & \mask{0.00}{0.7} & \mask{0.00}{0.7} \end{matrix}$} 
& \scalebox{0.8}{$\begin{matrix} \mask{0.00}{0.3} & \mask{0.00}{0.5} & \mask{0.00}{0.5} \\ \mask{0.00}{0} & \mask{0.00}{0.9} & \mask{0.00}{0.9} \\ \mask{0.00}{0} & \mask{0.00}{0.4} & \mask{0.00}{0.4} \end{matrix}$} 
& \scalebox{0.8}{$\begin{matrix} \mask{0.00}{0.1} & \mask{0.00}{0.6} & \mask{0.00}{0.7} \\ \mask{0.00}{0.2} & \mask{0.00}{1.1} & \mask{0.00}{1.1} \\ \mask{0.00}{0} & \mask{0.00}{0.6} & \mask{0.00}{0.6} \end{matrix}$} 
& \scalebox{0.8}{$\begin{matrix} \mask{0.00}{0.4} & \mask{0.00}{0.4} & \mask{0.00}{0.6} \\ \mask{0.00}{\shortminus 0.1} & \mask{0.00}{1} & \mask{0.00}{1} \\ \mask{0.00}{0} & \mask{0.00}{0.6} & \mask{0.00}{0.6} \end{matrix}$} 
\\[12pt]
$[\mask{0.0}{0.5},\mask{0.0}{0.5}]$ 
& \scalebox{0.8}{$\begin{matrix} \mask{0.00}{0} & \mask{0.00}{1.1} & \mask{0.00}{1.1} \\ \mask{0.00}{\shortminus 0.1} & \mask{0.00}{1.1} & \mask{0.00}{1.1} \\ \mask{0.00}{0} & \mask{0.00}{0.7} & \mask{0.00}{0.7} \end{matrix}$} 
& \scalebox{0.8}{$\begin{matrix} \mask{0.00}{0} & \mask{0.00}{1.1} & \mask{0.00}{1.1} \\ \mask{0.00}{0.2} & \mask{0.00}{1.2} & \mask{0.00}{1.2} \\ \mask{0.00}{0} & \mask{0.00}{0.7} & \mask{0.00}{0.7} \end{matrix}$} 
& \scalebox{0.8}{\color{lightgray} $\begin{matrix} \mask{0.00}{0} & \mask{0.00}{0.9} & \mask{0.00}{0.9} \\ \mask{0.00}{0} & \mask{0.00}{1} & \mask{0.00}{1} \\ \mask{0.00}{0} & \mask{0.00}{0.4} & \mask{0.00}{0.4} \end{matrix}$} 
& \scalebox{0.8}{$\begin{matrix} \mask{0.00}{0} & \mask{0.00}{1} & \mask{0.00}{1} \\ \mask{0.00}{0.2} & \mask{0.00}{1.1} & \mask{0.00}{1.1} \\ \mask{0.00}{0} & \mask{0.00}{0.6} & \mask{0.00}{0.6} \end{matrix}$} 
& \scalebox{0.8}{$\begin{matrix} \mask{0.00}{0} & \mask{0.00}{0.9} & \mask{0.00}{0.9} \\ \mask{0.00}{\shortminus 0.1} & \mask{0.00}{1} & \mask{0.00}{1} \\ \mask{0.00}{0} & \mask{0.00}{0.6} & \mask{0.00}{0.6} \end{matrix}$} 
\\[12pt]
$[\mask{0.0}{0.1},\mask{0.0}{0.9}]$ 
& \scalebox{0.8}{$\begin{matrix} \mask{0.00}{0} & \mask{0.00}{2.4} & \mask{0.00}{2.4} \\ \mask{0.00}{\shortminus 0.1} & \mask{0.00}{1.1} & \mask{0.00}{1.1} \\ \mask{0.00}{0} & \mask{0.00}{0.7} & \mask{0.00}{0.7} \end{matrix}$} 
& \scalebox{0.8}{$\begin{matrix} \mask{0.00}{0} & \mask{0.00}{0.7} & \mask{0.00}{0.7} \\ \mask{0.00}{0.2} & \mask{0.00}{1.1} & \mask{0.00}{1.2} \\ \mask{0.00}{0} & \mask{0.00}{0.7} & \mask{0.00}{0.7} \end{matrix}$} 
& \scalebox{0.8}{$\begin{matrix} \mask{0.00}{0} & \mask{0.00}{1.4} & \mask{0.00}{1.4} \\ \mask{0.00}{0} & \mask{0.00}{0.9} & \mask{0.00}{0.9} \\ \mask{0.00}{0} & \mask{0.00}{0.4} & \mask{0.00}{0.4} \end{matrix}$} 
& \scalebox{0.8}{\color{lightgray} $\begin{matrix} \mask{0.00}{0} & \mask{0.00}{0.8} & \mask{0.00}{0.8} \\ \mask{0.00}{0.2} & \mask{0.00}{1.1} & \mask{0.00}{1.1} \\ \mask{0.00}{0} & \mask{0.00}{0.6} & \mask{0.00}{0.6} \end{matrix}$} 
& \scalebox{0.8}{$\begin{matrix} \mask{0.00}{0} & \mask{0.00}{2} & \mask{0.00}{2} \\ \mask{0.00}{\shortminus 0.1} & \mask{0.00}{1} & \mask{0.00}{1} \\ \mask{0.00}{0} & \mask{0.00}{0.6} & \mask{0.00}{0.6} \end{matrix}$} 
\\[12pt]
$[\mask{0.0}{0.8},\mask{0.0}{0.2}]$ 
& \scalebox{0.8}{$\begin{matrix} \mask{0.00}{0} & \mask{0.00}{0.8} & \mask{0.00}{0.8} \\ \mask{0.00}{\shortminus 0.1} & \mask{0.00}{1.1} & \mask{0.00}{1.1} \\ \mask{0.00}{0.1} & \mask{0.00}{0.7} & \mask{0.00}{0.7} \end{matrix}$} 
& \scalebox{0.8}{$\begin{matrix} \mask{0.00}{0.1} & \mask{0.00}{1.8} & \mask{0.00}{1.8} \\ \mask{0.00}{0.2} & \mask{0.00}{1.1} & \mask{0.00}{1.2} \\ \mask{0.00}{0} & \mask{0.00}{0.7} & \mask{0.00}{0.7} \end{matrix}$} 
& \scalebox{0.8}{$\begin{matrix} \mask{0.00}{0} & \mask{0.00}{1.1} & \mask{0.00}{1.1} \\ \mask{0.00}{0} & \mask{0.00}{0.9} & \mask{0.00}{0.9} \\ \mask{0.00}{0} & \mask{0.00}{0.4} & \mask{0.00}{0.4} \end{matrix}$} 
& \scalebox{0.8}{$\begin{matrix} \mask{0.00}{0.1} & \mask{0.00}{1.7} & \mask{0.00}{1.7} \\ \mask{0.00}{0.1} & \mask{0.00}{1.1} & \mask{0.00}{1.1} \\ \mask{0.00}{0} & \mask{0.00}{0.6} & \mask{0.00}{0.6} \end{matrix}$} 
& \scalebox{0.8}{\color{lightgray} $\begin{matrix} \mask{0.00}{0} & \mask{0.00}{0.8} & \mask{0.00}{0.8} \\ \mask{0.00}{\shortminus 0.1} & \mask{0.00}{1} & \mask{0.00}{1} \\ \mask{0.00}{0} & \mask{0.00}{0.5} & \mask{0.00}{0.5} \end{matrix}$} 
\\
\bottomrule
\end{tabular}
}

\vspace{3em}

$\bar{R}_{(s_1,\nearrow)} = -1, \bar{R}_{(s_1,\searrow)} = -2$ \\[2pt]
\scalebox{0.9}{
\begin{tabular}{c|ccccc}
\toprule
\diagbox{$\pi_b$}{$\pi_e$} & $[\mask{0.0}{1},\mask{0.0}{0}]$ & $[\mask{0.0}{0},\mask{0.0}{1}]$ & $[0.5,0.5]$ & $[0.1,0.9]$ & $[0.8,0.2]$ \\
\midrule
$[\mask{0.0}{1.0},\mask{0.0}{0.0}]$ 
& \scalebox{0.8}{\color{lightgray} $\begin{matrix} \mask{0.00}{0} & \mask{0.00}{0.7} & \mask{0.00}{0.7} \\ \mask{0.00}{\shortminus 0.1} & \mask{0.00}{1.1} & \mask{0.00}{1.1} \\ \mask{0.00}{0} & \mask{0.00}{0.7} & \mask{0.00}{0.7} \end{matrix}$} 
& \scalebox{0.8}{$\begin{matrix} \mask{0.00}{1} & \mask{0.00}{0.4} & \mask{0.00}{1.1} \\ \mask{0.00}{\shortminus 0.3} & \mask{0.00}{2} & \mask{0.00}{2} \\ \mask{0.00}{0.1} & \mask{0.00}{1.1} & \mask{0.00}{1.1} \end{matrix}$} 
& \scalebox{0.8}{$\begin{matrix} \mask{0.00}{0.5} & \mask{0.00}{0.5} & \mask{0.00}{0.7} \\ \mask{0.00}{\shortminus 0.2} & \mask{0.00}{1} & \mask{0.00}{1} \\ \mask{0.00}{0.1} & \mask{0.00}{0.9} & \mask{0.00}{0.9} \end{matrix}$} 
& \scalebox{0.8}{$\begin{matrix} \mask{0.00}{0.9} & \mask{0.00}{0.4} & \mask{0.00}{1} \\ \mask{0.00}{\shortminus 0.3} & \mask{0.00}{1.7} & \mask{0.00}{1.8} \\ \mask{0.00}{0.1} & \mask{0.00}{1.1} & \mask{0.00}{1.1} \end{matrix}$} 
& \scalebox{0.8}{$\begin{matrix} \mask{0.00}{0.2} & \mask{0.00}{0.6} & \mask{0.00}{0.7} \\ \mask{0.00}{\shortminus 0.2} & \mask{0.00}{0.9} & \mask{0.00}{0.9} \\ \mask{0.00}{0} & \mask{0.00}{0.8} & \mask{0.00}{0.8} \end{matrix}$} 
\\[12pt]
$[\mask{0.0}{0.0},\mask{0.0}{1.0}]$ 
& \scalebox{0.8}{$\begin{matrix} \mask{0.00}{0.5} & \mask{0.00}{0.4} & \mask{0.00}{0.6} \\ \mask{0.00}{\shortminus 0.1} & \mask{0.00}{1.1} & \mask{0.00}{1.1} \\ \mask{0.00}{0} & \mask{0.00}{0.7} & \mask{0.00}{0.7} \end{matrix}$} 
& \scalebox{0.8}{\color{lightgray} $\begin{matrix} \mask{0.00}{0.1} & \mask{0.00}{1.1} & \mask{0.00}{1.1} \\ \mask{0.00}{\shortminus 0.3} & \mask{0.00}{2} & \mask{0.00}{2} \\ \mask{0.00}{0.1} & \mask{0.00}{1.1} & \mask{0.00}{1.1} \end{matrix}$} 
& \scalebox{0.8}{$\begin{matrix} \mask{0.00}{0.3} & \mask{0.00}{0.7} & \mask{0.00}{0.7} \\ \mask{0.00}{\shortminus 0.2} & \mask{0.00}{1} & \mask{0.00}{1} \\ \mask{0.00}{0.1} & \mask{0.00}{0.9} & \mask{0.00}{0.9} \end{matrix}$} 
& \scalebox{0.8}{$\begin{matrix} \mask{0.00}{0.1} & \mask{0.00}{1} & \mask{0.00}{1} \\ \mask{0.00}{\shortminus 0.3} & \mask{0.00}{1.7} & \mask{0.00}{1.8} \\ \mask{0.00}{0.1} & \mask{0.00}{1.1} & \mask{0.00}{1.1} \end{matrix}$} 
& \scalebox{0.8}{$\begin{matrix} \mask{0.00}{0.4} & \mask{0.00}{0.4} & \mask{0.00}{0.6} \\ \mask{0.00}{\shortminus 0.2} & \mask{0.00}{0.9} & \mask{0.00}{0.9} \\ \mask{0.00}{0} & \mask{0.00}{0.8} & \mask{0.00}{0.8} \end{matrix}$} 
\\[12pt]
$[\mask{0.0}{0.5},\mask{0.0}{0.5}]$ 
& \scalebox{0.8}{$\begin{matrix} \mask{0.00}{0} & \mask{0.00}{1.1} & \mask{0.00}{1.1} \\ \mask{0.00}{\shortminus 0.1} & \mask{0.00}{1.1} & \mask{0.00}{1.1} \\ \mask{0.00}{0} & \mask{0.00}{0.7} & \mask{0.00}{0.7} \end{matrix}$} 
& \scalebox{0.8}{$\begin{matrix} \mask{0.00}{0.1} & \mask{0.00}{1.8} & \mask{0.00}{1.8} \\ \mask{0.00}{\shortminus 0.3} & \mask{0.00}{2} & \mask{0.00}{2} \\ \mask{0.00}{0.1} & \mask{0.00}{1.1} & \mask{0.00}{1.1} \end{matrix}$} 
& \scalebox{0.8}{\color{lightgray} $\begin{matrix} \mask{0.00}{0.1} & \mask{0.00}{1} & \mask{0.00}{1} \\ \mask{0.00}{\shortminus 0.2} & \mask{0.00}{1} & \mask{0.00}{1} \\ \mask{0.00}{0.1} & \mask{0.00}{0.9} & \mask{0.00}{0.9} \end{matrix}$} 
& \scalebox{0.8}{$\begin{matrix} \mask{0.00}{0.1} & \mask{0.00}{1.6} & \mask{0.00}{1.6} \\ \mask{0.00}{\shortminus 0.3} & \mask{0.00}{1.7} & \mask{0.00}{1.8} \\ \mask{0.00}{0.1} & \mask{0.00}{1.1} & \mask{0.00}{1.1} \end{matrix}$} 
& \scalebox{0.8}{$\begin{matrix} \mask{0.00}{0} & \mask{0.00}{0.9} & \mask{0.00}{0.9} \\ \mask{0.00}{\shortminus 0.2} & \mask{0.00}{0.8} & \mask{0.00}{0.9} \\ \mask{0.00}{0} & \mask{0.00}{0.8} & \mask{0.00}{0.8} \end{matrix}$} 
\\[12pt]
$[\mask{0.0}{0.1},\mask{0.0}{0.9}]$ 
& \scalebox{0.8}{$\begin{matrix} \mask{0.00}{0} & \mask{0.00}{2.4} & \mask{0.00}{2.4} \\ \mask{0.00}{\shortminus 0.1} & \mask{0.00}{1.1} & \mask{0.00}{1.1} \\ \mask{0.00}{0} & \mask{0.00}{0.7} & \mask{0.00}{0.7} \end{matrix}$} 
& \scalebox{0.8}{$\begin{matrix} \mask{0.00}{0.1} & \mask{0.00}{1.2} & \mask{0.00}{1.2} \\ \mask{0.00}{\shortminus 0.3} & \mask{0.00}{2} & \mask{0.00}{2} \\ \mask{0.00}{0.1} & \mask{0.00}{1.1} & \mask{0.00}{1.1} \end{matrix}$} 
& \scalebox{0.8}{$\begin{matrix} \mask{0.00}{0} & \mask{0.00}{1.3} & \mask{0.00}{1.3} \\ \mask{0.00}{\shortminus 0.2} & \mask{0.00}{1} & \mask{0.00}{1} \\ \mask{0.00}{0.1} & \mask{0.00}{0.9} & \mask{0.00}{0.9} \end{matrix}$} 
& \scalebox{0.8}{\color{lightgray} $\begin{matrix} \mask{0.00}{0.1} & \mask{0.00}{1.1} & \mask{0.00}{1.1} \\ \mask{0.00}{\shortminus 0.3} & \mask{0.00}{1.7} & \mask{0.00}{1.8} \\ \mask{0.00}{0.1} & \mask{0.00}{1.1} & \mask{0.00}{1.1} \end{matrix}$} 
& \scalebox{0.8}{$\begin{matrix} \mask{0.00}{0} & \mask{0.00}{1.9} & \mask{0.00}{1.9} \\ \mask{0.00}{\shortminus 0.2} & \mask{0.00}{0.9} & \mask{0.00}{0.9} \\ \mask{0.00}{0} & \mask{0.00}{0.8} & \mask{0.00}{0.8} \end{matrix}$} 
\\[12pt]
$[\mask{0.0}{0.8},\mask{0.0}{0.2}]$ 
& \scalebox{0.8}{$\begin{matrix} \mask{0.00}{0} & \mask{0.00}{0.8} & \mask{0.00}{0.8} \\ \mask{0.00}{\shortminus 0.1} & \mask{0.00}{1.1} & \mask{0.00}{1.1} \\ \mask{0.00}{0.1} & \mask{0.00}{0.7} & \mask{0.00}{0.7} \end{matrix}$} 
& \scalebox{0.8}{$\begin{matrix} \mask{0.00}{0.1} & \mask{0.00}{3.1} & \mask{0.00}{3.1} \\ \mask{0.00}{\shortminus 0.3} & \mask{0.00}{2} & \mask{0.00}{2} \\ \mask{0.00}{0} & \mask{0.00}{1.1} & \mask{0.00}{1.1} \end{matrix}$} 
& \scalebox{0.8}{$\begin{matrix} \mask{0.00}{0} & \mask{0.00}{1.5} & \mask{0.00}{1.5} \\ \mask{0.00}{\shortminus 0.2} & \mask{0.00}{1} & \mask{0.00}{1} \\ \mask{0.00}{0.1} & \mask{0.00}{0.9} & \mask{0.00}{0.9} \end{matrix}$} 
& \scalebox{0.8}{$\begin{matrix} \mask{0.00}{0.1} & \mask{0.00}{2.8} & \mask{0.00}{2.8} \\ \mask{0.00}{\shortminus 0.3} & \mask{0.00}{1.8} & \mask{0.00}{1.8} \\ \mask{0.00}{0} & \mask{0.00}{1.1} & \mask{0.00}{1.1} \end{matrix}$} 
& \scalebox{0.8}{\color{lightgray} $\begin{matrix} \mask{0.00}{0} & \mask{0.00}{0.8} & \mask{0.00}{0.8} \\ \mask{0.00}{\shortminus 0.2} & \mask{0.00}{0.8} & \mask{0.00}{0.9} \\ \mask{0.00}{0.1} & \mask{0.00}{0.8} & \mask{0.00}{0.8} \end{matrix}$} 
\\
\bottomrule
\end{tabular}
}

\end{table}

\begin{table}[!htbp]
\centering
\renewcommand{\arraystretch}{1.2}
\setlength\arraycolsep{2pt}
\setlength{\tabcolsep}{4pt}
\caption{Summary of performance on the \textbf{one-state} bandit problem for various $\pi_b$ (rows) and $\pi_e$ (columns), where each policy is denoted by its probabilities assigned to the two actions from $s_1$. Each cell of the table corresponds to a ($\pi_b$, $\pi_e$) combination, for which we report \textsf{(bias, std, RMSE)} for three estimators: IS in the top row, naive in the middle row, and C*-IS in the bottom row. }
\label{appx-tab:bandit-1state-results}

\vspace{1.5em}

$\bar{R}_0 = 1, \bar{R}_1 = 2$ \\[2pt]
\scalebox{0.9}{
\begin{tabular}{c|ccccc}
\toprule
\diagbox{$\pi_b$}{$\pi_e$} & $[\mask{0.0}{1},\mask{0.0}{0}]$ & $[\mask{0.0}{0},\mask{0.0}{1}]$ & $[0.5,0.5]$ & $[0.1,0.9]$ & $[0.8,0.2]$ \\
\midrule
$[\mask{0.0}{1.0},\mask{0.0}{0.0}]$ 
& \scalebox{0.8}{\color{lightgray} $\begin{matrix} \mask{0.00}{0.03} & \mask{0.00}{0.5} & \mask{0.00}{0.5} \\ \mask{0.00}{0.03} & \mask{0.00}{0.5} & \mask{0.00}{0.5} \end{matrix}$} 
& \scalebox{0.8}{$\begin{matrix} \mask{0.00}{\shortminus 2} & \mask{0.00}{0} & \mask{0.00}{2} \\ \mask{0.00}{0.02} & \mask{0.00}{0.47} & \mask{0.00}{0.47} \end{matrix}$} 
& \scalebox{0.8}{$\begin{matrix} \mask{0.00}{\shortminus 0.98} & \mask{0.00}{0.25} & \mask{0.00}{1.01} \\ \mask{0.00}{0.03} & \mask{0.00}{0.34} & \mask{0.00}{0.35} \end{matrix}$} 
& \scalebox{0.8}{$\begin{matrix} \mask{0.00}{\shortminus 1.8} & \mask{0.00}{0.05} & \mask{0.00}{1.8} \\ \mask{0.00}{0.02} & \mask{0.00}{0.43} & \mask{0.00}{0.43} \end{matrix}$} 
& \scalebox{0.8}{$\begin{matrix} \mask{0.00}{\shortminus 0.37} & \mask{0.00}{0.4} & \mask{0.00}{0.55} \\ \mask{0.00}{0.03} & \mask{0.00}{0.41} & \mask{0.00}{0.41} \end{matrix}$} 
\\[12pt]
$[\mask{0.0}{0.0},\mask{0.0}{1.0}]$ 
& \scalebox{0.8}{$\begin{matrix} \mask{0.00}{\shortminus 1} & \mask{0.00}{0} & \mask{0.00}{1} \\ \mask{0.00}{0.02} & \mask{0.00}{0.47} & \mask{0.00}{0.47} \end{matrix}$} 
& \scalebox{0.8}{\color{lightgray} $\begin{matrix} \mask{0.00}{0.03} & \mask{0.00}{0.5} & \mask{0.00}{0.5} \\ \mask{0.00}{0.03} & \mask{0.00}{0.5} & \mask{0.00}{0.5} \end{matrix}$} 
& \scalebox{0.8}{$\begin{matrix} \mask{0.00}{\shortminus 0.48} & \mask{0.00}{0.25} & \mask{0.00}{0.54} \\ \mask{0.00}{0.03} & \mask{0.00}{0.34} & \mask{0.00}{0.35} \end{matrix}$} 
& \scalebox{0.8}{$\begin{matrix} \mask{0.00}{\shortminus 0.07} & \mask{0.00}{0.45} & \mask{0.00}{0.45} \\ \mask{0.00}{0.03} & \mask{0.00}{0.45} & \mask{0.00}{0.45} \end{matrix}$} 
& \scalebox{0.8}{$\begin{matrix} \mask{0.00}{\shortminus 0.79} & \mask{0.00}{0.1} & \mask{0.00}{0.8} \\ \mask{0.00}{0.02} & \mask{0.00}{0.39} & \mask{0.00}{0.39} \end{matrix}$} 
\\[12pt]
$[\mask{0.0}{0.5},\mask{0.0}{0.5}]$ 
& \scalebox{0.8}{$\begin{matrix} \mask{0.00}{0.01} & \mask{0.00}{1.23} & \mask{0.00}{1.23} \\ \mask{0.00}{0.03} & \mask{0.00}{0.47} & \mask{0.00}{0.47} \end{matrix}$} 
& \scalebox{0.8}{$\begin{matrix} \mask{0.00}{0.09} & \mask{0.00}{2.17} & \mask{0.00}{2.17} \\ \mask{0.00}{0.02} & \mask{0.00}{0.5} & \mask{0.00}{0.5} \end{matrix}$} 
& \scalebox{0.8}{\color{lightgray} $\begin{matrix} \mask{0.00}{0.05} & \mask{0.00}{0.71} & \mask{0.00}{0.71} \\ \mask{0.00}{0.03} & \mask{0.00}{0.34} & \mask{0.00}{0.35} \end{matrix}$} 
& \scalebox{0.8}{$\begin{matrix} \mask{0.00}{0.08} & \mask{0.00}{1.86} & \mask{0.00}{1.86} \\ \mask{0.00}{0.02} & \mask{0.00}{0.46} & \mask{0.00}{0.46} \end{matrix}$} 
& \scalebox{0.8}{$\begin{matrix} \mask{0.00}{0.02} & \mask{0.00}{0.69} & \mask{0.00}{0.69} \\ \mask{0.00}{0.03} & \mask{0.00}{0.39} & \mask{0.00}{0.39} \end{matrix}$} 
\\[12pt]
$[\mask{0.0}{0.1},\mask{0.0}{0.9}]$ 
& \scalebox{0.8}{$\begin{matrix} \mask{0.00}{0.08} & \mask{0.00}{3.46} & \mask{0.00}{3.46} \\ \mask{0.00}{0.01} & \mask{0.00}{0.47} & \mask{0.00}{0.47} \end{matrix}$} 
& \scalebox{0.8}{$\begin{matrix} \mask{0.00}{0.02} & \mask{0.00}{0.88} & \mask{0.00}{0.88} \\ \mask{0.00}{0.04} & \mask{0.00}{0.5} & \mask{0.00}{0.5} \end{matrix}$} 
& \scalebox{0.8}{$\begin{matrix} \mask{0.00}{0.05} & \mask{0.00}{1.45} & \mask{0.00}{1.45} \\ \mask{0.00}{0.03} & \mask{0.00}{0.34} & \mask{0.00}{0.35} \end{matrix}$} 
& \scalebox{0.8}{\color{lightgray} $\begin{matrix} \mask{0.00}{0.03} & \mask{0.00}{0.59} & \mask{0.00}{0.59} \\ \mask{0.00}{0.03} & \mask{0.00}{0.45} & \mask{0.00}{0.45} \end{matrix}$} 
& \scalebox{0.8}{$\begin{matrix} \mask{0.00}{0.07} & \mask{0.00}{2.64} & \mask{0.00}{2.64} \\ \mask{0.00}{0.02} & \mask{0.00}{0.39} & \mask{0.00}{0.39} \end{matrix}$} 
\\[12pt]
$[\mask{0.0}{0.8},\mask{0.0}{0.2}]$ 
& \scalebox{0.8}{$\begin{matrix} \mask{0.00}{0.03} & \mask{0.00}{0.75} & \mask{0.00}{0.75} \\ \mask{0.00}{0.04} & \mask{0.00}{0.48} & \mask{0.00}{0.48} \end{matrix}$} 
& \scalebox{0.8}{$\begin{matrix} \mask{0.00}{0.05} & \mask{0.00}{4.28} & \mask{0.00}{4.28} \\ \mask{0.00}{0.01} & \mask{0.00}{0.49} & \mask{0.00}{0.49} \end{matrix}$} 
& \scalebox{0.8}{$\begin{matrix} \mask{0.00}{0.04} & \mask{0.00}{1.92} & \mask{0.00}{1.92} \\ \mask{0.00}{0.03} & \mask{0.00}{0.34} & \mask{0.00}{0.35} \end{matrix}$} 
& \scalebox{0.8}{$\begin{matrix} \mask{0.00}{0.05} & \mask{0.00}{3.8} & \mask{0.00}{3.81} \\ \mask{0.00}{0.02} & \mask{0.00}{0.44} & \mask{0.00}{0.44} \end{matrix}$} 
& \scalebox{0.8}{\color{lightgray} $\begin{matrix} \mask{0.00}{0.03} & \mask{0.00}{0.65} & \mask{0.00}{0.65} \\ \mask{0.00}{0.03} & \mask{0.00}{0.4} & \mask{0.00}{0.4} \end{matrix}$} 
\\
\bottomrule
\end{tabular}
}

\vspace{3em}

$\bar{R}_0 = -1, \bar{R}_1 = 1$ \\[2pt]
\scalebox{0.9}{
\begin{tabular}{c|ccccc}
\toprule
\diagbox{$\pi_b$}{$\pi_e$} & $[\mask{0.0}{1},\mask{0.0}{0}]$ & $[\mask{0.0}{0},\mask{0.0}{1}]$ & $[0.5,0.5]$ & $[0.1,0.9]$ & $[0.8,0.2]$ \\
\midrule
$[\mask{0.0}{1.0},\mask{0.0}{0.0}]$ 
& \scalebox{0.8}{\color{lightgray} $\begin{matrix} \mask{0.00}{0.03} & \mask{0.00}{0.5} & \mask{0.00}{0.5} \\ \mask{0.00}{0.03} & \mask{0.00}{0.5} & \mask{0.00}{0.5} \end{matrix}$} 
& \scalebox{0.8}{$\begin{matrix} \mask{0.00}{\shortminus 1} & \mask{0.00}{0} & \mask{0.00}{1} \\ \mask{0.00}{0.02} & \mask{0.00}{0.47} & \mask{0.00}{0.47} \end{matrix}$} 
& \scalebox{0.8}{$\begin{matrix} \mask{0.00}{\shortminus 0.48} & \mask{0.00}{0.25} & \mask{0.00}{0.54} \\ \mask{0.00}{0.03} & \mask{0.00}{0.34} & \mask{0.00}{0.35} \end{matrix}$} 
& \scalebox{0.8}{$\begin{matrix} \mask{0.00}{\shortminus 0.9} & \mask{0.00}{0.05} & \mask{0.00}{0.9} \\ \mask{0.00}{0.02} & \mask{0.00}{0.43} & \mask{0.00}{0.43} \end{matrix}$} 
& \scalebox{0.8}{$\begin{matrix} \mask{0.00}{\shortminus 0.17} & \mask{0.00}{0.4} & \mask{0.00}{0.43} \\ \mask{0.00}{0.03} & \mask{0.00}{0.41} & \mask{0.00}{0.41} \end{matrix}$} 
\\[12pt]
$[\mask{0.0}{0.0},\mask{0.0}{1.0}]$ 
& \scalebox{0.8}{$\begin{matrix} \mask{0.00}{1} & \mask{0.00}{0} & \mask{0.00}{1} \\ \mask{0.00}{0.02} & \mask{0.00}{0.47} & \mask{0.00}{0.47} \end{matrix}$} 
& \scalebox{0.8}{\color{lightgray} $\begin{matrix} \mask{0.00}{0.03} & \mask{0.00}{0.5} & \mask{0.00}{0.5} \\ \mask{0.00}{0.03} & \mask{0.00}{0.5} & \mask{0.00}{0.5} \end{matrix}$} 
& \scalebox{0.8}{$\begin{matrix} \mask{0.00}{0.52} & \mask{0.00}{0.25} & \mask{0.00}{0.57} \\ \mask{0.00}{0.03} & \mask{0.00}{0.34} & \mask{0.00}{0.35} \end{matrix}$} 
& \scalebox{0.8}{$\begin{matrix} \mask{0.00}{0.13} & \mask{0.00}{0.45} & \mask{0.00}{0.47} \\ \mask{0.00}{0.03} & \mask{0.00}{0.45} & \mask{0.00}{0.45} \end{matrix}$} 
& \scalebox{0.8}{$\begin{matrix} \mask{0.00}{0.81} & \mask{0.00}{0.1} & \mask{0.00}{0.81} \\ \mask{0.00}{0.02} & \mask{0.00}{0.39} & \mask{0.00}{0.39} \end{matrix}$} 
\\[12pt]
$[\mask{0.0}{0.5},\mask{0.0}{0.5}]$ 
& \scalebox{0.8}{$\begin{matrix} \mask{0.00}{0.06} & \mask{0.00}{1.17} & \mask{0.00}{1.17} \\ \mask{0.00}{0.03} & \mask{0.00}{0.47} & \mask{0.00}{0.47} \end{matrix}$} 
& \scalebox{0.8}{$\begin{matrix} \mask{0.00}{0.06} & \mask{0.00}{1.28} & \mask{0.00}{1.28} \\ \mask{0.00}{0.02} & \mask{0.00}{0.5} & \mask{0.00}{0.5} \end{matrix}$} 
& \scalebox{0.8}{\color{lightgray} $\begin{matrix} \mask{0.00}{0.06} & \mask{0.00}{1.12} & \mask{0.00}{1.12} \\ \mask{0.00}{0.03} & \mask{0.00}{0.34} & \mask{0.00}{0.35} \end{matrix}$} 
& \scalebox{0.8}{$\begin{matrix} \mask{0.00}{0.06} & \mask{0.00}{1.23} & \mask{0.00}{1.23} \\ \mask{0.00}{0.02} & \mask{0.00}{0.46} & \mask{0.00}{0.46} \end{matrix}$} 
& \scalebox{0.8}{$\begin{matrix} \mask{0.00}{0.06} & \mask{0.00}{1.13} & \mask{0.00}{1.13} \\ \mask{0.00}{0.03} & \mask{0.00}{0.39} & \mask{0.00}{0.39} \end{matrix}$} 
\\[12pt]
$[\mask{0.0}{0.1},\mask{0.0}{0.9}]$ 
& \scalebox{0.8}{$\begin{matrix} \mask{0.00}{\shortminus 0.04} & \mask{0.00}{3.37} & \mask{0.00}{3.37} \\ \mask{0.00}{0.01} & \mask{0.00}{0.47} & \mask{0.00}{0.47} \end{matrix}$} 
& \scalebox{0.8}{$\begin{matrix} \mask{0.00}{0.03} & \mask{0.00}{0.64} & \mask{0.00}{0.64} \\ \mask{0.00}{0.04} & \mask{0.00}{0.5} & \mask{0.00}{0.5} \end{matrix}$} 
& \scalebox{0.8}{$\begin{matrix} \mask{0.00}{\shortminus 0.01} & \mask{0.00}{1.86} & \mask{0.00}{1.86} \\ \mask{0.00}{0.03} & \mask{0.00}{0.34} & \mask{0.00}{0.35} \end{matrix}$} 
& \scalebox{0.8}{\color{lightgray} $\begin{matrix} \mask{0.00}{0.02} & \mask{0.00}{0.8} & \mask{0.00}{0.8} \\ \mask{0.00}{0.03} & \mask{0.00}{0.45} & \mask{0.00}{0.45} \end{matrix}$} 
& \scalebox{0.8}{$\begin{matrix} \mask{0.00}{\shortminus 0.03} & \mask{0.00}{2.76} & \mask{0.00}{2.76} \\ \mask{0.00}{0.02} & \mask{0.00}{0.39} & \mask{0.00}{0.39} \end{matrix}$} 
\\[12pt]
$[\mask{0.0}{0.8},\mask{0.0}{0.2}]$ 
& \scalebox{0.8}{$\begin{matrix} \mask{0.00}{0.02} & \mask{0.00}{0.74} & \mask{0.00}{0.74} \\ \mask{0.00}{0.04} & \mask{0.00}{0.48} & \mask{0.00}{0.48} \end{matrix}$} 
& \scalebox{0.8}{$\begin{matrix} \mask{0.00}{0.06} & \mask{0.00}{2.42} & \mask{0.00}{2.42} \\ \mask{0.00}{0.01} & \mask{0.00}{0.49} & \mask{0.00}{0.49} \end{matrix}$} 
& \scalebox{0.8}{$\begin{matrix} \mask{0.00}{0.04} & \mask{0.00}{1.46} & \mask{0.00}{1.46} \\ \mask{0.00}{0.03} & \mask{0.00}{0.34} & \mask{0.00}{0.35} \end{matrix}$} 
& \scalebox{0.8}{$\begin{matrix} \mask{0.00}{0.06} & \mask{0.00}{2.22} & \mask{0.00}{2.23} \\ \mask{0.00}{0.02} & \mask{0.00}{0.44} & \mask{0.00}{0.44} \end{matrix}$} 
& \scalebox{0.8}{\color{lightgray} $\begin{matrix} \mask{0.00}{0.03} & \mask{0.00}{0.95} & \mask{0.00}{0.96} \\ \mask{0.00}{0.03} & \mask{0.00}{0.4} & \mask{0.00}{0.4} \end{matrix}$} 
\\
\bottomrule
\end{tabular}
}

\vspace{3em}

$\bar{R}_0 = -1, \bar{R}_1 = -2$ \\[2pt]
\scalebox{0.9}{
\begin{tabular}{c|ccccc}
\toprule
\diagbox{$\pi_b$}{$\pi_e$} & $[\mask{0.0}{1},\mask{0.0}{0}]$ & $[\mask{0.0}{0},\mask{0.0}{1}]$ & $[0.5,0.5]$ & $[0.1,0.9]$ & $[0.8,0.2]$ \\
\midrule
$[\mask{0.0}{1.0},\mask{0.0}{0.0}]$ 
& \scalebox{0.8}{\color{lightgray} $\begin{matrix} \mask{0.00}{0.03} & \mask{0.00}{0.5} & \mask{0.00}{0.5} \\ \mask{0.00}{0.03} & \mask{0.00}{0.5} & \mask{0.00}{0.5} \end{matrix}$} 
& \scalebox{0.8}{$\begin{matrix} \mask{0.00}{2} & \mask{0.00}{0} & \mask{0.00}{2} \\ \mask{0.00}{0.02} & \mask{0.00}{0.47} & \mask{0.00}{0.47} \end{matrix}$} 
& \scalebox{0.8}{$\begin{matrix} \mask{0.00}{1.02} & \mask{0.00}{0.25} & \mask{0.00}{1.05} \\ \mask{0.00}{0.03} & \mask{0.00}{0.34} & \mask{0.00}{0.35} \end{matrix}$} 
& \scalebox{0.8}{$\begin{matrix} \mask{0.00}{1.8} & \mask{0.00}{0.05} & \mask{0.00}{1.8} \\ \mask{0.00}{0.02} & \mask{0.00}{0.43} & \mask{0.00}{0.43} \end{matrix}$} 
& \scalebox{0.8}{$\begin{matrix} \mask{0.00}{0.43} & \mask{0.00}{0.4} & \mask{0.00}{0.58} \\ \mask{0.00}{0.03} & \mask{0.00}{0.41} & \mask{0.00}{0.41} \end{matrix}$} 
\\[12pt]
$[\mask{0.0}{0.0},\mask{0.0}{1.0}]$ 
& \scalebox{0.8}{$\begin{matrix} \mask{0.00}{1} & \mask{0.00}{0} & \mask{0.00}{1} \\ \mask{0.00}{0.02} & \mask{0.00}{0.47} & \mask{0.00}{0.47} \end{matrix}$} 
& \scalebox{0.8}{\color{lightgray} $\begin{matrix} \mask{0.00}{0.03} & \mask{0.00}{0.5} & \mask{0.00}{0.5} \\ \mask{0.00}{0.03} & \mask{0.00}{0.5} & \mask{0.00}{0.5} \end{matrix}$} 
& \scalebox{0.8}{$\begin{matrix} \mask{0.00}{0.52} & \mask{0.00}{0.25} & \mask{0.00}{0.57} \\ \mask{0.00}{0.03} & \mask{0.00}{0.34} & \mask{0.00}{0.35} \end{matrix}$} 
& \scalebox{0.8}{$\begin{matrix} \mask{0.00}{0.13} & \mask{0.00}{0.45} & \mask{0.00}{0.47} \\ \mask{0.00}{0.03} & \mask{0.00}{0.45} & \mask{0.00}{0.45} \end{matrix}$} 
& \scalebox{0.8}{$\begin{matrix} \mask{0.00}{0.81} & \mask{0.00}{0.1} & \mask{0.00}{0.81} \\ \mask{0.00}{0.02} & \mask{0.00}{0.39} & \mask{0.00}{0.39} \end{matrix}$} 
\\[12pt]
$[\mask{0.0}{0.5},\mask{0.0}{0.5}]$ 
& \scalebox{0.8}{$\begin{matrix} \mask{0.00}{0.06} & \mask{0.00}{1.17} & \mask{0.00}{1.17} \\ \mask{0.00}{0.03} & \mask{0.00}{0.47} & \mask{0.00}{0.47} \end{matrix}$} 
& \scalebox{0.8}{$\begin{matrix} \mask{0.00}{\shortminus 0.01} & \mask{0.00}{2.1} & \mask{0.00}{2.1} \\ \mask{0.00}{0.02} & \mask{0.00}{0.5} & \mask{0.00}{0.5} \end{matrix}$} 
& \scalebox{0.8}{\color{lightgray} $\begin{matrix} \mask{0.00}{0.02} & \mask{0.00}{0.7} & \mask{0.00}{0.71} \\ \mask{0.00}{0.03} & \mask{0.00}{0.34} & \mask{0.00}{0.35} \end{matrix}$} 
& \scalebox{0.8}{$\begin{matrix} \mask{0.00}{0} & \mask{0.00}{1.8} & \mask{0.00}{1.8} \\ \mask{0.00}{0.02} & \mask{0.00}{0.46} & \mask{0.00}{0.46} \end{matrix}$} 
& \scalebox{0.8}{$\begin{matrix} \mask{0.00}{0.04} & \mask{0.00}{0.67} & \mask{0.00}{0.67} \\ \mask{0.00}{0.03} & \mask{0.00}{0.39} & \mask{0.00}{0.39} \end{matrix}$} 
\\[12pt]
$[\mask{0.0}{0.1},\mask{0.0}{0.9}]$ 
& \scalebox{0.8}{$\begin{matrix} \mask{0.00}{\shortminus 0.04} & \mask{0.00}{3.37} & \mask{0.00}{3.37} \\ \mask{0.00}{0.01} & \mask{0.00}{0.47} & \mask{0.00}{0.47} \end{matrix}$} 
& \scalebox{0.8}{$\begin{matrix} \mask{0.00}{0.05} & \mask{0.00}{0.86} & \mask{0.00}{0.86} \\ \mask{0.00}{0.04} & \mask{0.00}{0.5} & \mask{0.00}{0.5} \end{matrix}$} 
& \scalebox{0.8}{$\begin{matrix} \mask{0.00}{0} & \mask{0.00}{1.41} & \mask{0.00}{1.41} \\ \mask{0.00}{0.03} & \mask{0.00}{0.34} & \mask{0.00}{0.35} \end{matrix}$} 
& \scalebox{0.8}{\color{lightgray} $\begin{matrix} \mask{0.00}{0.04} & \mask{0.00}{0.58} & \mask{0.00}{0.58} \\ \mask{0.00}{0.03} & \mask{0.00}{0.45} & \mask{0.00}{0.45} \end{matrix}$} 
& \scalebox{0.8}{$\begin{matrix} \mask{0.00}{\shortminus 0.02} & \mask{0.00}{2.57} & \mask{0.00}{2.57} \\ \mask{0.00}{0.02} & \mask{0.00}{0.39} & \mask{0.00}{0.39} \end{matrix}$} 
\\[12pt]
$[\mask{0.0}{0.8},\mask{0.0}{0.2}]$ 
& \scalebox{0.8}{$\begin{matrix} \mask{0.00}{0.02} & \mask{0.00}{0.74} & \mask{0.00}{0.74} \\ \mask{0.00}{0.04} & \mask{0.00}{0.48} & \mask{0.00}{0.48} \end{matrix}$} 
& \scalebox{0.8}{$\begin{matrix} \mask{0.00}{0.09} & \mask{0.00}{4.02} & \mask{0.00}{4.02} \\ \mask{0.00}{0.01} & \mask{0.00}{0.49} & \mask{0.00}{0.49} \end{matrix}$} 
& \scalebox{0.8}{$\begin{matrix} \mask{0.00}{0.06} & \mask{0.00}{1.8} & \mask{0.00}{1.8} \\ \mask{0.00}{0.03} & \mask{0.00}{0.34} & \mask{0.00}{0.35} \end{matrix}$} 
& \scalebox{0.8}{$\begin{matrix} \mask{0.00}{0.08} & \mask{0.00}{3.57} & \mask{0.00}{3.57} \\ \mask{0.00}{0.02} & \mask{0.00}{0.44} & \mask{0.00}{0.44} \end{matrix}$} 
& \scalebox{0.8}{\color{lightgray} $\begin{matrix} \mask{0.00}{0.04} & \mask{0.00}{0.63} & \mask{0.00}{0.63} \\ \mask{0.00}{0.03} & \mask{0.00}{0.4} & \mask{0.00}{0.4} \end{matrix}$} 
\\
\bottomrule
\end{tabular}
}

\end{table}

\newpage

\textbf{Using equal weights (in C*-IS) is a reasonable heuristic though not always variance-minimizing.} In the results above, C*-IS assumed the weights are split equally for factual and counterfactual data. Next, we explore the effect of different weighting schemes on the performance of C-IS, applied to the same class of bandit problems described above. Note that, consistent with \cref{thm:C-IS-unbiasedness}, C-IS remains unbiased regardless of weights, except potentially at extreme values of weights (0 or 1) that ignore either the factual data or counterfactual annotations. Across different settings (\cref{fig:variance-heatmap}a-d), we found that the ideal weighting scheme is problem-specific, and certain weights may lead to higher variance compared to standard IS (e.g., lower left region of \cref{fig:variance-heatmap}b). Encouragingly, these results also demonstrate that C*-IS, though not always variance-minimizing, consistently achieves lower variance than standard IS (when the data has full support and the estimator is unbiased). Additionally, variance in the weight distributions directly contributes to variance in the resulting estimator (\cref{fig:variance-heatmap}e), corroborating our analysis in \cref{appx-thm:C-IS-variance}. Overall, our results suggest that C*-IS, which uses constant weights split equally among actions, is a promising heuristic for using C-IS in practice.

\begin{figure}[h]
\vspace{-0.5\intextsep}
\setlength{\tabcolsep}{0em}
\renewcommand{\arraystretch}{0.9}
    \centering
    \begin{tabular}{ccccc}
    (a) & (b) & (c) & (d) & (e) \\
    \includegraphics[scale=0.4]{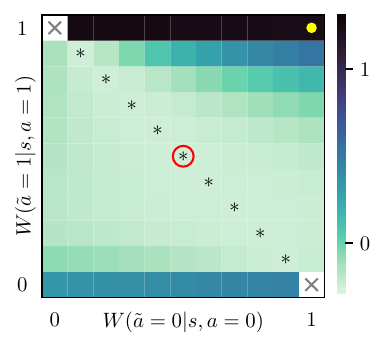} 
    & \includegraphics[scale=0.4]{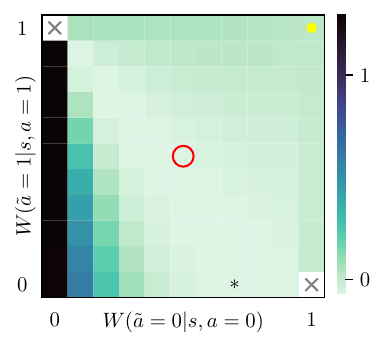}
    & \includegraphics[scale=0.4]{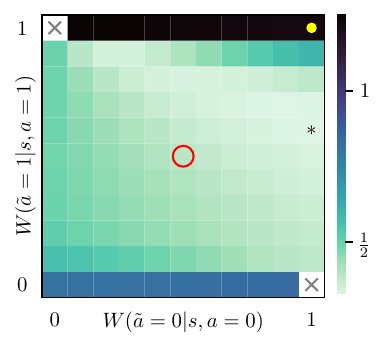}
    & \includegraphics[scale=0.4]{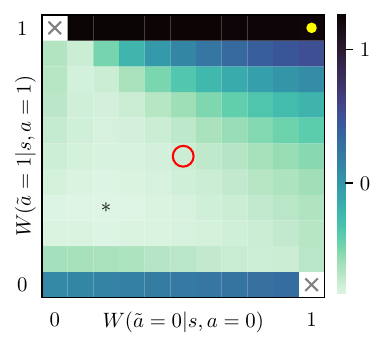}
    & \includegraphics[scale=0.4]{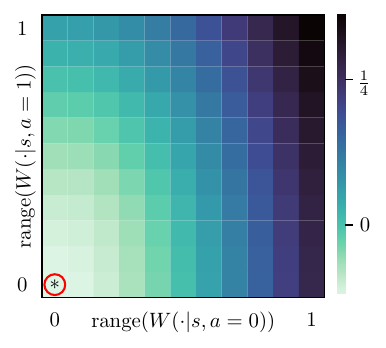}
    \end{tabular}
\vspace{-.5\intextsep}
    \caption{Heatmaps of log std for C-IS using different weights applied to various problems (with different bandit parameters, $\pi_b$, $\pi_e$, and annotation quality; exact parameter settings are specified below). $W(\tilde{a}|s,a)$ is the weight $w^{\tilde{a}}$ assigned to action $\tilde{a}$ given the factual sample $(s,a)$. Asterisks $*$ indicate the weights under which the variance is minimized for each problem. Gray crosses {\color{gray} $\times$} indicate where the estimator is biased. Red circles {\color{red} \raisebox{1.5pt}{\scalebox{0.8}{$\bigcirc$}}} correspond to C*-IS. Yellow dots \raisebox{2pt}{\colorbox{black}{\makebox(2,2){\color{yellow}$\bullet$}}} in the upper right corners correspond to standard IS (where applicable). (a-b) Constant weights where we sweep the factual weight within the range $[0,1]$ for both actions. (c-d) Similar to (a), where the annotations have a larger (c) or smaller (d) variance than the reward function. (e) Weights are drawn from uniform distributions centered at $0.5$. We sweep the range of the uniform distributions within $[0,1]$. }
    \label{fig:variance-heatmap}
    \vspace{-\intextsep}
\end{figure}
{\scriptsize Parameter settings: (a) Base setting, $R = [1,2]$, $\sigma_R = [1,1]$, $\pi_b = [0.1,0.9]$, $\pi_e = [0.8,0.2]$, $\sigma_G = \sigma_R$. (b) Annotations are not useful, $R = [1,2]$, $\sigma_R = [1,1]$, $\pi_b = [0.9,0.1]$, $\pi_e = [0.95,0.05]$, $\sigma_G = \sigma_R$. (c) Annotations has larger variance than rewards, $R = [1,2]$, $\sigma_R = [1,1]$, $\pi_b = [0.1,0.9]$, $\pi_e = [0.8,0.2]$, $\sigma_G = 2\sigma_R$. (d) Annotations has smaller variance than rewards, $R = [1,2]$, $\sigma_R = [1,1]$, $\pi_b = [0.1,0.9]$, $\pi_e = [0.8,0.2]$, $\sigma_G = 0.5\sigma_R$. (e) $R = [1,2]$, $\sigma_R = [1,1]$, $\pi_b = [0.1,0.9]$, $\pi_e = [0.8,0.2]$, $\sigma_G = \sigma_R$. \par} 

\textbf{Imputing missing annotations can reduce variance.} Finally, we explore the impact of missing annotations on the performance of C-IS. As shown in \cref{fig:variance-missing-impute}a, obtaining more counterfactual annotations generally helps to reduce the variance. However, as noted in our variance analyses, if annotations for the same factual $(s,a)$ are sometimes missing, we cannot directly apply C*-IS with equal weights; this may lead to increased variance due to variance in the weights. Here, we use a simple strategy to impute the missing annotations with the average of other annotations (for the same counterfactual $\tilde{a}$ of the same factual $(s,a)$). As shown in \cref{fig:variance-missing-impute}b, this reduces the variance further especially when not all annotations are available. 

\begin{figure}[h]
\setlength{\tabcolsep}{0em}
\renewcommand{\arraystretch}{0.9}
    \centering
    \begin{minipage}[c]{0.45\textwidth}
    \begin{tabular}{cc}
    {\small (a) Without imputation} & {\small (b) With imputation} \\
    \includegraphics[scale=0.4]{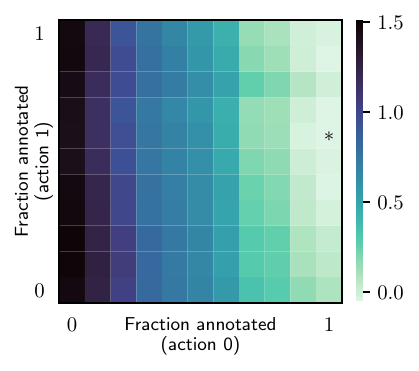} & \includegraphics[scale=0.4]{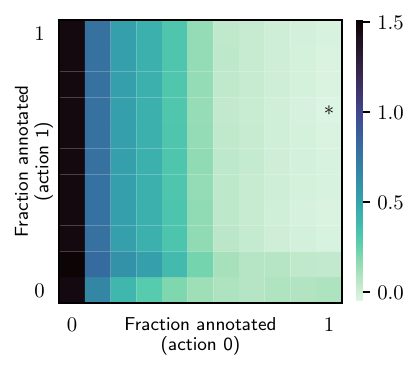}
    \end{tabular}
    \end{minipage}
    \hfill
    \begin{minipage}[c]{0.53\textwidth}
    \caption{Heatmaps of log std for C-IS where some counterfactual annotations may be missing. We vary the fraction of acquired annotations for the two actions independently within $[0,1]$. (a) Equal weights (as in C*-IS) are used when annotations are available; otherwise, the factual weight is set to $1$ when annotations are missing. (b) The missing annotations are imputed using other similar annotations. Imputing missing annotations allows for C*-IS to be applied directly and it achieves lower variance. Parameter settings are the same as \cref{fig:variance-heatmap}a. }
    \label{fig:variance-missing-impute}
    \end{minipage}
\end{figure}

\newpage
\subsection{Healthcare Domain - Sepsis Simulator} \label{appx:experiment-sepsisSim}

\subsubsection{Experimental Setup}

\textbf{Simulator Description.} The patient state is characterized by five variables: a binary indicator for diabetes status, and four ordinal-valued vital signs (heart rate, blood pressure, oxygen concentration, glucose). Following prior work \citep{tang2021model}, we used a discrete state space with $|\Scal| = 1440$. The action space corresponds to the administration of vasopressors and takes on a binary value (on/off), which may increase or decrease the values of certain vital signs (with pre-specified probabilities) at the next time step. The action space in the original simulator formulation involves combinations of 3 treatments: antibiotics, vasopressors, and mechanical ventilation; we focus on only vasopressors for the purpose of illustration. The episode ends when the patient is discharged or died; discharge only occurs when all vitals are normal and all treatments are turned off, whereas death occurs if three or more vitals are abnormal. Rewards are sparse and only assigned at the end of each episode, with $+1$ for survival and $-1$ for death. Episodes that reach the maximum length of $20$ are truncated with zero terminal reward. 

\textbf{Evaluation Setup.} Following prior work \citep{tang2021model}, we collected 50 offline datasets from the sepsis simulator (using different random seeds) each with 1000 episodes by following an $\epsilon$-greedy behavior policy with respect to the optimal policy where $\epsilon = 0.1$. For the evaluation policies, we created a set of deterministic policies by perturbing the optimal policy such that each policy takes the non-optimal actions in a randomly selected subset of states. We varied the number of ``action-flipped'' states in $\{50, 100, 200, 300, 400\}$ and generated 5 different evaluation policies for each; we also included the optimal policy in the candidate set, resulting in a total of $26$ candidate evaluation policies. These represent policies that may be derived from offline data by typical RL approaches that aim to learn deterministic policies, and are of diverse quality where approximately half are superior to the behavior policy while the other half are inferior (\cref{appx-fig:sepsisSim-policies}). As the baseline estimator that only makes use of offline data, we applied standard PDIS which does not rely on counterfactual annotations. To apply C*-PDIS, we assume all counterfactual annotations are collected in our main experiments in which they may be drawn from different annotation functions; we explore the impact of missing annotations in subsequent sensitivity analyses. For C*-PDIS, we considered three different annotation functions: (i) $G = Q^{\pi_e}$, the Q-function of the evaluation policy $\pi_e$; (ii) $G = Q^{\pi_b}$, the Q-function of the behavior policy, and (iii) $G = Q^{\pi_b} \mapsto Q^{\pi_e}$, where we apply the bias correction procedure discussed in \cref{appx:practical-annot-behavior-IS}. We also compare to two naive baselines (given perfect annotations): ``naive unweighted'' simply adds counterfactual annotations as new trajectories and has the same issue discussed in \cref{sec:intuition}, whereas ``naive weighted'' reweights the annotations at the trajectory level instead of per-decision. More formally, assuming a binary action space $\Acal = \{0,1\}$ (without loss of generality), given a trajectory of length $T$ with counterfactual annotations at each step, $\tau = [s_t, a_t, r_t]_{t=1}^{T}$, $\gvec = \{g_{t}^{1-a_t}\}_{t=1}^{T}$ where $1-a_t$ is the counterfactual action for $a_t$, the naive weighted estimator is defined as
\[\textstyle (1-\sum_{t=1}^{T} w_t) \rho_{1:T} (\sum_{t=1}^{T} r_t) + \sum_{t=1}^{T} \big( w_t \rho_{1:t-1} \rho_t^{1-a_t} (\sum_{t'=1}^{t-1} r_{t'} + g_t^{1-a_t}) \big)\]
Intuitively, this first converts each annotation into a sub-trajectory that terminates at the step of annotation with the counterfactual action $[s_1, a_1, r_1, \cdots, s_t, 1-a_t, g_t]$, and then performs IS on each sub-trajectory (including the original trajectory), and finally computes a weighted sum of these $T+1$ estimates ($1$ factual estimate, $T$ counterfactual estimates) using weights $(1-\sum_{t=1}^{T} w_t)$, $w_1, \cdots, w_T$. The reason why ``naive weighted'' does not work is more subtle: while reweighting the (partial) trajectories constructed from the counterfactual annotations correctly maintains the initial state distribution, it does not correctly maintain the intermediate state distributions. 

\begin{figure}[h]
    \centering
    \includegraphics[scale=0.5]{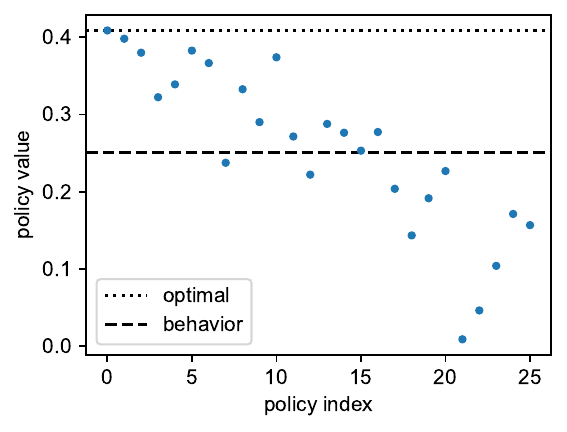} \quad \includegraphics[scale=0.5]{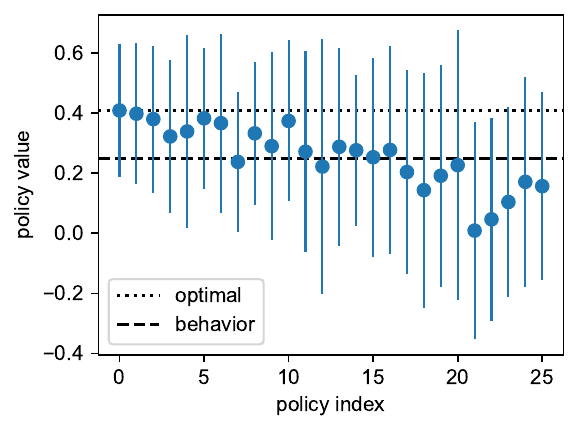} \quad
    \caption{The true value $v(\pi_e)$ of each of the 26 evaluation policies. The two dashed line are, respectively, the value of the optimal policy and the value of the behavior policy $v(\pi_b)$ (which is $\epsilon$-greedy with respect to the optimal policy where $\epsilon = 0.1$). On the right, we additionally plot error bars representing $\pm$ standard deviation of the values at initial states for each of the policies. The average std of initial state values over all 26 policies is $0.312$.}
    \label{appx-fig:sepsisSim-policies}
\end{figure}

\newpage

\subsubsection{Results}

\textbf{C*-PDIS outperforms all baselines in all metrics for the ideal setting.} As shown in \cref{appx-tab:sepsisSim-results}, when all counterfactuals are available and annotated with the evaluation policy's Q-function ($G = Q^{\pi_e}$), C*-PDIS outperforms baseline PDIS (without annotations) in all metrics, suggesting that it provides more accurate OPE estimates. In contrast, the two naive approaches fail to provide accurate estimates and often underperform standard PDIS. 

\textbf{C*-PDIS is robust to biased annotations.} Under the more realistic scenario where $G = Q^{\pi_b}$, i.e., annotations summarize the future returns under $\pi_b$ rather than $\pi_e$, we observe a degradation in all metrics compared to the ideal case, though C*-PDIS is still superior to PDIS (\cref{appx-tab:sepsisSim-results}). Applying the bias correction procedure $G = Q^{\pi_b} \mapsto \smash{\hat{Q}^{\pi_e}}$ (see \cref{appx:practical-annot-behavior-IS}) helps recover performance to be closer to the ideal case, and is especially helpful for $\pi_e$ that are far away from $\pi_b$ (\cref{appx-fig:sepsisSim-KL}). 

\begin{table}[h]
    \centering
    \caption{Comparison of baseline and proposed estimators in terms of OPE performance (RMSE, ESS), ranking performance (Spearman's rank correlation) and binary classification performance (accuracy, FPR, FNR) on the sepsis simulator, reported as mean $\pm$ std from 50 repeated runs. \textbf{Bolded} results are the best for each metric, whereas \hlc[yellow!40]{highlighted} results outperform all baselines. The upper table shows the overall results; the lower table shows the breakdown by ordinary IS and weighted IS (OIS vs WIS) applied to each approach. }
    \label{appx-tab:sepsisSim-results}
\scalebox{0.8}{
\begin{tabular}{l|l|ccccccl}
    \cmidrule[\heavyrulewidth]{1-8}
    \multicolumn{2}{c|}{\textbf{Estimator}} & $\downarrow$ \textbf{RMSE} & $\uparrow$ \textbf{ESS} & $\uparrow$ \textbf{Spearman} & $\uparrow$ \textbf{\%Accuracy} & $\downarrow$ \textbf{\%FPR} & $\downarrow$ \textbf{\%FNR} \\
    \cmidrule{1-8}
    \multirow{3}{*}{\rotatebox[origin=c]{90}{\small Baseline}} 
    & PDIS (w/o annot.) & 0.113 {\scriptsize $\pm$0.038} & \phantom{0}76.8 {\scriptsize $\pm$44.0} & 0.596 {\scriptsize $\pm$0.110} & 76.5 {\scriptsize $\pm$3.5} & 33.7 {\scriptsize $\pm$8.7\phantom{0}} & 15.9 {\scriptsize $\pm$4.6\phantom{0}} \\
    & Naive unweighted \scalebox{0.75}{($G = Q^{\pi_e}$)} & 0.128 {\scriptsize $\pm$0.006} & 207.2 {\scriptsize $\pm$91.5} & 0.089 {\scriptsize $\pm$0.089} & 50.0 {\scriptsize $\pm$6.0} & 11.6 {\scriptsize $\pm$8.3\phantom{0}} & 78.1 {\scriptsize $\pm$13.6} \\
    & Naive weighted \phantom{un}\scalebox{0.75}{($G = Q^{\pi_e}$)} & 0.097 {\scriptsize $\pm$0.006} & 300.8 {\scriptsize $\pm$117.6\!\!\!} & 0.420 {\scriptsize $\pm$0.097} & 64.3 {\scriptsize $\pm$4.7} & 24.0 {\scriptsize $\pm$12.7} & 44.3 {\scriptsize $\pm$11.4} \\
    \cmidrule{1-8}
    \multirow{3.5}{*}{\rotatebox[origin=c]{90}{\small Proposed}} 
    & C*-PDIS \scalebox{0.75}{($G = Q^{\pi_e}$)} \hspace{2.5em} & \hlc[yellow!40]{\textbf{0.013}} {\scriptsize $\pm$0.005} & \hlc[yellow!40]{\textbf{994.0}} {\scriptsize $\pm$10.1} & \hlc[yellow!40]{\textbf{0.995}} {\scriptsize $\pm$0.003} & \hlc[yellow!40]{\textbf{95.7}} {\scriptsize $\pm$3.1} & \phantom{0}\hlc[yellow!40]{4.5} {\scriptsize $\pm$6.9\phantom{0}} & \phantom{0}\hlc[yellow!40]{\textbf{4.2}} {\scriptsize $\pm$5.3\phantom{0}} & \rdelim\}{1}{0mm}[\ \raisebox{1pt}{\scalebox{0.75}{$\bigstar$}} \footnotesize ideal case] \\
    \cmidrule{2-8}
    & C*-PDIS \scalebox{0.75}{($G = Q^{\pi_b}$)} & \hlc[yellow!40]{0.070} {\scriptsize $\pm$0.003} & \hlc[yellow!40]{\textbf{994.0}} {\scriptsize $\pm$10.1} & \hlc[yellow!40]{0.961} {\scriptsize $\pm$0.011} & \hlc[yellow!40]{86.8} {\scriptsize $\pm$8.2} & 22.0 {\scriptsize $\pm$20.1} & \phantom{0}\hlc[yellow!40]{8.2} {\scriptsize $\pm$11.3} & \rdelim\}{2}{0mm}[\footnotesize \makecell{relaxing \\{\crefname{assumption}{Assump.}{Assump.} \cref{asm:perfect-annot-rl}}}] \\
    & C*-PDIS \scalebox{0.75}{($G = Q^{\pi_b} \mapsto \hat{Q}^{\pi_e}$)} & \hlc[yellow!40]{0.028} {\scriptsize $\pm$0.007} & \hlc[yellow!40]{\textbf{994.0}} {\scriptsize $\pm$10.1} & \hlc[yellow!40]{0.979} {\scriptsize $\pm$0.010} & \hlc[yellow!40]{90.1} {\scriptsize $\pm$5.4} & \phantom{0}\hlc[yellow!40]{\textbf{4.2}} {\scriptsize $\pm$6.6\phantom{0}} & \hlc[yellow!40]{14.1} {\scriptsize $\pm$9.7\phantom{0}} \\
    \cmidrule[\heavyrulewidth]{1-8}
\end{tabular}}

\scalebox{0.8}{
\begin{tabular}{l|l|ccccccl}
    \cmidrule[\heavyrulewidth]{1-8}
    \multicolumn{2}{c|}{\textbf{Estimator}} & $\downarrow$ \textbf{RMSE} & $\uparrow$ \textbf{ESS} & $\uparrow$ \textbf{Spearman} & $\uparrow$ \textbf{\%Accuracy} & $\downarrow$ \textbf{\%FPR} & $\downarrow$ \textbf{\%FNR} \\
    \cmidrule{1-8}
    \multirow{8}{*}{\rotatebox[origin=c]{90}{\small Breakdown OIS vs WIS}} 
    & PDOIS (w/o annot.) & 0.079 {\scriptsize $\pm$0.054} & \phantom{0}76.8 {\scriptsize $\pm$44.0} & 0.868 {\scriptsize $\pm$0.087} & 79.8 {\scriptsize $\pm$5.3} & 6.4 {\scriptsize $\pm$7.3\phantom{0}} & 30.3 {\scriptsize $\pm$8.8\phantom{0}} \\
    & PDWIS (w/o annot.) & 0.136 {\scriptsize $\pm$0.033} & \phantom{0}76.8 {\scriptsize $\pm$44.0} & 0.523 {\scriptsize $\pm$0.178} & 73.2 {\scriptsize $\pm$5.1} & 61.1 {\scriptsize $\pm$13.9} & 1.6 {\scriptsize $\pm$3.4\phantom{0}} \\
    & C*-PDOIS \scalebox{0.75}{($G = Q^{\pi_e}$)} \hspace{2.5em} & \hlc[yellow!0]{\textbf{0.013}} {\scriptsize $\pm$0.005} & \hlc[yellow!0]{\textbf{994.0}} {\scriptsize $\pm$10.1} & \hlc[yellow!0]{\textbf{0.995}} {\scriptsize $\pm$0.003} & \hlc[yellow!0]{95.6} {\scriptsize $\pm$3.2} & \phantom{0}\hlc[yellow!0]{4.5} {\scriptsize $\pm$6.9\phantom{0}} & \phantom{0}\hlc[yellow!0]{4.3} {\scriptsize $\pm$5.5\phantom{0}} & 
    \\
    & C*-PDWIS \scalebox{0.75}{($G = Q^{\pi_e}$)} \hspace{2.5em} & \hlc[yellow!0]{\textbf{0.013}} {\scriptsize $\pm$0.005} & \hlc[yellow!0]{\textbf{994.0}} {\scriptsize $\pm$10.1} & \hlc[yellow!0]{\textbf{0.995}} {\scriptsize $\pm$0.003} & \hlc[yellow!0]{\textbf{95.7}} {\scriptsize $\pm$3.0} & \phantom{0}\hlc[yellow!0]{4.5} {\scriptsize $\pm$6.9\phantom{0}} & \phantom{0}\hlc[yellow!0]{\textbf{4.1}} {\scriptsize $\pm$5.1\phantom{0}} & 
    \\
    & C*-PDOIS \scalebox{0.75}{($G = Q^{\pi_b}$)} & \hlc[yellow!0]{0.070} {\scriptsize $\pm$0.003} & \hlc[yellow!0]{\textbf{994.0}} {\scriptsize $\pm$10.1} & \hlc[yellow!0]{0.962} {\scriptsize $\pm$0.012} & \hlc[yellow!0]{86.7} {\scriptsize $\pm$8.3} & 20.2 {\scriptsize $\pm$20.1} & \phantom{0}\hlc[yellow!0]{8.3} {\scriptsize $\pm$11.4} & 
    \\
    & C*-PDWIS \scalebox{0.75}{($G = Q^{\pi_b}$)} & \hlc[yellow!0]{0.070} {\scriptsize $\pm$0.003} & \hlc[yellow!0]{\textbf{994.0}} {\scriptsize $\pm$10.1} & \hlc[yellow!0]{0.961} {\scriptsize $\pm$0.012} & \hlc[yellow!0]{86.9} {\scriptsize $\pm$8.1} & 19.8 {\scriptsize $\pm$20.1} & \phantom{0}\hlc[yellow!0]{8.1} {\scriptsize $\pm$11.2} & 
    \\
    & C*-PDOIS \scalebox{0.75}{($G = Q^{\pi_b} \mapsto \hat{Q}^{\pi_e}$)} & \hlc[yellow!0]{0.028} {\scriptsize $\pm$0.007} & \hlc[yellow!0]{\textbf{994.0}} {\scriptsize $\pm$10.1} & \hlc[yellow!0]{0.979} {\scriptsize $\pm$0.010} & \hlc[yellow!0]{90.1} {\scriptsize $\pm$5.4} & \phantom{0}\hlc[yellow!0]{\textbf{4.2}} {\scriptsize $\pm$6.6\phantom{0}} & \hlc[yellow!0]{14.1} {\scriptsize $\pm$9.7\phantom{0}} \\
    & C*-PDWIS \scalebox{0.75}{($G = Q^{\pi_b} \mapsto \hat{Q}^{\pi_e}$)} & \hlc[yellow!0]{0.028} {\scriptsize $\pm$0.007} & \hlc[yellow!0]{\textbf{994.0}} {\scriptsize $\pm$10.1} & \hlc[yellow!0]{0.979} {\scriptsize $\pm$0.010} & \hlc[yellow!0]{90.1} {\scriptsize $\pm$5.4} & \phantom{0}\hlc[yellow!0]{\textbf{4.2}} {\scriptsize $\pm$6.6\phantom{0}} & \hlc[yellow!0]{14.1} {\scriptsize $\pm$9.7\phantom{0}} \\
    \cmidrule[\heavyrulewidth]{1-8}
\end{tabular}}

\end{table}

\begin{figure}[h]
    \centering
    \includegraphics[scale=0.75]{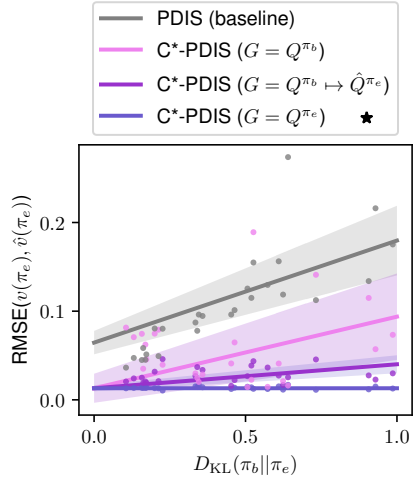}
    \caption{RMSE of C*-PDIS vs. distance to $\pi_b$ (in terms of KL divergence) for each $\pi_e$, plotted with linear trend lines. OPE error increases as $\pi_e$ becomes more different from behavior. }
    \label{appx-fig:sepsisSim-KL}
\end{figure}

\newpage

\textbf{Variance reduction of C*-PDIS outweighs the effect of noisy annotations.} To understand the robustness of our estimator to annotation noise, we perturbed the annotations with varying amounts of noise. Specifically, each annotation is added with a noise value drawn from zero-mean Gaussian distributions with a pre-specified standard deviation which we vary. As the level of annotation noise increases (\cref{appx-fig:sepsisSim-noisy}-left), performance degradation is minimal even at the highest level of noise tested (with a std of 1, which is large relative to the reward range $[-1,1]$, and most notably larger than $0.31$, the std of initial state values of this domain). Our estimator remains competitive relative to the baseline PDIS, suggesting that the benefit of variance reduction from additional data (through counterfactual annotations) outweighs the variance increase from annotation noise, even when annotations are much noisier than factual data. The same trend holds when only $10\%$ of the counterfactual annotations are collected (\cref{appx-fig:sepsisSim-noisy}-right).

\begin{figure}[h]
    \centering
    \def\arraystretch{0.5}
    \setlength{\tabcolsep}{10pt}
    \begin{tabular}{cc}
    \multicolumn{2}{c}{\hspace{2em}\includegraphics[scale=0.75,trim=0 10 0 0]{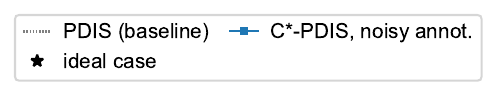}} \\
    \includegraphics[scale=0.75]{fig/annotation_noisy.pdf} & \includegraphics[scale=0.75,trim=0 0 0 40,clip]{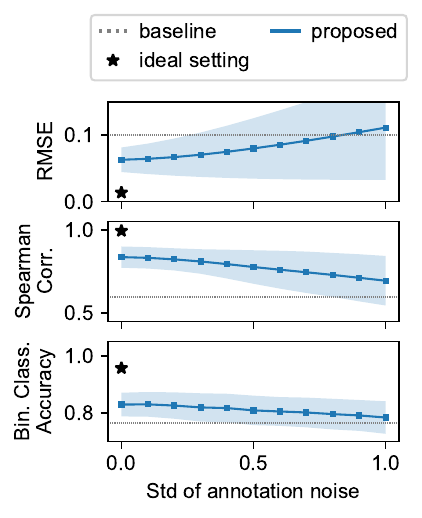}
    \end{tabular}
    \caption{Performance of our proposed C*-PDIS estimator is generally robust to noisy annotations. Trend lines show average of 50 runs $\pm$ one std. When all annotations are available (left), the performance degradation of C*-PDIS is minimal even at the highest level of noise tested. When only $10\%$ of the counterfactuals are annotated (right), performance degradation is more noticeable and eventually becomes worse than baseline PDIS without annotations. }
    \label{appx-fig:sepsisSim-noisy}
\end{figure}

\newpage
\textbf{Collecting more annotations and imputing missing annotations improves performance.} As the amount of available annotations increases (\cref{appx-fig:sepsisSim-missing}), our approach interpolates between baseline PDIS and the ideal case of C*-PDIS with an monotonic improvement in performance. Furthermore, imputing annotations (as described in \cref{appx:practical-missing}) achieves better performance, suggesting it is a promising strategy to handle missing annotations when not all annotations can be obtained in practice. The same trend holds under different amounts of annotation noise (\cref{appx-fig:sepsisSim-missing} left vs right). 

\begin{figure}[h]
    \centering
    \def\arraystretch{0.5}
    \setlength{\tabcolsep}{10pt}
    \begin{tabular}{cc}
    \multicolumn{2}{c}{\hspace{2em}\includegraphics[scale=0.75,trim=0 10 0 0]{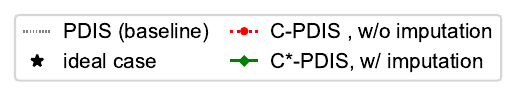}} \\
    \includegraphics[scale=0.75]{fig/annotation_missing.pdf} & \includegraphics[scale=0.75,clip]{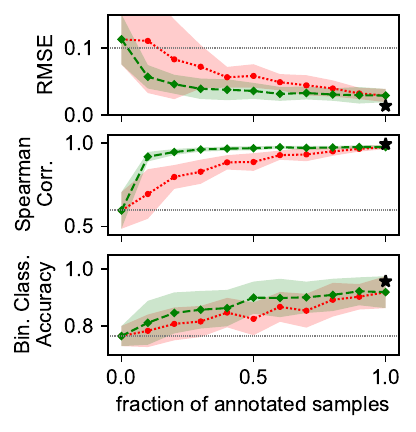}
    \end{tabular}
    \caption{Performance of our proposed C*-PDIS estimator is robust to missing annotations, especially when the missing annotations are imputed. Trend lines show average of 50 runs $\pm$ one std. As the fraction of annotated samples increases, performance interpolates between baseline PDIS and the ideal case where all counterfactual annotations are available. The imputed version outperforms the unimputed version and maintains a competitive performance (relative to the ideal case) even in the presence of high degrees of missingness. The same general trend holds across the two settings with different amounts of annotation noise (left: std of 0.2, right: std of 1.0), though the imputed annotations have a larger bias when annotations are noisier, leading to slightly worse performance even when all annotations are available. }
    \label{appx-fig:sepsisSim-missing}
\end{figure}

\end{document}